\newtheorem{thm}{Theorem}[section]
\newtheorem{prop}[thm]{Proposition}
\newtheorem{cor}[thm]{Corollary}
\newtheorem{lem}[thm]{Lemma}
\newtheorem{defn}[thm]{Definition}
\newenvironment{example}[1][Example]{\begin{trivlist}
  \item[\hskip \labelsep {\bfseries #1}]}{ $\lozenge$ \end{trivlist}}
\DeclareMathOperator{\Diff}{Diff}
\DeclareMathOperator{\GL}{GL}
\DeclareMathOperator{\ad}{ad}
\DeclareMathOperator{\Fr}{Fr}
\DeclareMathOperator{\evol}{evol}
\newcommand{\pder}[2]{\ensuremath{\frac{ \partial #1}{\partial #2}}}
\def \Jet {\mathcal{J}}
\newcommand{\R}{\ensuremath{\mathbb{R}}}
\begin{document}

\begin{frontmatter}

\title{Higher-order Spatial Accuracy in Diffeomorphic Image Registration}
\runtitle{Higher-order Spatial Accuracy in Diff. Image Reg.}


\author{\fnms{Henry O.} \snm{Jacobs}\ead[label=e1]{h.jacobs@imperial.ac.uk}}
\address{Department of Mathematics\\ Imperial College\\ London SW7 2AZ, UK\\ \printead{e1}}
\and
\author{\fnms{Stefan} \snm{Sommer}\ead[label=e2]{sommer@di.ku.dk}}
\address{Department of Computer Science\\ University of Copenhagen\\
  Universitetsparken 1, DK-2100\\ Copenhagen E, Denmark\\ \printead{e2}}

\runauthor{Henry O. Jacobs and Stefan Sommer}

\begin{abstract}
  We discretize a cost functional for image registration
  problems by deriving Taylor expansions for the matching term. Minima of the
  discretized cost functionals can be computed with no spatial discretization error,
  and the optimal solutions are equivalent to minimal energy curves in the
  space of $k$-jets. We show that the solutions convergence to 
  optimal solutions of the original cost functional as the number of particles
  increases with a convergence rate of $O(h^{d+k})$
  where $h$ is a resolution parameter. The effect of this approach over traditional
  particle methods is illustrated on synthetic examples and real images.
\end{abstract}

\end{frontmatter}

\section{Introduction}
The goal of image registration is to place differing images
of the same object (e.g. MRI scans)
into a shared coordinate system so that they may be compared.
One common means of doing this is to deform one image until
it matches the other.
Typical numerical schemes for implementing this task are
particle methods, where particles are used as a finite
dimensional representation of a diffeomorphism.
If the particles are initialized on a regular grid of resolution $h$,
then the solutions can be $O(h^d)$ accurate at best where $d$ is the
dimension of the image domain.
Improving this order of accuracy is non-trivial because traditional
higher-order numerical schemes are designed on fixed meshes
(e.g. higher order finite differences).

In this paper, we seek to improve this order of accuracy by considering
a more sophisticated class of particles.  We will find that by equipping
the particles with jet-data, one can achieve registrations with
higher orders of accuracy.
One impact of the use of higher-order particles is that the improved
accuracy per particle permits the use of fewer particles for a desired total accuracy.
For sufficiently smooth initial data, this implies the storage requirements are improved as well.

\subsection{Organization of the paper}
We will introduce the higher-order accurate image registration framework through
the following steps:
\begin{enumerate}
	\item We will introduce the hierarchy of jet-particles.
	\item We will pose an image registration problem as an optimal control problem on an infinite dimensional space.
        \item We will pose a sequence of deformed problems which are easier to solve.
        \item We will reduce the deformed optimization problems to optimization
          problems involving computation of finite dimensional ODEs (i.e. an infinite dimensional reduction).
	\item We will find necessary conditions for sequences of computed solutions to the deformed problems to converge to the solution of the original problem
      at a rate $O(h^{d+k})$, where $k \geq 0$ depends on the order of the jet-particles used.
\end{enumerate}
Finally, we will display the results of numerical experiments comparing the use of zeroth, first, and second order jet-particles.

\section{Previous work}
In this section, we attempt a brief overview of the large deformation diffeomorphic metric mapping (LDDMM)
framework from its origins in the 1990s, to its recent marriage with geometric mechanics (2000s-present).
\subsection{Matching with LDDMM}
The notion of seeking deformations for the sake of image registration goes back
a long way, see \cite{sotiras_deformable_2013,Younes2010} and references therein.
One of the first attempts was to consider diffeomorphisms of the form $\varphi(x) = x + f(x)$ for some map $f:\R^d \to \R^d$. The map $f$ is often denoted a displacement field.
When $f$ is ``small'', $\varphi$ is a diffeomorphism, but this can fail when $f$ is ``large'' \cite[Chapter 7]{Younes2010}. Many algorithms outside the LDDMM context employ a small deformation approach with a displacement field $f$. The displacement can be represented for example with B-spline basis functions. Particle methods, as considered in this paper, can loosely be interpreted as large deformation equivalents to representing the displacement $f$ with finite linear combinations of basis functions. In particular, the kernel $K$ can be thought of as taking the role of e.g. B-spline basis functions in small deformation approaches.

The breakdown for large $f$ is a result of the fact that the space of diffeomorphisms is a nonlinear space.
One of the early obstacles in diffeomorphic image registration entailed dealing with this nonlinearity.
A key insight in getting a handle on the nonlinearity of the diffeomorphisms was to consider the linear space of vector fields.
Given a time-dependent vector field $v(t)$, one can integrate it to obtain a diffeomorphism $\varphi_t$, which is called the \emph{flow of $v$} \cite{christensen_deformable_1996}.
This insight was used to obtain diffeomorphisms for imaging applications by posing an optimal control
problem on the space of vector-fields, and then integrating the flow of the optimal vector field to obtain a diffeomorphism.
The well-posedness of this approach was studied in \cite{trouve_infinite_1995,DupuisGrenanderMiller1998},
 where the cost functional (i.e. the norm) was identified as a fundamental choice in ensuring well-posedness
 and controlling properties of the resulting diffeomorphisms.
 A particle method based upon \cite{DupuisGrenanderMiller1998} was implemented for the purpose of medical imaging in \cite{JoshiMiller2000}.
 The completeness of the Euler-Lagrange equations in \cite{DupuisGrenanderMiller1998} was studied thoroughly in \cite{TrouveYounes2005},
 where the image data was allowed to be of a fairly general type (i.e. any entity upon which diffeomorphisms act smoothly).
 The analytic safe-guards provided by \cite{DupuisGrenanderMiller1998} and \cite{TrouveYounes2005} where then excercised in \cite{Beg2005},
 where a number of examples were numerically investigated.

\subsection{Connections with geometric mechanics}
 Following these early investigations, connections with geometric mechanics began to form.
 The cost functional chosen in \cite{JoshiMiller2000} was the $H^1$-norm of the vector-fields.
  Coincidentally, this is the cost functional of the $n$-dimensional Camassa-Holm equation (see \cite{HolmMarsden2005} and references therein).
  In 1-dimension, the particle solutions in \cite{JoshiMiller2000} are identical to the peakon solutions discovered in \cite{CamassaHolm1993},
  and the numerical scheme reduces to that of \cite{HoldenRaynaud2006}.
  The convergence of \cite{JoshiMiller2000} was proven using geometric techniques in \cite{HoldenRaynaud2006} in the one-dimensional case.
  The same proof was used in \cite{ChertockDuToitMarsden2012} for arbitrary dimensions.
  As images appear as advected quantities, the use of momentum maps became a useful conceptual technique for geometers 
  to understand the numerical scheme of \cite{Beg2005}.
  The identification of numerous mathematical terms in \cite{Beg2005} as momentum maps was performed in \cite{BruverisHolmRatiu2011}.
  
  \subsection{Jet particles}
  \label{sec:jetparticles}
  The particle method implemented in \cite{JoshiMiller2000} allowed only for deformations
  that acted as ``local translations'' (see Figure \ref{fig:shots}(a)).
  Motivated by a desire to create more general deformations
  \cite{Sommer2013} introduced a hierarchy of particles which advect jet-data.
  We call the particles \emph{jet-particles} in this paper.
  The first order jet-particles modify the Jacobian matrix at the particle locations
  and allow for ``locally linear'' transformations such as local scalings and local rotations
  (see Figure \ref{fig:shots}(b-e)).
  Second order jet-particles allow for deformations which are ``locally quadratic'' (i.e. transformations 
  with nontrivial Hessians (see Figure \ref{fig:shots}(f-h)).
  The geometric and hierarchal structure of \cite{Sommer2013} was investigated in \cite{Jacobs_MFCA_2013}
  where the Lie groupoid structure of jet-particles was linked to the Lie group structure
  of the diffeomorphism group, thus making the case for jet-particles as multi-scale representations
  of diffeomorphisms.
  Independently, an incompressible version of this idea was invented for the purpose of
  incompressible fluid modelling in \cite{JacobsRatiuDesbrun2013}.
  Solutions to this fluid model were numerically computed in \cite{CotterHolmJacobsMeier2014}
  based upon the regularized Euler fluid equations developed in \cite{MumfordMichor2013}
  and expressions for matrix-valued reproducing kernels derived in \cite{MicheliGlaunes2014}.
  The final section of \cite{CotterHolmJacobsMeier2014} provides formulas 
  which illustrate how jet-particles
  in the $k$th level of the hierarchy yield deformations which are
  approximated by particles in the $(k-1)$th level of the hierarchy.
  The approximation being accurate to an order $O(h^{k})$ where $h > 0$ is
  some measure of particle spacing.
  This approximation is more or less equivalent
  to the approximation of a partial differential operator by a finite differences,
  and it will serve as one of the main tools used in this paper in producing
  higher-order accurate numerical schemes.
  
\begin{figure}[t]
  \begin{center}
      \subfigure[translation]{
        \includegraphics[width=.20\columnwidth,trim=80 50 80 50,clip]{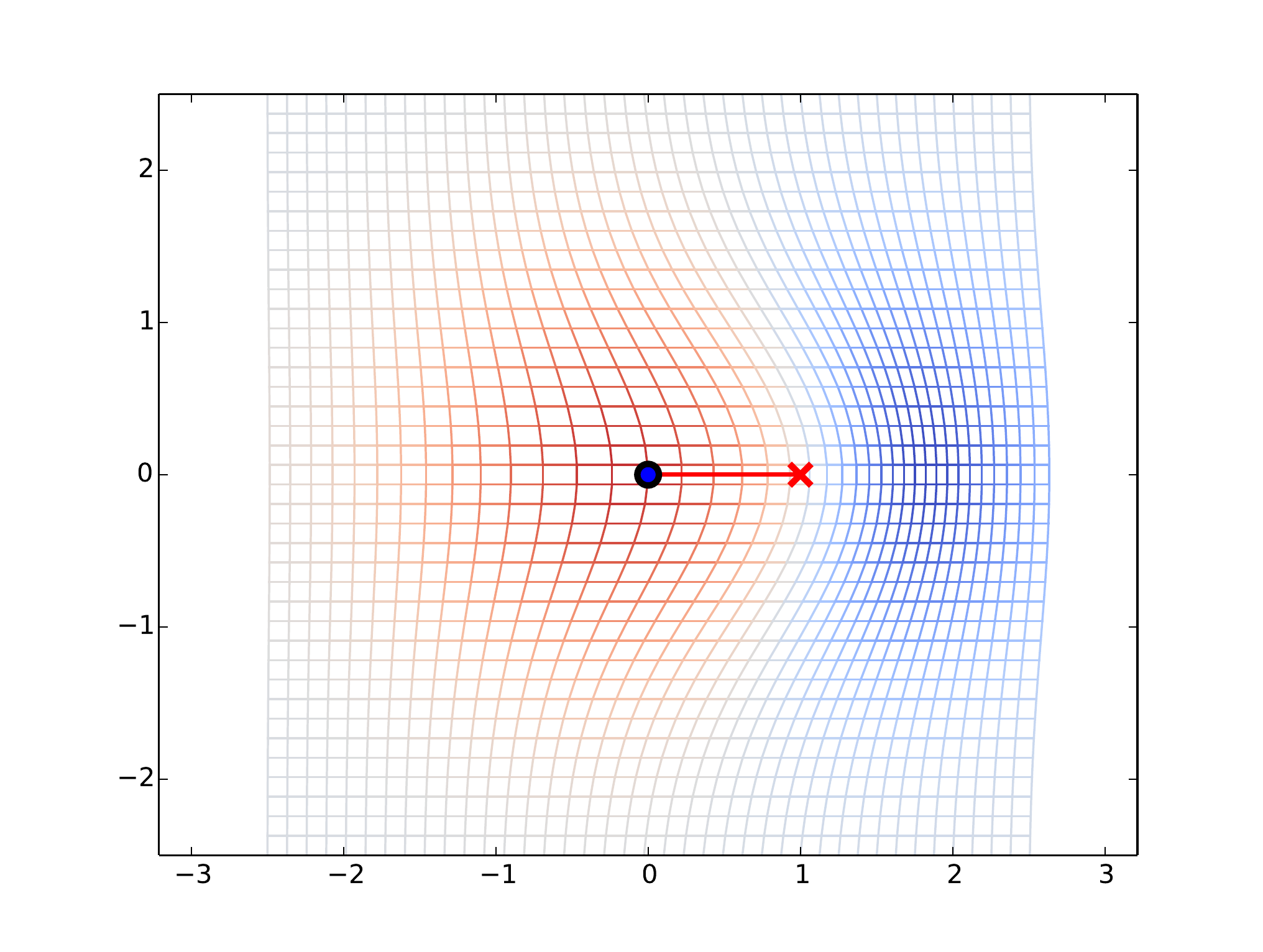}
      }
      \subfigure[expansion]{
        \includegraphics[width=.20\columnwidth,trim=80 50 80 50,clip]{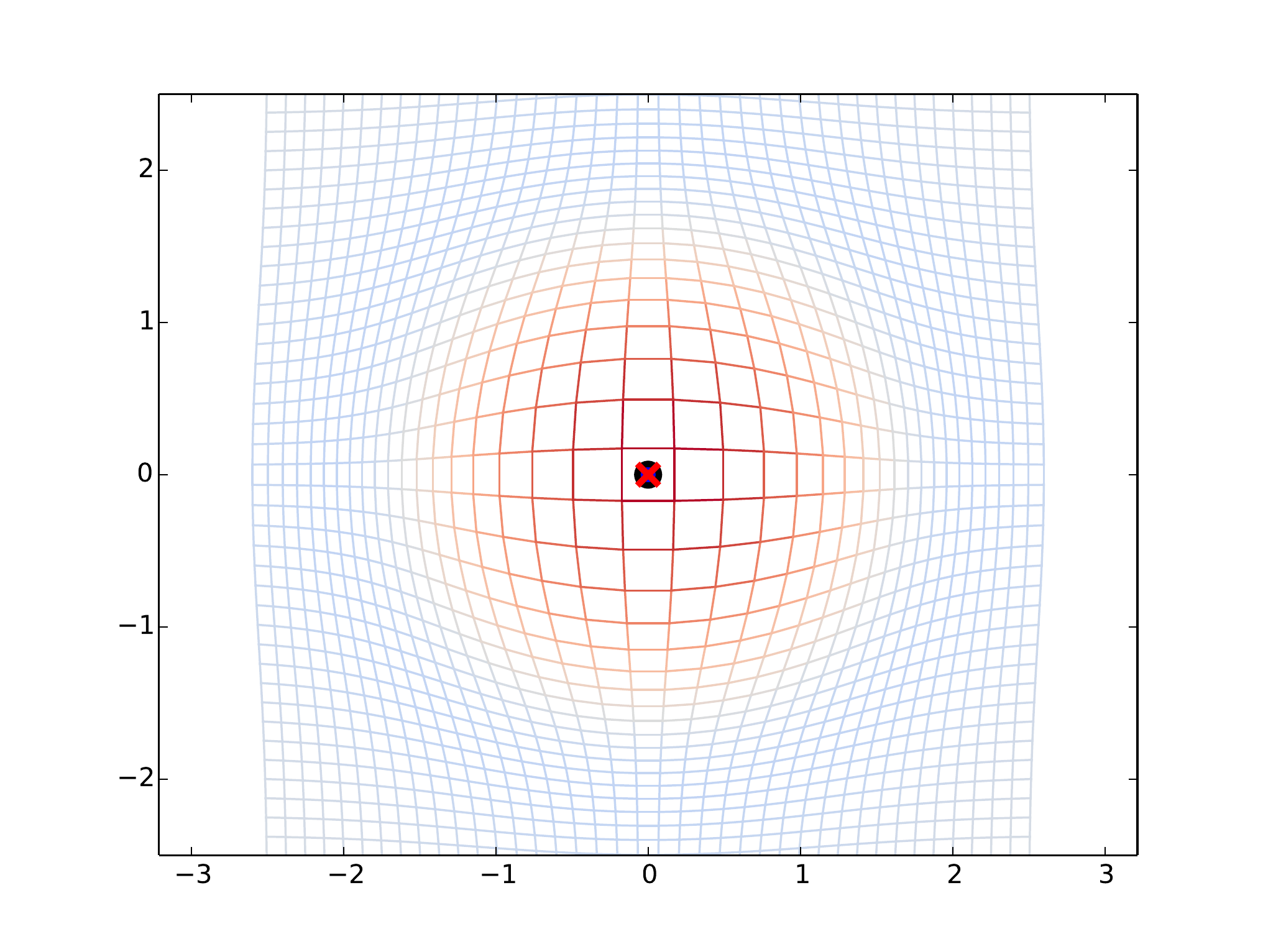}
      }
      \subfigure[rotation]{
        \includegraphics[width=.20\columnwidth,trim=80 50 80 50,clip]{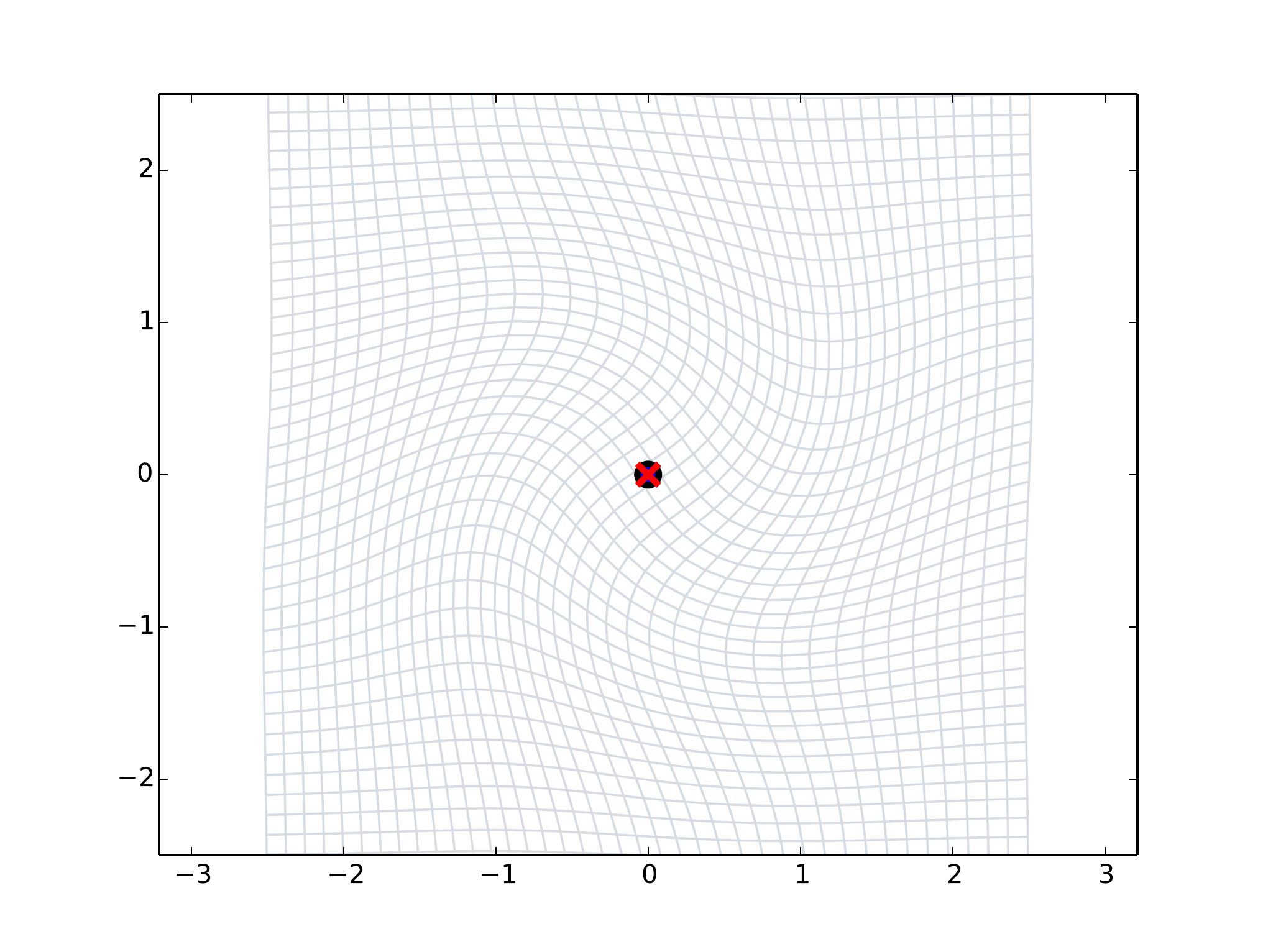}
      }
      \subfigure[stretch]{
        \includegraphics[width=.20\columnwidth,trim=80 50 80 50,clip]{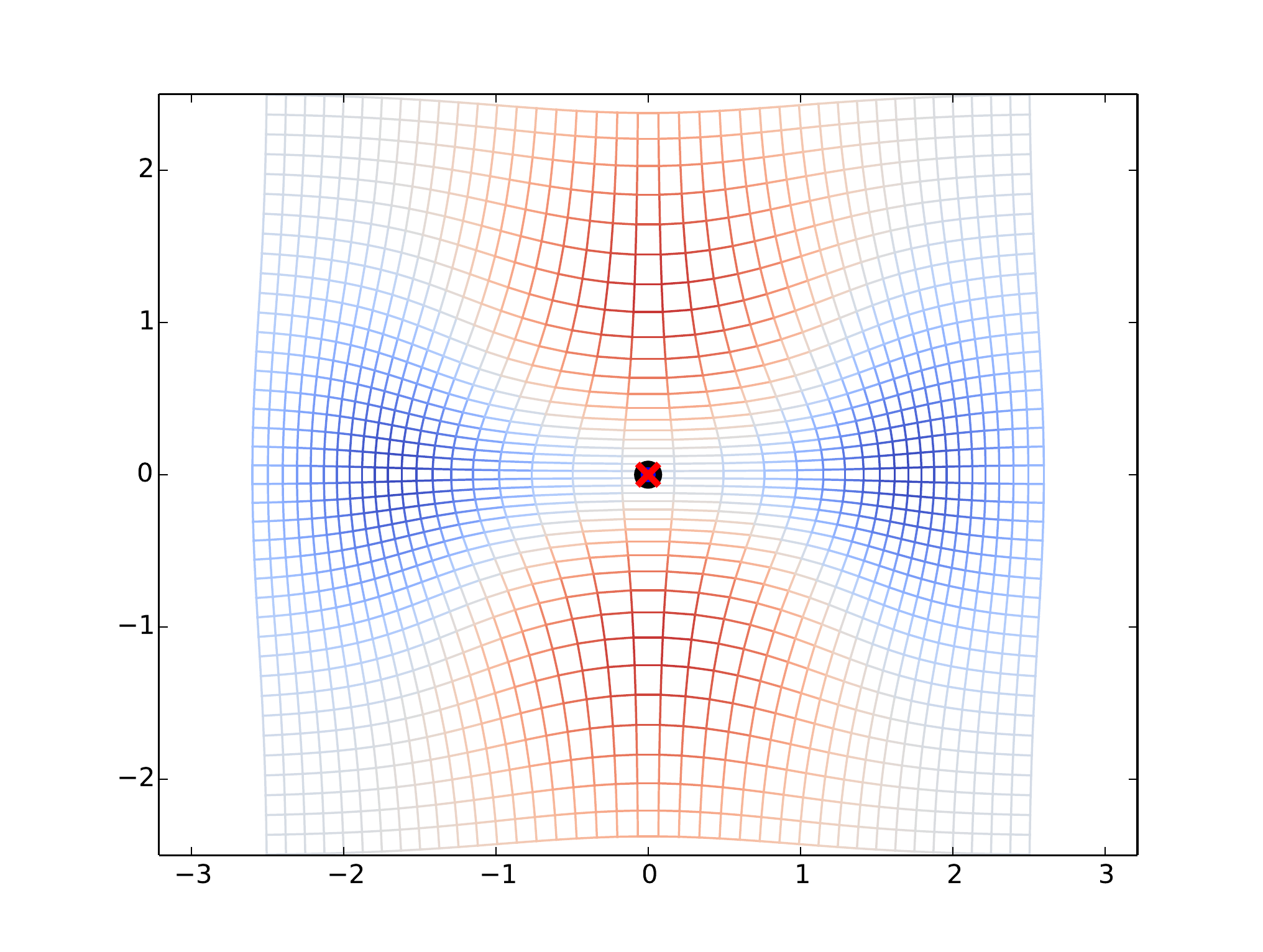}
      }
      \newline
      \subfigure[shear]{
        \includegraphics[width=.20\columnwidth,trim=80 50 80 50,clip]{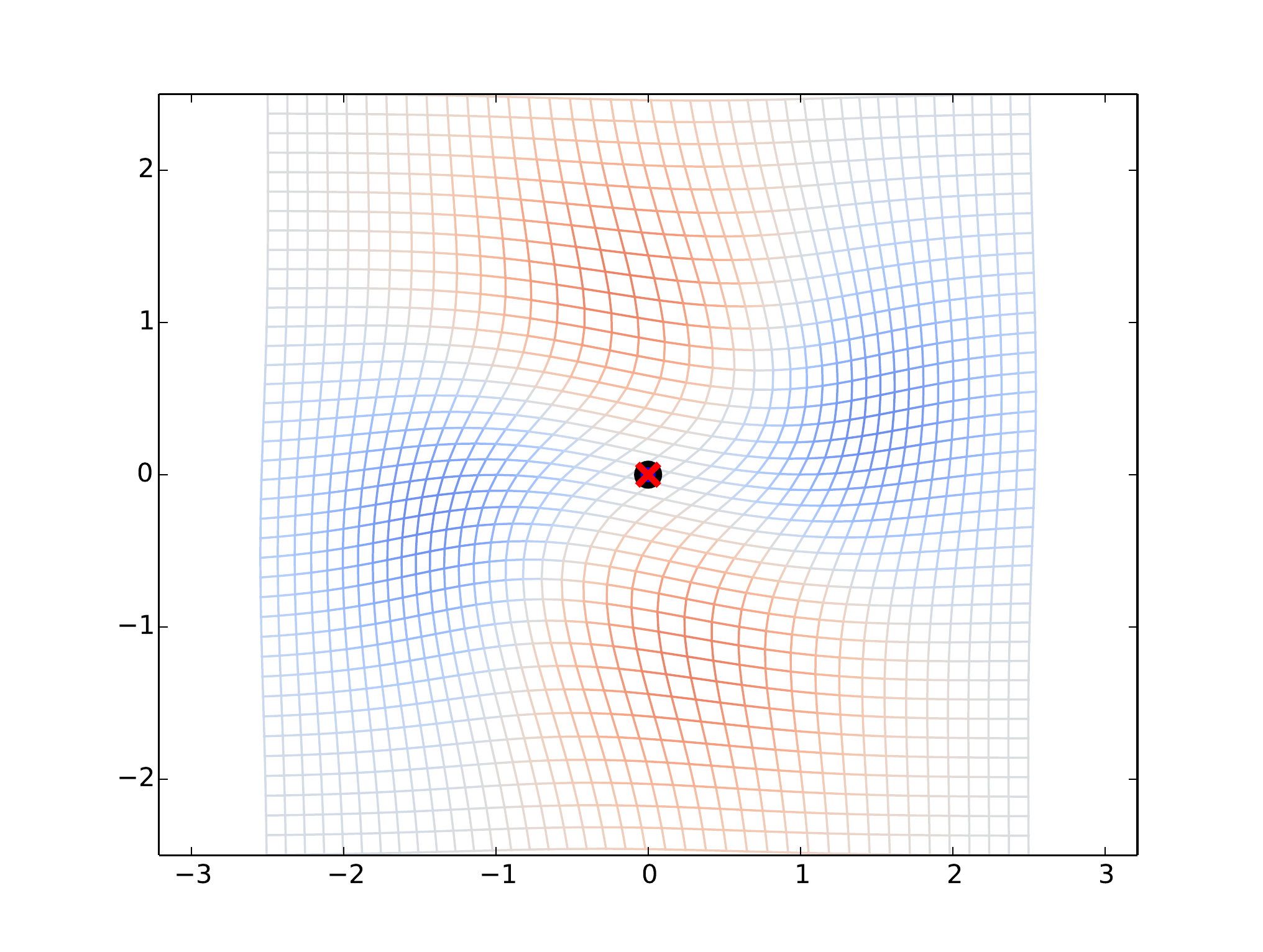}
      }
      \subfigure[2nd order]{
        \includegraphics[width=.20\columnwidth,trim=80 50 80 50,clip]{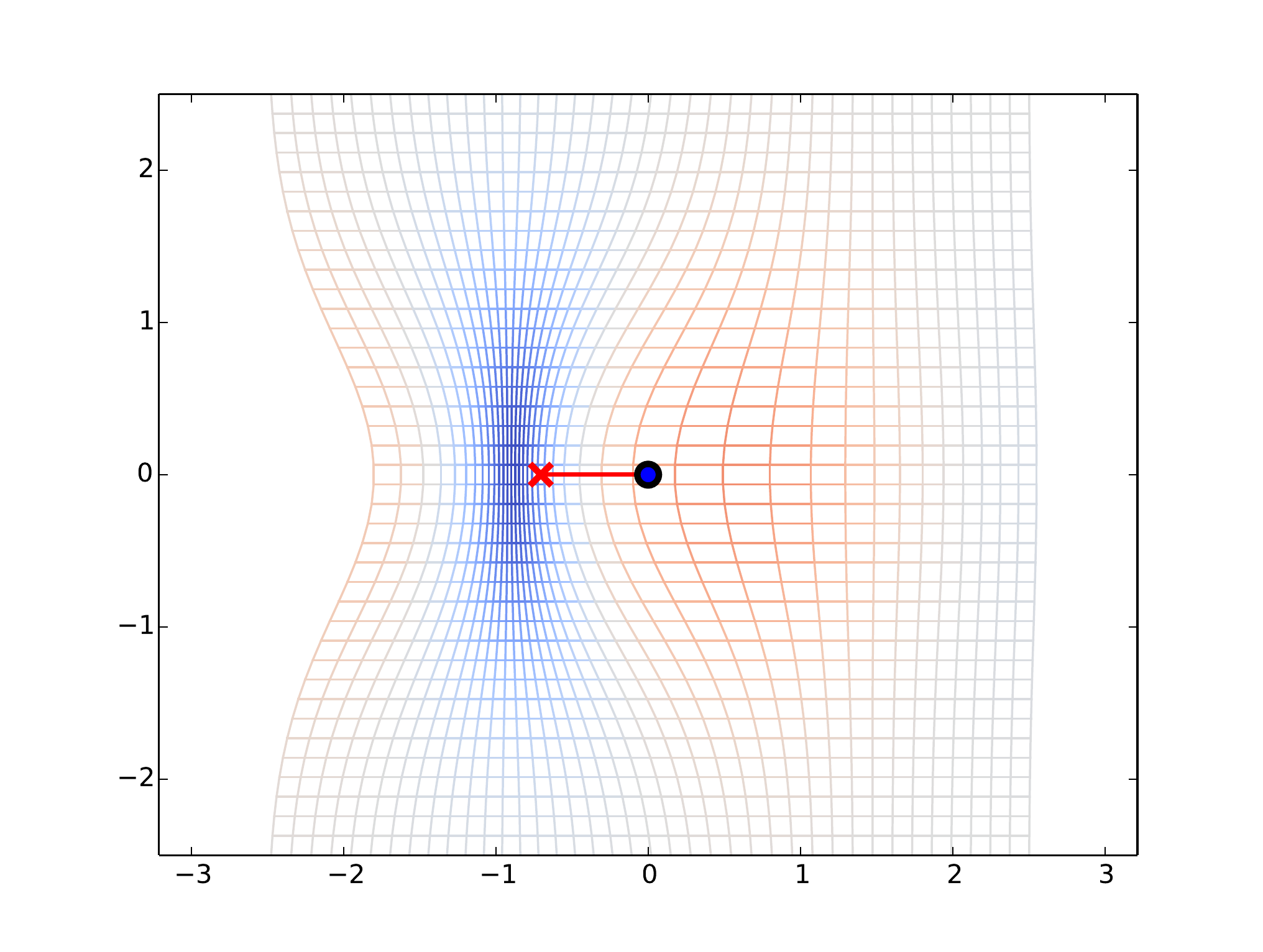}
      }
      \subfigure[2nd order]{
        \includegraphics[width=.20\columnwidth,trim=80 50 80 50,clip]{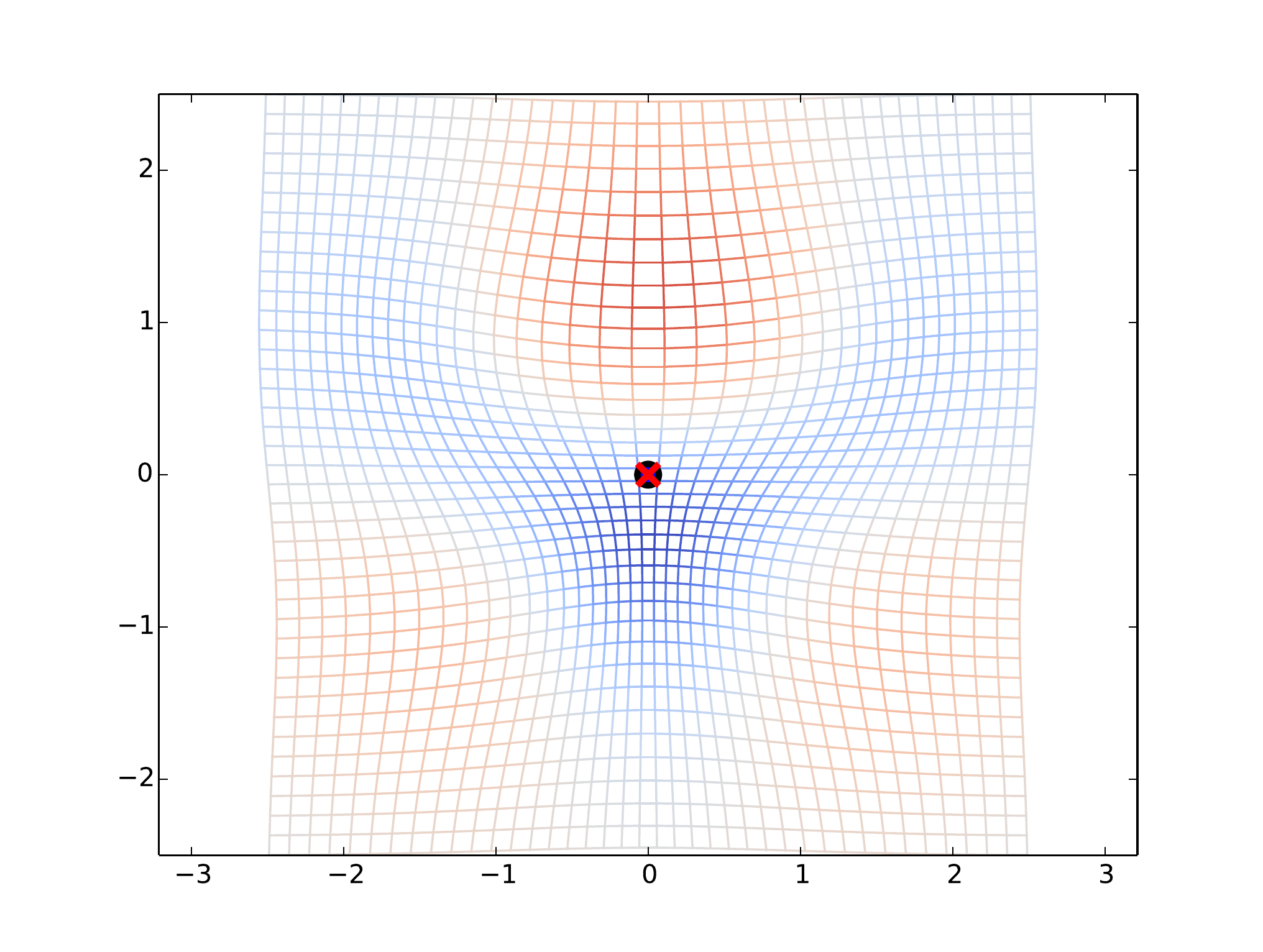}
      }
      \subfigure[2nd order]{
        \includegraphics[width=.20\columnwidth,trim=80 50 80 50,clip]{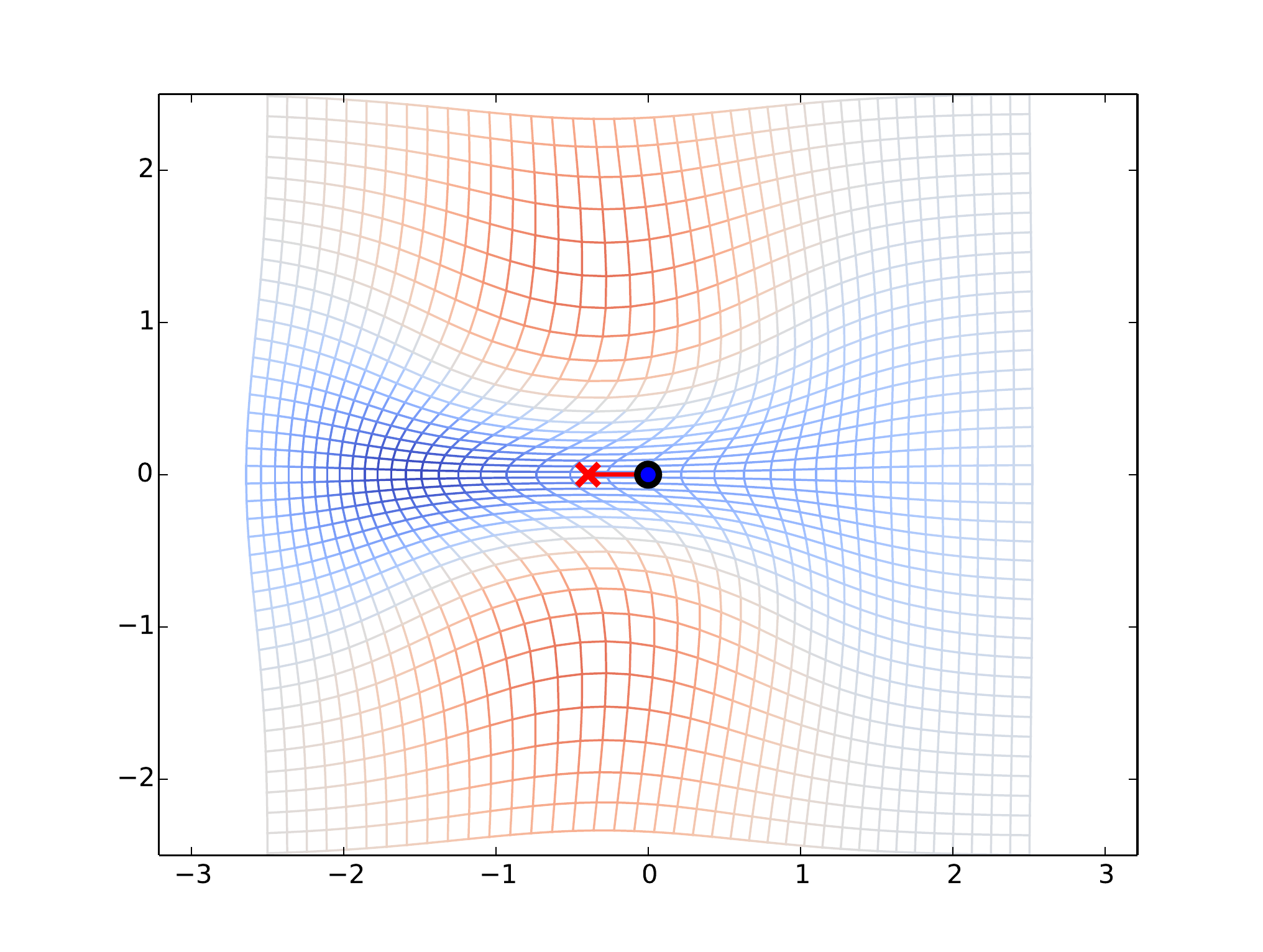}
      }
  \end{center}
  \caption{Deformations of initially square grids. (a) zeroth
    order, (b-e) first order, (f-h) second order. A single jet-particle is located at the
    blue dots before moving with the flows to the red crosses. Grids are colored by
    log-Jacobian determinant.
    }

  \label{fig:shots}
\end{figure}

\section{LDDMM}
Let $M$ be a manifold and let $V \subset \mathfrak{X}( M)$ be a subspace of the vector-fields
on $M$ equipped with an inner-product $\langle \cdot , \cdot \rangle_V : V \times V \to \mathbb{R}$.
Let $G_V \subset \Diff( \mathbb{R}^n)$ be the corresponding topological Lie
group to which $V$ integrates \cite[Chapter 8]{Younes2010}.
To do image registration, we try to assemble a ``small'' diffeomorphism by minimizing a cost function on
the space of curve in $G_V$.
The standard cost function takes a time-dependent diffeomorphism, $\varphi_t$, and
outputs a real number.
Mathematically, the cost function is often taken to be a map $E_{G_V}: C^1( [0,1] : G_V) \to \mathbb{R}$ given
by
\begin{align*}
  E_{G_V}[ \varphi( \cdot ) ] := \frac{1}{2}\int_{0}^{1} \ell(v(t)) dt + F( \varphi_1) ,
\end{align*}
where $v(t) \in V$ is the Eulerian velocity field $v(t,x) = \partial_t \varphi_t( \varphi_t^{-1}(x))$
and $\ell$ is a ``control-cost''.
Explicitly, $\varphi_t \in G_V$ is obtained from $v(t) \in V$ via the initial value problem
\begin{align}
  \begin{cases}
    \frac{d}{dt} \varphi_t = v(t) \circ \varphi_t \\
    \varphi_0 = id
  \end{cases}. \label{eq:flow}
\end{align}
One then obtains extremizers of $E_{G_V}$ by solving the Euler-Lagrange equations on $G_V$.
However, $G_V$ is a non-commutative group, and can be very difficult to work with.
It is typical to express $E_{G_V}$ as a cost function on the vector-space $V$
and incorporate \eqref{eq:flow} as a constraint.
This means optimizing a cost function $E: C^1( [0,1] , V) \to \mathbb{R}$
with respect to constrained variations.
Any extremizer, $v(\cdot)$, of $E$ must necessarily satisfy a symmetry reduced form of the Euler-Lagrange equations,
known as the Euler-Poincar\'e equation.
In essence, the Euler-Poincar\'e equations are nothing but the Euler-Lagrange equations pulled to the space $V$.
For a generic $\ell$, the Euler-Poincar\'e equations take the form
\begin{align}
	\frac{d}{dt} \left( \frac{ \delta \ell}{\delta v} \right) + \ad^*_v \left( \frac{\delta \ell}{\delta v} \right) = 0. \label{eq:EP}
\end{align}
We suggest \cite{MandS} for further information on the Euler-Poincar\'e equations.
Equation \eqref{eq:EP} is an evolution equation, which allows us to search over the space of initial conditions 
(i.e. $V$) in place of optimizing over space of curves (i.e. $C^1([0,1];V)$).
Explicitly, this is done by considering the map $\evol_{EP}: V \to C^1([0,1];V)$
which sends each $v_0 \in V$ to the curve $v(\cdot) \in C^1([0,1];V)$ obtained by integrating \eqref{eq:EP} with initial condition $v_0$.
We can then pre-compose $E$ with $\evol_{EP}$ to produce the function
\begin{align}
	e := E \circ \evol_{EP} : V \to \mathbb{R}. \label{eq:vector_cost_function}
\end{align}
The initial condition $v^* \in V$ minimizes $e$ if and only the solution $v( \cdot) = \evol_{EP}(v^*)$ of \eqref{eq:EP}
minimizes $E$.
Generally, solutions to \eqref{eq:EP} are extremizers of $E$, and one must appeal to higher-order variations in order to obtain sufficient conditions
for an extremizer to be a minimizer.
However, we will not pursue these matters in this article.

\subsection{Overview of the problem and our solution}

Particle methods are typically used to approximate a diffeomorphism in the following way.
We usually compute all quantities with respect to an initial condition where all the particles
lie on a grid/mesh and prove convergence as the mesh width, $h$, tends to $0$.
However, it would be nice to have an order of accuracy as well.

\begin{center}
\emph{PROBLEM:  Can we solve for a minimizer of $E$ with a convergence rate of $O(h^p)$ for some $p \in \mathbb{N}$?}
\end{center}

Our strategy for tackling this problem is to approximate $E$ with a sequence of $O(h^p)$-accurate curve energies $E_h$ for which we can compute the minimizers exactly up to time discretization (i.e. the computed solutions have no spatial discretization error). More specifically, the mesh size will determine a continuous sequence of subgroups $G_h \subset G$. We will approximate the matching functional $F$, with a $G_h$-invariant functional $F_h:G_V \to \mathbb{R}$ such that for a fixed $\varphi \in G_V$
\begin{align*}
  F(\varphi) - F_h(\varphi) = O(h^p)
\end{align*}
for some $p \in \mathbb{N}$.
We find the curve energies to be $O(h^p)$ accurate as well,
and this accuracy will transfer to the solutions
for a sufficiently wide range of scenarios.

\section{Reduction theory}
In this section, we review subgroup reduction of a class of
optimization problems using Clebsch variables.
In the Hamiltonian context, Clebsch variables are also called symplectic
variables, and constitute a Poisson map $\psi: T^*\mathbb{R}^n \to P$.
This is useful when $2n < \dim(P)$, since solutions to certain Hamiltonian equations on $P$ can be derived by solving Hamiltonian equations on $T^*\mathbb{R}^n$ first \cite{MarsdenWeinstein1983,Weinstein1983}.
The Lagrangian version of this idea was further developed in the context of equations with hydrodynamic background in \cite{HolmMarsden2005}.
It is this later perspective which we shall take in this paper, since problem
setup is stated in Lagrangian form. A more thorough overview of the role of reduction by symmetry in LDDMM can be found in \cite{sommer_reduction_2015}.

Let $G$ be a Lie group and $G_s \subset G$
be a Lie subgroup with Lie algebras $\mathfrak{g}$ and $\mathfrak{g}_s$ respectively.
We will denote the homogenous space
 of right cosets by $Q = G / G_s$, and we will denote
 the corresponding principal bundle projection by
$\pi : G \to Q$.
Note that $G$ naturally acts on $Q$ through the formula $g \cdot \pi(\tilde{g}) = \pi( g \cdot \tilde{g})$.
Given this action, the corresponding (left) momentum map,
$J : T^*Q \to \mathfrak{g}^*$, is defined by the condition
\begin{align*}
  \langle J( q,p) , \xi \rangle = \langle p , \xi \cdot q \rangle\quad, \qquad \forall \xi \in \mathfrak{g}.
\end{align*}

Let $L : TG \to \mathbb{R}$ be the Lagrangian and let $F:G \to
\mathbb{R}$.
We wish to minimize the curve energy or ``action''
\begin{align}
  E[ g(\cdot) ] = \int_0^1{ L(g(t),\dot{g}(t)) dt} + F(g(1)) \label{eq:energy_1}
\end{align}
over the space of curves $g( t ) \in G$ on the interval $[0,1]$ with
$g(0) = id$.  That is to say $E : C^1_{id}( [0,1] ; G) \to \mathbb{R}$
where $C^1_{id}([0,1];G)$ denotes the space of $C^1$ curves in $G$
originating from the identity.
Extremization of $E$ means taking a variation in $C^1_{id}( [0,1] ; G)$, which
is a variation of a curve with a fixed end-point at
$t=0$ but \emph{not} at $t=1$.  It is simple to show that any
solution must satisfy the boundary value problem
\begin{align}
  \begin{cases}
    \frac{d}{dt} \left( \pder{L}{\dot{g}} \right) - \pder{L}{g} = 0 \\
    g(0) = id \quad , \quad \left. \pder{L}{\dot{g}} \right|_{t=1} + dF(
    g(1) ) = 0.
  \end{cases} \label{eq:EL}
\end{align}
If the dimension of $G$ is large, integrating this equation
can be troublesome.
However, in the presence of a $G_s$-symmetry a reduction can be applied
to reduce the problem to a boundary value problem on $Q$.

Throughout this section we will assume that $F$ is $G_s$ invariant.
As a result there exists a function $f: Q \to \mathbb{R}$ defined by the condition
\begin{align*}
	f( q) = F(g) \quad \forall q \in Q , g \in G \text{ such that } q = \pi(g).
\end{align*}
More succinctly, $f = F \circ \pi$.
We will also assume that $L(g,\dot{g})$ is $G$-invariant, and comes from a reduced Lagrangian
function $\ell: \mathfrak{g} \to \mathbb{R}$.
Finally, we will assume that the Legendre transformation,
$
  \frac{\delta \ell}{\delta \xi } : \mathfrak{g} \to \mathfrak{g}^*,
$
is invertible.
The reduced Hamiltonian $h : \mathfrak{g}^* \to
\mathbb{R}$ is then given by
\begin{align*}
  h( \mu ) = \left \langle \mu , \frac{\delta \ell}{\delta \xi}^{-1}(\mu)
  \right \rangle - \ell \left( \frac{ \delta \ell}{\delta \xi}^{-1}( \mu) \right).
\end{align*}

\begin{thm}[c.f. \cite{BruverisHolmRatiu2011}] \label{thm:reduction}
  Let $H := h \circ J : T^*Q \to \mathbb{R}$.
  If the curve $(q,p)(t) \in T^*Q$ satisfies
\begin{align}
  \begin{cases} \dot{q} = \pder{H}{p} \quad , \quad \dot{p} = - \pder{H}{q} \\
  q(0) = \pi( id) \quad , \quad p(1) + df(q(1)) = 0
  \end{cases} \label{eq:Hamilton}
\end{align}
  then the curve $g(t)$ obtained by integrating the initial value problem
    \begin{align*}
      \dot{g}(t) = \xi(t) \cdot g(t) \quad , \quad \xi = (\delta \ell/\delta \xi )^{-1}( J(q,p) ) \quad,\quad g(0) = id
    \end{align*}
    satisfies \eqref{eq:EL}. Moreover, all minimizers of \eqref{eq:energy_1} must
    be of this form.
\end{thm}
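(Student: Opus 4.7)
The plan is to assemble the statement from three standard pieces: Euler--Poincar\'e reduction on the Lagrangian side, Clebsch (collective Hamiltonian) reduction on the Hamiltonian side, and a careful translation of the free endpoint condition at $t=1$ through the momentum map $J$.

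\emph{Forward direction.} I would begin with \eqref{eq:EL}. Because $L$ is $G$-invariant and $\delta\ell/\delta\xi$ is invertible, Euler--Poincar\'e reduction (see \cite{MandS}) makes \eqref{eq:EL} equivalent to the Euler--Poincar\'e equation \eqref{eq:EP} for $\mu(t):=(\delta\ell/\delta\xi)(\xi(t))$, coupled to the reconstruction equation $\dot g = \xi\cdot g$ with $g(0)=\mathrm{id}$. Its Hamiltonian form is the Lie--Poisson equation for $h$ on $\mathfrak{g}^*$. Since $J$ is the momentum map of a cotangent--lifted action, it is a Poisson map, hence $H=h\circ J$ is a collective Hamiltonian: integral curves $(q,p)(t)$ of $X_H$ on $T^*Q$ project under $J$ to Lie--Poisson integral curves of $X_h$. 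Setting $\mu(t)=J(q(t),p(t))$ and $\xi(t)=(\delta\ell/\delta\xi)^{-1}(\mu(t))$, the reconstructed $g(t)$ then solves the ODE in \eqref{eq:EL}.

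\emph{Boundary conditions.} The initial condition $g(0)=\mathrm{id}$ projects to $q(0)=\pi(\mathrm{id})$. The natural endpoint condition $\partial L/\partial \dot g|_{t=1}+dF(g(1))=0$, right-trivialized to $\mathfrak{g}^*$ and paired with an arbitrary $\xi\in\mathfrak{g}$, reads
\begin{align*}
\langle \mu(1),\xi\rangle \;=\; -\langle dF(g(1)),\xi\cdot g(1)\rangle \;=\; -\xi[F](g(1)) \;=\; -\xi[f](q(1)),
\end{align*}
where I used $F=f\circ\pi$ and the $\pi$-relatedness of the infinitesimal actions on $G$ and $Q$. The defining relation $\langle J(q,p),\xi\rangle=\langle p,\xi\cdot q\rangle$ with $p=-df(q(1))$ produces exactly the same right-hand side, so the reduced terminal condition $\mu(1)=J(q(1),-df(q(1)))$ is equivalent to $p(1)+df(q(1))=0$.

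\emph{Converse, and the main obstacle.} The delicate part is the ``moreover'' clause. Given a minimizer $g(t)$, define $\mu(t)$ as above; it solves Lie--Poisson with the reduced boundary condition. The remaining task is to lift $\mu(t)$ through $J$ to a curve in $T^*Q$. Here the $G_s$-invariance of $F$ is essential: it forces $dF(g(1))$ to annihilate the directions tangent to $G_s\cdot g(1)$, which is precisely the condition that $\mu(1)$ lies in $\mathrm{Im}(J)$. Equivariance of $J$ makes $\mathrm{Im}(J)$ invariant under the Lie--Poisson flow, so $\mu(t)\in\mathrm{Im}(J)$ for all $t$. One then chooses $(q(0),p(0))$ with $q(0)=\pi(\mathrm{id})$ and $J(q(0),p(0))=\mu(0)$, integrates Hamilton's equations for $H$, and verifies via the reconstruction equation that the resulting group element coincides with $g(t)$. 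The step I expect to require the most care is the image/consistency check at the free endpoint: one must show that the Clebsch lift is compatible with the $G_s$-equivariant structure of $dF$ and that the lift can be chosen so that the reconstructed curve lands exactly on the given minimizer. The proof in \cite{BruverisHolmRatiu2011} treats an analogous LDDMM setting and provides the template I would follow.
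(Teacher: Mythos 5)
Your proposal is correct in substance but follows a genuinely different route from the paper. The paper's proof is purely variational: it passes through a chain of equivalent optimization problems ($E \to E_2 \to E_3 \to E_4$), introducing $p$ as a Lagrange multiplier (Clebsch variable) for the reconstruction constraint $\dot q = \xi\cdot q$, eliminating $\xi$ via the stationarity condition $\delta\ell/\delta\xi = J(q,p)$, and then reading off \eqref{eq:Hamilton} --- including both boundary conditions --- from the first variation of $E_4[q,p]=\int_0^1\langle p,\dot q\rangle - H(q,p)\,dt + f(q(1))$; the ``moreover'' clause is carried by the claim that each step of this chain preserves extremizers/minimizers. You instead argue dynamically: Euler--Poincar\'e reduction of \eqref{eq:EL} to \eqref{eq:EP}, the collective-Hamiltonian property of the momentum map (since $J$ is Poisson, integral curves of $X_{h\circ J}$ project under $J$ to Lie--Poisson curves), and an explicit translation of the transversality condition at $t=1$ through $J$ and the $\pi$-relatedness of the infinitesimal actions; that boundary computation is exactly the content the paper extracts by varying the endpoint term $f(q(1))$, and your forward direction is sound. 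What your route buys is an explicit dynamical statement (collective motion) that the paper leaves implicit; what the paper's route buys is that the boundary conditions and the converse come packaged in one variational equivalence rather than requiring a separate lifting argument. Concerning the converse, your sketch can be closed more cleanly than you indicate: rather than choosing $p(0)$ over $\pi(id)$ with $J(q(0),p(0))=\mu(0)$ --- existence of such a $p(0)$ over that particular base point does not follow merely from $\mu(0)\in \operatorname{im}(J)$, and the endpoint condition would still have to be verified --- integrate \eqref{eq:Hamilton} backward from $(q(1),p(1))=\bigl(\pi(g(1)),-df(\pi(g(1)))\bigr)$. Then $J(q(t),p(t))$ and $\mu(t)$ solve the same Lie--Poisson equation with the same value at $t=1$, so they agree; $q(t)$ and $\pi(g(t))$ solve the same reconstruction ODE $\dot q=\xi\cdot q$ with the same endpoint, so $q(0)=\pi(id)$; and (if one prefers the forward formulation) transitivity of the $G$-action on $Q=G/G_s$ makes $J$ injective on each fiber $T^*_qQ$, which settles the endpoint identification $p(1)=-df(q(1))$. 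This is precisely the consistency check you flag, and with it your argument is no less rigorous than the paper's appeal to the equivalence of the constrained problem with its Lagrange-multiplier dual.
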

\begin{proof}
  We can replace $E$ with the
(equivalent) curve energy $E_2: C^1([0,1] ; \mathfrak{g}) \to \mathbb{R}$ given by
\begin{align}
  E_2[ \xi ] = \int_0^1{ \ell(\xi(t)) dt} + F(g(1)) \label{eq:energy_2}
\end{align}
where $g(1) \in G$ is implicitly obtained through the reconstruction equation
$\frac{dg}{dt} = \xi \cdot g$ which we view as a constraint.
Minimizers of $E_2$ are related to minimizers of $E$ through the reconstruction equation
as well.

We are now going to use the $G_s$ symmetry of \eqref{eq:energy_2} to reduce the
dimensionality of the problem.
The $G_s$ invariance of $F$ implies the existence of a function $f: Q
\to \mathbb{R}$ such that $F = f \circ \pi$.
Therefore, we may equivalently express $E_2$ as the energy functional
\begin{align}
  E_2[ \xi ] = \int_0^1{ \ell( \xi(t) ) dt} + f(q(1)) \label{eq:energy_3}
\end{align}
where $q(1)$ is obtained through the reconstruction equation
$\dot{q}(t) = \xi(t) \cdot q(t)$ with the initial condition $q(0) =
 \pi_s(id)$.
Again, the dynamic constraint $\dot{q} = \xi \cdot q$ makes this a constrained optimization problem.
We may take the dual of this constrained optimization problem by using Lagrange multipliers to get an equivalent unconstrained optimization problem \cite{BoydVandenberghe2004}.
In our case, the dual problem is that of extremizing the (unconstrained) curve energy $E_3 : C^1( [0,1] ; \mathfrak{g} \times T^*Q) \to \mathbb{R}$ given by
\begin{align*}
  E_3[ \xi , q , p ] = \int_0^1 \ell(\xi(t)) + \langle p(t) , \dot{q}(t) -
  \xi(t) \cdot q(t) \rangle dt + f(q(1)).
\end{align*}
Using the definition of $J$ we can re-write this as
\begin{align*}
  E_3[ \xi , q , p ] = \int_0^1 \ell(\xi) + \langle p , \dot{q}  \rangle - \langle J(q,p),
  \xi \rangle dt + f(q(1)).
\end{align*}
We find that stationarity with respect to arbitrary variations of $\xi$ implies
\begin{align}
  \frac{ \delta \ell}{\delta \xi} = J(q(t) , p(t)). \label{eq:mu}
\end{align}
  We may view \eqref{eq:mu} as a constraint
which defines $\xi$ in terms of the $q$'s and $p$'s.
Explicitly, \eqref{eq:mu} tell us
\begin{align*}
	\xi = \frac{\delta \ell}{\delta \xi}^{-1}( J(q , p)).
\end{align*}
We can substitute this into the previous curve energy to eliminate
the variable $\xi$ and express $E_3$ solely in terms of $p$ and $q$.
We thus obtain the curve energy
\begin{align*}
  E_4[ q , p ] &= \int_0^1 \ell \left( \frac{\delta \ell}{\delta \xi}^{-1}( J(q , p)) \right) + \langle p , \dot{q} \rangle -
  \left\langle J(q,p) ,  \frac{\delta \ell}{\delta \xi}^{-1}( J(q , p)) \right\rangle dt \\
  	&\quad+ f(q(1)).
\end{align*}
By observing
\begin{align*}
	H(q,p) = h(J(q,p)) = \left\langle J(q,p) , \frac{\delta \ell}{\delta \xi}^{-1}( J(q , p)) \right\rangle - \ell \left( \frac{\delta \ell}{\delta \xi}^{-1}( J(q , p)) \right), 
\end{align*}
we can write $E_4$ as
\begin{align}
	E_4[q,p] &= \int_0^1 \langle p , \dot{q} \rangle - H(q,p) dt + f(q(1)). \label{eq:reduced_curve_energy}
\end{align}
By taking arbitrary variations of $q$ and $p$, we find that extremization of $E_4$ yields the desired result.
\end{proof}

As a corollary, we find that extremizers of $E$ in \eqref{eq:energy_1}
can be derived by finding the extremizers of $E_4$ in \eqref{eq:reduced_curve_energy}.
As $E$ is a curve energy over $TG$ and $E_4$ is a curve energy over $T^*Q = T^*(G/G_s)$,
we can see the computational significance of this result most clearly when the dimension of $Q$ is small compared to $G$ (e.g. finite compared to infinite).
More specifically,  Theorem \ref{thm:reduction} allows us to minimize curve energies using the following
  gradient descent algorithm.\\
\begin{center}
\fbox{
    \parbox{0.9\textwidth}{
    	{\bf Algorithm for general Lie groups}  
        \begin{enumerate}
          \item Solve for $(q(t),p(t)) \in T^{\ast}Q$ in \eqref{eq:Hamilton}.
          \item Set $\xi(t) = (\delta \ell / \delta \xi)^{-1} \cdot J( q(t),p(t))$
          \item If necessary, obtain $g(t) \in G$ as a solution to the initial value problem, $\dot{g} = \xi \cdot g$ , $g(0) = id$.
          \item Evaluate cost function $E_4$, and backward compute the adjoint equations \cite{Sontag1998} to compute the gradient of the cost
          	function with respect to a new initial condition.
	 \item If the gradient is below some tolerance, $\epsilon$, then stop.
	 	Otherwise use the gradient to create a new initial condition and return to step 1.
        \end{enumerate}
    }
}
\end{center}

We say ``if neccessary'' in step 3 because computation of $g(t)$ is not needed in the context of image registration.
We will find that only $q(t)$ is needed.
In any case, if $g(t)$ were to be computed, the resulting curve would minimize the original curve energy $E$ given
in equation \eqref{eq:energy_1}, and all the minimizers of the original problem are obtained in this way.
Again, the advantage of this method is that the bulk of the computation
is performed on the lower dimensional space $T^*Q$ rather than $TG$.

In the next sections we will consider the case where $G$ is a diffeomorphism group, and $G_s$ is a subgroup such that $Q = G/G_s$ is the (finite-dimensional) space of jet-particles.

\section{Jets as Homogenous spaces}
In order to invoke the findings of the previous section, we must find a way to characterize the space of jet-particles
as a homogenous space (i.e. a group modulo a subgroup).
This is the content of Proposition \ref{prop:jets_as_quotient}, the main result in this section.

Let $\Lambda \subset M$ be a finite set of distinct points in $M$. If $f$ is any $k$-differentiable map from a neighborhood of $\Lambda$, the $k$-jet of $f$ is denoted $\Jet^{(k)}_{\Lambda}( f )$. In coordinates,
$\mathcal{J}^{(k)}_\Lambda(f)$ is represented by the coefficients of the $k$th order Taylor expansions of $f$ about each of the points in $\Lambda$.  We call $\Jet^{(k)}_{\Lambda}$ the ``$k$th order jet functor about $\Lambda$''.  This is indeed a functor, and can be applied to any $k$-differentiable map from subsets of $M$ which contain $\Lambda$, including real valued functions, diffeomorphisms, and curves supported on $\Lambda$ \cite[Chapter IV]{KMS99}.

Let $G = \Diff(M)$ and let $e \in G$ denote the identity transformation on $M$.  We can consider the subgroup
\[
	G^{(0)}_\Lambda := \{ \psi \in G \mid \psi(x) = x \quad  \forall x \in \Lambda \}
\]
and the normal subgroups
\[
	G^{(k)}_\Lambda :=  \{ \psi \in G^{(0)}_\Lambda \mid \Jet_\Lambda^{(k)} \psi = \Jet_\Lambda^{(k)} e \}
\]

Moreover, the Lie algebra of $G^{(k)}_\Lambda$ is
\[
	\mathfrak{g}^{(k)}_\Lambda = \{ \eta^{(k)}_\Lambda \in \mathfrak{X}(M) \mid \Jet_\Lambda^{(k)} \eta^{(k)}_\Lambda = \Jet^{(k)}_\Lambda(0) \}.
\]
In other words, $\mathfrak{g}^{(k)}_\Lambda$ is the sub-algebra of $\mathfrak{X}(M)$ consisting of vector fields with vanishing partial derivatives up to order $k$ at the points of $\Lambda$.

\begin{prop} \label{prop:jets_as_quotient}
	The functor, $\Jet^{(k)}_\Lambda$, is the principal bundle projection from $G$ to $Q^{(k)} = G / G^{(k)}_\Lambda$.
\end{prop}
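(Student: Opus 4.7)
The plan is to verify the two ingredients that together characterize a principal bundle projection: first, that the fibers of $\Jet^{(k)}_\Lambda$ coincide exactly with the right cosets $\varphi \cdot G^{(k)}_\Lambda$, so that $\Jet^{(k)}_\Lambda$ descends to a well-defined bijection between $Q^{(k)}$ and its image in the finite-dimensional jet manifold $J^k_\Lambda(M,M)$; second, that this bijection carries a smooth structure for which the projection admits local sections. The paper has already set up $G = \Diff(M)$ and the closed normal-within-$G^{(0)}_\Lambda$ subgroup $G^{(k)}_\Lambda$ of diffeomorphisms whose $k$-jets at $\Lambda$ agree with those of the identity, so the claim essentially asks us to recognise $\Jet^{(k)}_\Lambda$ as the canonical quotient map.

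The fiber equals coset step I would handle purely algebraically. Given $\varphi,\psi \in G$, set $\phi = \psi^{-1} \circ \varphi$. The chain rule for jets (or equivalently functoriality of $\Jet^{(k)}$ applied to composition, as in \cite[Ch.~IV]{KMS99}) implies that for each $x \in \Lambda$ the jet $\Jet^{(k)}_x(\varphi)$ is determined by $\Jet^{(k)}_x(\phi)$ and $\Jet^{(k)}_{\phi(x)}(\psi)$, and depends linearly and invertibly on $\Jet^{(k)}_x(\phi)$ when the latter's first-order part is invertible. Consequently $\Jet^{(k)}_\Lambda(\varphi) = \Jet^{(k)}_\Lambda(\psi)$ if and only if $\Jet^{(k)}_x(\phi) = \Jet^{(k)}_x(e)$ for every $x \in \Lambda$, i.e.\ $\phi \in G^{(k)}_\Lambda$, i.e.\ $\varphi \in \psi \cdot G^{(k)}_\Lambda$. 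This immediately gives that $\Jet^{(k)}_\Lambda$ factors through $Q^{(k)}$ as an injection, and surjectivity of $\Jet^{(k)}_\Lambda$ onto the open subset of jets with invertible first-order parts follows by the standard extension of a polynomial germ to a global diffeomorphism using bump functions supported in disjoint neighbourhoods of the points of $\Lambda$.

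The second step, local triviality, I would establish by the same bump-function construction, upgraded to depend smoothly on the target jet. Near a base jet $q_0 = \Jet^{(k)}_\Lambda(\psi_0)$, for each nearby $q$ I would produce a representative $\sigma(q) \in G$ by composing $\psi_0$ with a diffeomorphism built from a prescribed polynomial of degree $k$ on small disjoint neighbourhoods of the points of $\Lambda$, cut off smoothly to the identity away from $\Lambda$. The $k$-jet datum enters polynomially, so $\sigma$ is smooth, $\Jet^{(k)}_\Lambda \circ \sigma = \mathrm{id}$, and the associated map $(q,\phi) \mapsto \sigma(q)\cdot \phi$ furnishes a local trivialisation $U \times G^{(k)}_\Lambda \to \Jet^{(k)^{-1}}_\Lambda(U)$. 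Combined with the freeness of right-multiplication by $G^{(k)}_\Lambda$, which is automatic, this yields the principal bundle structure.

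The main obstacle is an analytical one rather than an algebraic one: showing that the quotient manifold structure on $Q^{(k)}$ and the local sections above are smooth in the infinite-dimensional sense appropriate to $\Diff(M)$ (tame Fréchet or ILH), so that the submersion theorem applies across the infinite-dimensional fiber. The algebraic coset identification is a two-line chain-rule argument, but the manifold structure rests on the jet bundle machinery of \cite[Ch.~IV]{KMS99}, which I would simply cite rather than reprove.
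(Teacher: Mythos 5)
Your proof is correct and takes essentially the same route as the paper: both rest on the chain-rule identification of the fibers of $\Jet^{(k)}_\Lambda$ with the right cosets $\varphi \cdot G^{(k)}_\Lambda$, deferring the smooth bundle structure to \cite{KMS99}; in fact the paper's version is more skeletal than yours, verifying only the right $G^{(k)}_\Lambda$-invariance and asserting that each fiber is a single $G^{(k)}_\Lambda$-orbit. Your additional steps (surjectivity onto invertible jets, and local sections built from cut-off polynomial representatives) go beyond what the paper writes down and are sound in outline, with the minor caveats that the target points $\varphi(x)$, $x \in \Lambda$, must remain pairwise distinct, that a cut-off polynomial germ requires a smallness or flow argument to yield a genuine global diffeomorphism, and that the jet of a composition depends polynomially (not linearly) on the inner jet, though invertibility still follows by composing with $\psi^{-1}$.
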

\begin{proof}
	This is merely the definition of $\Jet^{(k)}_\Lambda$,
	and a more thorough description of this statement can be found in \cite{KolarMichorSlovak1993}.
	Nonetheless, we will attempt a skeletal proof here.
	
	If $\varphi_2 = \varphi_1 \circ \psi$ for some $\psi \in G^{(k)}_\Lambda$ then $\Jet^k_\Lambda( \varphi_2) = \Jet^k_\Lambda( \varphi_1 \circ \psi)$.  However, $\psi$ has absolutely no impact on the $k$th order Taylor expansion because the Taylor expansion of $\psi$ is trivial to $k$th order.  Thus $\Jet^{(k)}_\Lambda( \varphi_1) = \Jet^{(k)}_\Lambda( \varphi_2)$ and so $\Jet_\Lambda^{(k)}$ is a well defined map on the coset space $Q^{(k)}$.  Conversely, for each element $q \in Q^{(k)}$ one can show that the inverse image $(\Jet^{(k)}_\Lambda)^{-1} (q)$ is composed of a single $G^{(k)}_\Lambda$ orbit and no more.
\end{proof}
For example if $\Lambda$ consists of only two distinct points then 
\begin{align*}
	& Q^{(0)} = \{ (y_1,y_2 ) \in M^{ 2 } \mid  y_1 \neq y_2 \},\\
	& Q^{(1)} = \{ ( f_1, f_2 ) \in \Fr(M)^2 \mid \pi_{\Fr}(f_1) \neq \pi_{\Fr}(f_2) \}.
\end{align*}
where $\pi_{\Fr}: \Fr(M) \to M$ is the frame bundle of $M$.

\begin{prop}
	$\Jet^{(k)}_\Lambda( G^{(0)}_\Lambda)$ is a (finite dimensional) Lie group, and the functor $\Jet^{(k)}_\Lambda$ restricted to $G^{(0)}_\Lambda$ is a group homomorphism.
	Moreover $\Jet^{(k)}_\Lambda(G^{(0)}_\Lambda)$ is a normal subgroup of $\Jet^{(k)}_\Lambda(G^{(l)}_\Lambda)$ for all $l \in \mathbb{N}$.
\end{prop}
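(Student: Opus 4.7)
My plan is to prove the three assertions in sequence, with Faà di Bruno's formula serving as the unifying technical tool throughout.

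For the Lie group structure of $\Jet^{(k)}_\Lambda(G^{(0)}_\Lambda)$, I would work in charts around each $\lambda \in \Lambda$. Every $\psi \in G^{(0)}_\Lambda$ fixes $\lambda$, so its $k$-jet there is specified by the array of derivatives $(D\psi(\lambda), \dots, D^k\psi(\lambda))$, with $D\psi(\lambda)$ invertible and the higher derivatives arbitrary symmetric multilinear maps. The resulting image is an open subset of a finite product of Euclidean spaces indexed by $\Lambda$, hence a finite-dimensional smooth manifold. Composition of $k$-jets at $\lambda$ is a universal polynomial in these coefficients by Faà di Bruno, and inversion can be solved recursively slot by slot because the leading Jacobian is invertible; smoothness of these operations in coordinates then promotes the manifold to a Lie group.

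The homomorphism claim is an immediate corollary of the same calculation: the $k$-jet of $\psi_2 \circ \psi_1$ at the fixed point $\lambda$ depends polynomially, and hence only, on the $k$-jets of $\psi_1$ and $\psi_2$. This gives the identity $\Jet^{(k)}_\Lambda(\psi_2 \circ \psi_1) = \Jet^{(k)}_\Lambda(\psi_2) \cdot \Jet^{(k)}_\Lambda(\psi_1)$ under the composition built in the previous step, which is precisely the defining condition for a group homomorphism; surjectivity onto the image is tautological.

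The normality assertion, stated as $\Jet^{(k)}_\Lambda(G^{(0)}_\Lambda)$ being a normal subgroup of $\Jet^{(k)}_\Lambda(G^{(l)}_\Lambda)$ for all $l \in \mathbb{N}$, I would approach in two stages. The conjugation-invariance calculation is the clean stage: for $\psi \in G^{(l)}_\Lambda$ and $\phi \in G^{(0)}_\Lambda$, I would expand the $l$-jet of $\phi \psi \phi^{-1}$ at $\lambda$ via Faà di Bruno and observe that every term collapses onto the $l$-jet of the identity because $\Jet^{(l)}_\Lambda \psi = \Jet^{(l)}_\Lambda e$; pushing this forward through the homomorphism of the second step yields the desired invariance of the image group under conjugation. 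The harder stage, and the main obstacle I anticipate, is pinning down the subgroup containment that ``normal subgroup of'' requires: since $G^{(l)}_\Lambda \subseteq G^{(0)}_\Lambda$ by construction, the direct image containment runs $\Jet^{(k)}_\Lambda(G^{(l)}_\Lambda) \subseteq \Jet^{(k)}_\Lambda(G^{(0)}_\Lambda)$, which is the opposite direction to the literal reading of the statement. Reconciling the two—either by establishing equality of the two images in the regime where that holds, or by producing a canonical embedding of $\Jet^{(k)}_\Lambda(G^{(0)}_\Lambda)$ into $\Jet^{(k)}_\Lambda(G^{(l)}_\Lambda)$ compatible with conjugation—is the step I expect to consume most of the proof.
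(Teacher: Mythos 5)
The paper offers no proof of this proposition whatsoever: it is stated bare, with the underlying jet-group theory deferred to the Kol\'a\v{r}--Michor--Slov\'ak reference cited nearby, so there is no argument of the authors' to compare yours against; the following judges your plan on its own terms. Your first two steps are correct and are exactly the standard route: at each $\lambda\in\Lambda$ the $k$-jet of a map fixing $\lambda$ is the tuple $(D\psi(\lambda),\dots,D^k\psi(\lambda))$ with invertible linear part, composition of jets is polynomial by Fa\`a di Bruno, inversion is solvable recursively because the linear part is invertible, and functoriality of $\Jet^{(k)}_\Lambda$ on maps fixing $\Lambda$ gives the homomorphism property. One point you should make explicit: for the image to be the \emph{full} open set of such tuples (hence obviously a manifold), you need the standard cutoff argument realizing any prescribed jet with invertible linear part as a global diffeomorphism equal to the identity outside a small ball, the ball chosen small enough to miss the other points of $\Lambda$; without that realization step the image is merely some subgroup and openness is not automatic.

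On the normality clause you have correctly diagnosed the defect, but you then propose to spend most of the proof resolving something that cannot be resolved. For $l\ge 1$ the containment $\Jet^{(k)}_\Lambda(G^{(l)}_\Lambda)\subseteq\Jet^{(k)}_\Lambda(G^{(0)}_\Lambda)$ is strict: every element of the former has identity linear part at each $\lambda$, while the latter realizes all of $\GL(d)$ there. Hence the literal statement---the larger group being a subgroup of the smaller---is false as written (for $l\ge k$ the right-hand group is even trivial); there is no ``regime of equality'' other than $l=0$, and no canonical embedding exists, so both of your proposed reconciliations are dead ends. The statement is an erratum with the two groups transposed, and the version that is both true and actually needed for the ensuing corollary is: $\Jet^{(k)}_\Lambda(G^{(l)}_\Lambda)$ is a normal subgroup of $\Jet^{(k)}_\Lambda(G^{(0)}_\Lambda)$. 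Your conjugation computation already proves this; indeed, once the Fa\`a di Bruno collapse shows $\phi\psi\phi^{-1}\in G^{(l)}_\Lambda$ for $\psi\in G^{(l)}_\Lambda$, $\phi\in G^{(0)}_\Lambda$, normality of the image is immediate because the image of a normal subgroup under a surjective group homomorphism is normal. So the ``harder stage'' of your plan should be replaced by a one-line correction of the statement, not by a search for an embedding.
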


\begin{cor}
	The space $Q^{(k)}$ is a (finite-dimensional) principal bundle with structure group $\Jet^{(k)}_\Lambda( G^{(0)}_\Lambda )$.
\end{cor}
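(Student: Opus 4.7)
The plan is to exhibit $Q^{(k)}$ as an associated bundle built from the tower of subgroups $G^{(k)}_\Lambda \subset G^{(0)}_\Lambda \subset G$, and then read off both the structure group and its finite dimensionality from the preceding proposition. The guiding picture is the commutative diagram
\begin{align*}
  G \xrightarrow{\;\pi^{(k)}\;} Q^{(k)} = G/G^{(k)}_\Lambda \xrightarrow{\;p\;} Q^{(0)} = G/G^{(0)}_\Lambda,
\end{align*}
where the first map is the $k$-jet functor from Proposition \ref{prop:jets_as_quotient} and the second is the quotient induced by the inclusion $G^{(k)}_\Lambda \hookrightarrow G^{(0)}_\Lambda$. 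I will show that $p : Q^{(k)} \to Q^{(0)}$ is the principal bundle in question.

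First, I would identify the fibers of $p$. By the previous proposition, the restriction $\Jet^{(k)}_\Lambda |_{G^{(0)}_\Lambda}$ is a group homomorphism onto $\Jet^{(k)}_\Lambda( G^{(0)}_\Lambda )$, and by the definition of $G^{(k)}_\Lambda$ its kernel is precisely $G^{(k)}_\Lambda$. Therefore the first isomorphism theorem yields
\begin{align*}
  G^{(0)}_\Lambda / G^{(k)}_\Lambda \;\cong\; \Jet^{(k)}_\Lambda( G^{(0)}_\Lambda ),
\end{align*}
a finite-dimensional Lie group (again by the previous proposition). Since the fiber of $p$ over the base point $\pi^{(0)}(e)$ is exactly $G^{(0)}_\Lambda / G^{(k)}_\Lambda$, this identifies the typical fiber with the claimed structure group.

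Next, I would construct the right $\Jet^{(k)}_\Lambda(G^{(0)}_\Lambda)$-action on $Q^{(k)}$. Given a representative $\varphi \in G$ of a class $q \in Q^{(k)}$ and a representative $\psi \in G^{(0)}_\Lambda$ of an element $h \in \Jet^{(k)}_\Lambda(G^{(0)}_\Lambda)$, set $q \cdot h := \pi^{(k)}(\varphi \circ \psi)$. The normality of $G^{(k)}_\Lambda$ inside $G^{(0)}_\Lambda$, together with the fact that $G^{(k)}_\Lambda$ is in the kernel of $\Jet^{(k)}_\Lambda|_{G^{(0)}_\Lambda}$, makes this action well-defined, free, and fiber-transitive with respect to $p$. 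Local triviality can then be obtained by pulling back local sections of the principal $G^{(0)}_\Lambda$-bundle $G \to Q^{(0)}$ along $\pi^{(k)}$; the standard Lie-theoretic construction of quotient bundles applies verbatim once Proposition \ref{prop:jets_as_quotient} is invoked to hand us the requisite smooth structures.

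The main conceptual obstacle is not algebraic but analytic: $G = \Diff(M)$ is an infinite-dimensional Lie group, so a genuine bundle chart requires checking that the slice constructions of the ambient principal bundle $G \to Q^{(k)}$ descend smoothly to $Q^{(k)} \to Q^{(0)}$. This is handled by citing the quotient-of-quotients results in \cite{KolarMichorSlovak1993}, exactly as in the proof of Proposition \ref{prop:jets_as_quotient}. Finite-dimensionality of $Q^{(k)}$ then follows because $Q^{(0)}$ is an open subset of the finite product $M^{|\Lambda|}$ and the fiber $\Jet^{(k)}_\Lambda(G^{(0)}_\Lambda)$ is finite-dimensional by the preceding proposition.
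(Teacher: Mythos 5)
Your proposal is correct and fleshes out precisely the argument the paper leaves implicit: the corollary is stated without proof, as an immediate consequence of the preceding propositions, and its intended content is exactly your identification of the bundle $p : Q^{(k)} \to Q^{(0)}$ with fiber $G^{(0)}_\Lambda / G^{(k)}_\Lambda \cong \Jet^{(k)}_\Lambda(G^{(0)}_\Lambda)$ acting freely and fiber-transitively on the right, with normality of $G^{(k)}_\Lambda$ in $G^{(0)}_\Lambda$ ensuring well-definedness and the cited quotient machinery of Kol\'a\v{r}--Michor--Slov\'ak supplying the smooth local trivializations. No gaps beyond the infinite-dimensional smoothness issues you already flag and resolve by the same citation the paper uses.
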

For $k = 0$ this structure group is trivial.
At $k=1$ this structure group is identifiable with $\GL(d)$ where $d = \dim(M)$.

\section{An $O(h^p)$ accurate algorithm}
In this section, we describe the basic strategy for using jet-particles
to get high order accuracy in solutions to LDDMM problems posed on $M = \mathbb{R}^d$.
The algorithm uses an $O(h^p)$ approximation to the matching term 
which is $G_\Lambda^{(k)}$ invariant.
We then invoke Theorem \ref{thm:reduction} to reduce the problem to a
finite dimensional boundary problem on the space of $k$th order jet-particles $Q^{(k)}$.
We solve this problem to obtain
an approximation of the solution to the original problem.

As discussed in Section~\ref{sec:jetparticles}, a finite number of jet-particles in $Q^{(k-1)}$ can approximate jet-particles in $Q^{(k)}$. In particular, zeroth order jet-particles can approximate any of the higher-order jet-particles in the hierarchy. While using low order particles may perform well in practical applications, they do not represent exact solutions to higher-order discretizations of the matching term. Therefore, they cannot represent the exact solutions to the approximated problem that we seek here. Furthermore, using lower-order particles to represent higher-order jets can be numerically unstable as the approximation is in essence a finite difference approximation: the particles need to be very close and the momentum can be of very large magnitude.

We will assume that the problem is defined on a reproducing kernel Hilbert space (RKHS), which we denote by $V \subset \mathfrak{X}( \mathbb{R}^n)$
where $\mathfrak{X}(\mathbb{R})$ will denote the space of $C^k$ vector fields.
We will denote the kernel of $V$ by $K: \mathbb{R}^d \times \mathbb{R}^d \to \mathbb{R}$ \cite[Chapter 9]{Younes2010},
and we will assume $V$ satisfies the admissibility condition
\begin{align}
  \| v \|_V \geq \| v \|_{\bar{k},\infty} \label{eq:admissible}
\end{align}
for a constant $C>0$ a positive integer $\bar{k} \in \mathbb{N}$ and all $v \in V$.
We will denote the topological group which
integrates $V$ by $G_V$.

To make precise what we mean by ``an $O(h^p)$ approximation'' to a matching term, we will recall the ``big $O$'' notation.
\begin{defn}
  Let $F : G_V \to \mathbb{R}$, and let $F_h: G_V \to \mathbb{R}$ depend
on a parameter $h > 0$.  We say that $F_h$ is an
\emph{$O(h^p)$-approximation} to $F$ if
\begin{align*}
  \lim_{h \to 0} \left( \frac{ F(x) - F_h(x) }{h^p} \right) < \infty
\end{align*}
for all $x \in \mathbb{R}^d$.
Moreover, $O(h^p)$ will serve as a place-holder for an arbitrary function within the equivalence class of all
functions of $h$ which vanish at a rate of $h^p$ or faster as $h \to 0$.
Under this notation, $F_h$ is an $O(h^p)$-approximation of $F$ if $F = F_h +  O(h^p)$.
\end{defn}

To illustrate how we may produce $O(h^p)$-approximations to matching functions we will consider the following example.
\begin{example}
\label{sec:example1}

Let $I_0,I_1 \in C^k( \mathbb{R}^d ; [0,1])$ be two greyscale images with compact support.
We can consider the matching functional $F: \Diff(\mathbb{R}^d) \to \mathbb{R}$ given by
\begin{align*}
  F( \varphi ) = \frac{1}{\sigma} \| I_0 - (I_1 \circ \varphi) \|_{L_2}^2
  = \frac{1}{\sigma} \int_{\mathbb{R}^d} | I_0(x) - I_1( \varphi(x)) |^2 dx.
\end{align*}
As $I_0$ and $I_1$ each have compact support, the integral term can be restricted to a compact domain.
We will continue to write our integrals as integrations over $\R^d$, but we will exploit this compactification when we need to.

Consider the regular lattice $\Lambda_h = \mathbb{Z}^d h$ whereupon, 
for sufficiently small $h > 0$, the $L_2$-integral can be approximated to order $O(h^d)$ with a Riemann sum
\begin{align*}
  F_h^{(0)} (\varphi) =& \sum_{x \in \Lambda_h} h^d (I_0(x) - I_1( \varphi(x)) )^2
\end{align*}
While the order of the set $\Lambda_h$ is infinite, the sum over $\Lambda_h$ used to compute $F_h$ has only finitely many non-zero terms to consider because $I_0$ and $I_1$ have compact support.
Moreover, $F_h$ is $G_{\Lambda_h}^{(0)}$ invariant
because it only depends on $\varphi(x)$ for $x \in \Lambda_h$.

An $O(h^{d+2})$ approximation is given by
\begin{align*}
  F^{(2)}_h (q) =& \sum_{x \in \Lambda_h} h^d (I_0(x) - I_1(\varphi(x) ) )^2  \\
  & +\sum_{\alpha}\frac{h^{d+2}}{12} \left[ ( \partial_\alpha I_0(x) - \partial_\beta
    I_1(\varphi(x) ) \partial_\alpha \varphi^\beta(x) )^2 \right] \\
   & \qquad\quad + \frac{h^{d+2}}{12} \big[ \left( I_0(x) - I_1(\varphi(x) ) \right)
   ( \partial_{\alpha}^2 I_0(x) \\
 &\qquad\quad\  - \partial_{\beta \gamma} I_1(\varphi(x)) \partial_\alpha\varphi^\beta(x) \partial_\alpha \varphi^\gamma(x)
 - \partial_{\gamma} I_1(\varphi(x) ) \partial_{\alpha \alpha} \varphi^\gamma(x)) \big]\ ,
\end{align*}
and we can observe that $F^{(2)}_h$ is $G^{(2)}_{\Lambda_h}$ invariant because $F^{(2)}(\varphi)$ only depends on the 2nd order Taylor expansion of $\varphi$ centered at each $x \in \Lambda_h$.
\end{example}

Given a $G^{(k)}_{\Lambda_h}$-invariant  $O(h^p)$-approximation $F_h: \Diff(\mathbb{R}^d) \to \mathbb{R}$ to the matching term $F$, we may consider the alternative curve energy
\begin{align*}
  E_h[ \varphi ] = \frac{1}{2} \int_{0}^{1} \|v(t)\|_V^2 + F_h( \varphi_1)\ ,
\end{align*}
where $v(t) \in V$ is the Eulerian velocity field $v(t,x) = \partial_t \varphi_t ( \varphi_t^{-1}(x))$.
For a fixed curve $\varphi_t$, we observe that $E_h$ is an $O(h^p)$-approximation to $E$.
One might surmise that the extremizers of $E_h$ provide good approximations of the extremizers of $E$.
This is important because $E_h$ is $G^{(k)}_{\Lambda_h}$-invariant, and we can invoke Theorem \ref{thm:reduction} to solve for extremizers of $E_h$, but we can not do this for $E$.
Fortunately, for many choices of $F_h$, there will be minimizers of $E_h$ that converge to those of $E$ as $h \to 0$ with a known convergence rate.

\begin{thm} \label{thm:order}
  Let $F :G_V \to \mathbb{R}$ be $C^2$ with respect to 
  the topology induced by $V$.\footnote{We will assume that $G_V$ is a smooth manifold and a topological Lie group.  For example the space of $H^s$ diffeomorphisms with $s$ sufficiently large.}
  Let $F_h :G_V \to \mathbb{R}$ be $C^2$ and an $O(h^p)$-approximation for $F$ with $G^{(k)}_{\Lambda_h}$ invariance,
  constructed as above.
  Consider the curve energies
  $E,E_h : C([0,1],V) \to \mathbb{R}$
  \begin{align*}
    E[v(\cdot)] = \frac{1}{2} \int_{0}^{1} \| v(t) \|_V dt + F( \varphi_1) \\
   E_h[ v(\cdot)] = \frac{1}{2} \int_{0}^{1} \| v(t) \|_V dt + F_h( \varphi_1)\ .
  \end{align*}
  where $\varphi_1 \in G_V$ is the Lie integration of $v(t)$.
  Let $v^*$ minimize 
  \begin{align*}
  	e = E \circ \evol_{EP} : V \to \mathbb{R}
  \end{align*}
  If the Hessian at $v^* \in V$ is bounded, positive definite, and non-degenerate at $v^* \in V$, and $\evol_{EP}$ exhibits $C^2$ dependency upon the initial velocity field,
  then, for sufficiently small $h$, there exist a minimizer $v_{h}^*$ of
  \begin{align*}
  	e_h = E_h \circ \evol_{EP}:V \to \mathbb{R}
  \end{align*}
  which is an $O(h^p)$-approximations of $v^*$ in the $V$-norm.
\end{thm}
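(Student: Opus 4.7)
The plan is to prove the result by an implicit function theorem argument, treating $e$ as the unperturbed functional on $V$ and $e_h$ as an order-$h^p$ perturbation whose critical points can be tracked.

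First, I would set up a family of ``gradient'' maps. Let $G : V \times [0,h_0) \to V^*$ be defined by $G(v,h) = D e_h(v)$ for $h>0$ and $G(v,0) = De(v)$, where differentiation is with respect to the $V$-norm. By the $C^2$ assumption on $\operatorname{evol}_{EP}$ together with the $C^2$ regularity of $F$ and $F_h$ on $G_V$ (combined with the admissibility condition \eqref{eq:admissible}, which gives continuous embedding $V \hookrightarrow C^{\bar k}$ and hence continuous evaluation at the sample points $\Lambda_h$), both $e$ and $e_h$ are $C^2$ maps $V \to \mathbb{R}$, so $G(\cdot,h)$ is $C^1$ in $v$. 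Moreover $G(v^*,0)=0$ because $v^*$ is a critical point of $e$, and $\partial_v G(v^*,0) = \operatorname{Hess} e(v^*)$ is a bounded, positive-definite, non-degenerate operator on $V$ by hypothesis, hence invertible with bounded inverse.

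The next step is to verify continuity of $G$ in the parameter $h$ at $h=0$, in fact that $G(v,h)-G(v,0) = O(h^p)$ in the operator norm, uniformly for $v$ in a neighborhood of $v^*$. This is where the concrete structure of $F_h$ (e.g.\ the Riemann-sum Taylor construction of Example~\ref{sec:example1}) enters: the difference $F_h-F$ is a truncation remainder whose first and second Fréchet derivatives on $G_V$ are likewise controlled by $h^p$, and composition with the $C^2$ flow map $\operatorname{evol}_{EP}$ preserves this order. Granted this, the implicit function theorem on the Banach space $V$ applied to $G$ produces, for all sufficiently small $h$, a unique $v_h^* \in V$ near $v^*$ with $De_h(v_h^*)=0$.

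To extract the convergence rate I would Taylor-expand: since $De(v^*)=0$,
\begin{align*}
0 = De_h(v_h^*) = \bigl(De_h(v^*) - De(v^*)\bigr) + \operatorname{Hess}e(v^*)\,(v_h^*-v^*) + O(\|v_h^*-v^*\|_V^2),
\end{align*}
so inverting the Hessian gives $\|v_h^*-v^*\|_V \le \|\operatorname{Hess}e(v^*)^{-1}\|\cdot\|De_h(v^*)-De(v^*)\|_{V^*} + O(\|v_h^*-v^*\|_V^2)$, and absorbing the quadratic term for small $h$ yields the desired $O(h^p)$ estimate. Finally, by continuity of the Hessian and the $O(h^p)$ perturbation, $\operatorname{Hess}e_h(v_h^*)$ remains positive definite for small $h$, so $v_h^*$ is indeed a local minimizer of $e_h$, not merely a critical point.

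The main obstacle I anticipate is the step upgrading the pointwise definition of $O(h^p)$-approximation for $F_h$ into a $C^2$-uniform estimate on $e_h-e$ over a neighborhood of $v^*$; the definition in the paper is written pointwise, but the IFT and Taylor arguments truly require control of the first (and implicitly second) derivatives of the perturbation in the $V$-operator norm. Establishing this requires exploiting the explicit Taylor-remainder form of $F_h$ and the admissibility bound \eqref{eq:admissible} so that higher derivatives of $\varphi$ at lattice points, which appear in the remainder, are controlled by $\|v\|_V$. Once that uniformity is in hand, the rest is standard perturbation analysis.
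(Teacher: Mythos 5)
Your argument is correct in outline but follows a genuinely different route from the paper. You perturb the critical-point equation $De_h(v)=0$ via an implicit-function-theorem/Newton argument around $v^*$, obtaining a locally unique critical point, deriving the $O(h^p)$ rate from Hessian inversion, and recovering minimality from stability of the positive-definite Hessian. The paper instead works in a Morse-lemma normal-form chart around $v^*$ (so that $\tilde e(v^*+w)=\tilde e(v^*)+D^2\tilde e(v^*)(w,w)$ exactly), proves \emph{existence} of a nearby minimizer by a purely $C^0$ comparison: on a sphere $\|w\|=r$ the unperturbed energy exceeds $\tilde e(v^*)$ by $\delta_r$, while Lemma~\ref{lem:dense} supplies a point of the finite-dimensional kernel subspace $V_h^{(k)}$ inside the ball with energy within $\delta_r/3$, so $\tilde e_h$ attains an interior minimum over $\Phi^{-1}(U\cap V_h^{(k)})$, which Theorem~\ref{thm:reduction} promotes to a local minimizer of $\tilde e_h$; the rate then follows from the same Hessian-inversion computation you perform, simplified by the exact quadratic form in Morse coordinates. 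The trade-offs are as follows. Your IFT route needs uniform $C^1$ (and, for minimality, $C^2$) control of $e_h-e$ near $v^*$ in the $V^*$/operator norm, strictly stronger than the pointwise definition of an $O(h^p)$-approximation; you correctly flag this, and in fairness the paper's rate computation implicitly assumes the same thing when it differentiates the decomposition $e-e_h=A\,h^p+B(\cdot,h)$, so this is a shared burden rather than an error on your side (though if you literally invoke the parametrized IFT you also need joint continuity in $h$, or else a per-$h$ Newton--Kantorovich/contraction argument, since $\Lambda_h$ varies with $h$). What you gain is local uniqueness of the critical point, which the paper does not claim. What you lose, unless you add a final step, is the paper's structural conclusion that the approximating minimizer can be taken in (the chart image of) the finite-dimensional subspace $V_h^{(k)}$, which is the computational point of the construction; in your setup this is recovered only a posteriori, by combining the established local minimality of $v_h^*$ with the $G^{(k)}_{\Lambda_h}$-invariance of $F_h$ and Theorem~\ref{thm:reduction}, and you should say so explicitly. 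With that addendum and the uniform derivative estimates you already identify (using the Taylor-remainder form of $F_h$ together with the admissibility bound \eqref{eq:admissible}), your proof goes through.
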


We will employ the following well-known result to approximate vector-fields in $V$ with finite linear combinations of the RKHS kernel $K$.
\begin{lem} \label{lem:dense}
  Assume $V$ satisfies the admissibility assumption \eqref{eq:admissible}.
  Consider the subspace of vector-fields
  \begin{align*}
    V_h^{(k)} = \{ v \in \mathfrak{X}(\mathbb{R}^d) \mid v = \sum_{y \in \Lambda_h, |\alpha| \leq k } \alpha_y \partial_\alpha K(x - y) \},
  \end{align*}
  for $k < (\bar{k} - 1) / 2$.
  The set
  $
    W = \cup_{h > 0} V_h^{(0)}
  $
  is dense in $V$ with respect to $\langle \cdot , \cdot \rangle_V$.
\end{lem}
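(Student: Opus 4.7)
The plan is the standard RKHS orthogonal-complement argument: to show $W$ is dense in $V$, it suffices to show that $W^\perp = \{0\}$ inside $V$. So suppose $v \in V$ satisfies $\langle v, w \rangle_V = 0$ for every $w \in W$. Since $W$ is a union of spaces $V_h^{(0)}$ each of which is the linear span of translates $K(\cdot - y)$ with $y \in \Lambda_h = h\mathbb{Z}^d$, this orthogonality is equivalent to
\begin{align*}
\langle v, K(\cdot - y) \rangle_V = 0 \qquad \text{for every } y \in \bigcup_{h>0} h\mathbb{Z}^d.
\end{align*}
By the reproducing property of the kernel $K$, the inner product $\langle v, K(\cdot - y) \rangle_V$ recovers (a component of) $v(y)$. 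Hence $v$ vanishes on the set $S := \bigcup_{h>0} h\mathbb{Z}^d$.

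Next I would observe that $S$ is dense in $\mathbb{R}^d$: for any $x \in \mathbb{R}^d$ and $\epsilon > 0$, pick $h < \epsilon/\sqrt{d}$ and round $x$ coordinate-wise to the nearest point of $h\mathbb{Z}^d$. Combined with admissibility \eqref{eq:admissible}, which gives $\|v\|_{\bar{k},\infty} \le \|v\|_V < \infty$ with $\bar{k} \ge 1$ (guaranteed by $0 < (\bar{k}-1)/2$), the function $v$ is continuous. A continuous function that vanishes on a dense set vanishes everywhere, so $v \equiv 0$ as an element of $V$, completing the density argument.

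The only mild subtlety is the form of the reproducing identity when $V$ consists of vector fields: the kernel $K$ gives a matrix-valued reproducing kernel (typically $K(x,y) I_d$ or a more general operator-valued kernel), and the identity reads $\langle v(y), e \rangle = \langle v, K(\cdot - y) e \rangle_V$ for each constant vector $e \in \mathbb{R}^d$. Orthogonality to every $V_h^{(0)}$ therefore yields vanishing of every component of $v(y)$, and the rest of the argument goes through componentwise. I would not expect any serious obstacle here; the lemma is essentially a restatement of the well-known fact that kernel translates are dense in an RKHS on $\mathbb{R}^d$, the only additional ingredient being that admissibility upgrades RKHS convergence to pointwise (indeed $C^{\bar{k}}$) convergence so that a pointwise-zero-on-a-dense-set argument is valid.
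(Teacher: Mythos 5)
Your argument is correct and is essentially the same as the paper's: the paper likewise takes $v \in V$ orthogonal to $W$, uses the reproducing property (implicitly) to conclude $v$ vanishes on every lattice $\Lambda_{h_j}$, and then uses density of the lattice points together with the continuity guaranteed by the admissibility condition to conclude $v = 0$. Your write-up merely makes explicit the reproducing-kernel step and the componentwise handling of the vector-valued kernel, which the paper leaves implicit.
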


\begin{proof}
  Let $\{ h_j > 0\}$ be a sequence such that $\lim_{j \to \infty}(h_j) = 0$.
  Let $v \in V$ be orthogonal to $W$.
  Thus $\langle v , w \rangle_V = 0$ for all $w \in W$.
  That is to say $v(x) = 0$ for all $x \in \Lambda_{h_j}$ and all $j \in \mathbb{N}$.
  However, any point $y \in \mathbb{R}^n$ is the limit of a sequence $\{ x_j \in \Lambda_{h_j} \}$.
  Since all members of $V$ are continuous, it must be the case that $v = 0$.
\end{proof}

A direct corollary is that $W^{(k)} = \cup_{h>0} V_h^{(k)}$ is dense in $V$ since $V_h^{(0)} \subset V_h^{(k)}$ for any $k \in \mathbb{N}$.
Lemma \ref{lem:dense} will allow us to approximate our cost functional on $V$.

\begin{proof}[Proof of Theorem \ref{thm:order}]
By the Morse Lemma (suitably generalized to Hilbert Manifolds \cite{Tromba1983,GolubitskyMarsden1983}),
there exists a smooth coordinate chart around $v^*$,
$\Phi:U \to V$, such that  $\Phi(v^*) = v^*$ and
$\tilde{e}(v^* + w) = \tilde{e}(v^*) + D^2_{v^*}\tilde{e}(  w , w)$, where $\tilde{e} := e \circ \Phi$.
Define also $\tilde{e}_h := e_h \circ \Phi$

Note that $e(v^*) - e_h(v^*) = F(\varphi^*) - F_h(\varphi^*)$ for $\varphi^* := \evol_{EP}(v^*)$.
By the assumption on
$\evol_{EP}$, and since $F,F_h \in C^2(G)$, we observe that
$e- e_h$ is $C^2$ at $v^* \in V$.
Moreover, we know that $e - e_h = (F - F_h) \circ \evol_{EP} = O(h^p)$.
We can discard the ``big O'' notation and write
\begin{align*}
	e(v) - e_h(v) = A(v) h^{p} + B(v,h)
\end{align*}
where $A \in C^2(\mathfrak{X}(M))$ is independent of $h$ and $\partial^k_h |_{h=0}B = 0$ for $k \leq p$.

Since $D^2e(v^*)$ is nondegenerate, there exists $\kappa>0$ such that
$\|D^2\tilde{e}(w)\|\ge \kappa^2\|w\|$. Therefore
\begin{equation*}
  \tilde{e}(v^* + w) 
  \ge \tilde{e}(v^*) + \kappa\|w\|^2 
  \ .
\end{equation*}
Thus, for sufficiently small $r>0$, $\tilde{e}(v^* + w)\ge\tilde{e}(v^*)+\delta_r$
with $\delta_r=\kappa^2r^2$
when $\|w\|=r$. Given such $r>0$, choose $h$ sufficiently small so that $|e(v)-
e_h(v)|<\delta_r/3$ in $U$ and so that there exists $v^*_h\in \Phi^{-1}(V_h^{(k)} \cap U)$ with
$|\tilde{e}(v^*)-\tilde{e}(v^*_h)|<\delta_r/3$ (we know such a $v_h^*$ exists by Lemma \ref{lem:dense}). Then $\tilde{e}_h(v^*_h)<\tilde{e}(v^*)+2\delta_r/3$ and
$\tilde{e}_h(v^*+w)>\tilde{e}(v^*+w)-\delta_r/3\ge\tilde{e}(v^*)+2\delta_r/3$ when $\|w\|=r$.
Thus there exists a point inside the intersection of the $r$-ball of $v^*$ and $\Phi^{-1}(V_h^{(k)} \cap U)$ where
$\tilde{e}_h$ is strictly smaller than on the boundary of this intersection. Since
$V_h^{(k)}$ is finite dimensional, this implies the existence of a local minimizer
$\tilde{v}_h^*\in \Phi^{-1}(V_h^{(k)} \cap U)$ of $\tilde{e}_h|_{ \Phi^{-1}(U\cap {V_h^{(k)}}) }$. By
Theorem~\ref{thm:reduction}, $\tilde{v}_h^*$ is also a local minimizer of $\tilde{e}_h$ on $U$.
As $h\rightarrow 0$, we can let $r\to 0$,  and
the local minima $\tilde{v}_h^*$ will approach $v^*$. In addition, 
$v_h^* = \Phi^{-1}(\tilde{v}^*_h)$ is a local minimum for $e_h$ which must also approach $v^*$.
This proves convergence as $h \to 0$.


We now address the order of accuracy.
As $v_h^*$ is a critical point of $e_h$, we have that $De_h(v_h^*) = 0$, where $De_h$ is the Frech\'et derivative of $e_h$.
If we define $\tilde{w} = \tilde{v}^* - \tilde{v}^*_h$
then we observe
\begin{align*}
  0 = D \tilde{e}_h(\tilde{v}_h^* ) &= D \tilde{e}(\tilde{v}_h^*) - D \tilde{A}(\tilde{v}_h^*) h^p - D \tilde{B}(\tilde{v}_h^*,h) \\
  &= D \tilde{e}(v^*) + D^2 \tilde{e}(v^*)( \tilde{w},\cdot) - D \tilde{A}(\tilde{v}_h^*) h^p - D \tilde{B}(\tilde{v}_h^*,h)\ .
\end{align*}
Moreover $D \tilde{e}(v^*) = 0$ because $v^*$ is a critical point of $\tilde{e}$.
Thus we observe
\begin{align*}
	D^2 \tilde{e}(v^*)( \tilde{w},\cdot) =  D \tilde{A}(\tilde{v}_h^*) h^p + D \tilde{B}(\tilde{v}_h^*,h)
\end{align*}
We can observe that the Hessian $D^2 \tilde{e}(v^*)$ is related to the Hessian $D^2 e(v^*)$ via 
pre-composition by the bounded linear operator $D\Phi(v^*)$.
Thus $D^2 \tilde{e}(v^*)$ is a bounded operator from $U$ into $V^*$.
By assumption, this Hessian is non-degenerate, and thus invertible. 
Thus we observe $\tilde{w} = [D^2 \tilde{e}(v^*)]^{-1} \cdot \left( D \tilde{A}(\tilde{v}_h^*) h^p + D \tilde{B}(\tilde{v}_h^*,h) \right)$.
In other words, $v^* = \tilde{v}^*_h + O(h^p)$.
So there exists an $O(1)$ function $C(v)$ and a function $F(v,h)$ such that $v^* = v_h^* + C(v) h^p + F(v,h)$ where $d^k F / d h^k = 0$ for $k \leq p$.
Thus we find
\begin{align*}
	v^* &= \Phi^{-1}(v^*) = \Phi^{-1}(\tilde{v}^*_h + C(v) h^p + F(v,h) ) \\
	&= \Phi^{-1}( \tilde{v}_h^*) + D\Phi^{-1}(\tilde{v}_h^*) \cdot \left( C(v) h^p + F(v,h) \right) + O(h^{2p})\\
	&= v_h^* + O(h^p).
\end{align*}
\end{proof}

The assumption that the Hessian of the curve energy be non-degenerate
is generally difficult to check in practice.
We can still invoke this theorem in specific examples
because the minimizer of
\begin{align*}
  E(v ) = \frac{1}{2} \int_0^1  \| v(t) \|_V^2 dt \ ,\ v(t) = \evol^t_{EP}(v)
\end{align*}
is $v^* = 0$, and the Hessian is identical to the inner product on $V$.
We can view all relevant examples as perturbations of this curve energy,
 and use the continuity of the Hessian operator to invoke Theorem \ref{thm:order}.

Setting $G = G_V$, $Q = Q^{(k)} = G_V / G^{(k)}_{\Lambda_h}$ in Algorithm 1, we obtain
the special case of Algorithm 1 given by
\begin{center}
\fbox{
    \parbox{0.9\textwidth}{
    	{\bf Algorithm 2:}  
        \begin{enumerate}
          \item Solve for $(q(t),p(t)) \in T^{\ast}Q^{(k)}$ in \eqref{eq:Hamilton}.
          \item If necessary, set $u(t) = K * J( q(t),p(t))$
          and obtain $\varphi_t \in G_V$ through the reconstruction formula $\dot{\varphi_t}(x) = u (\varphi(x))$ for all $x \in M$.
          \item Evaluate the cost function, and backward compute the adjoint equations to compute the gradient of the cost
          	function with respect to a new initial condition.
	 \item If the gradient is below some tolerance, $\epsilon$, then stop.
	 	Otherwise use the gradient to create a new initial condition and return to step 1.
        \end{enumerate}
    }
}
\end{center}
Step (1) concerns solving a system of Hamiltonian equations on the space of $k$th order jet-particles.
Here the configuration is given by numbers $[q_i]\indices{^\alpha_\beta}$ where $\alpha \in \{ 1,\dots,d\}$, $\beta$ is a multi-index on $\R^d$ of degree less than or equal to $k$, and $i \in \{ 1,\dots,N\}$
where $N$ is the number of jet-particles.  The Hamiltonian can usually be computed in closed form if the Green's kernel, $K$, of the RKHS $V$ is known in closed form.
For example if $k=0$, we obtain traditional particles (i.e. the multi-index $\beta$ is of degree $0$) and the momentum map is given by
\begin{align*}
	J(q,p) = \sum_{i=1}^N p_i \otimes \delta_{q_i}
\end{align*}
where $(q_i,p_i) \in T^*\R^d \cong \R^d \times \R^d$, and
where $p_i \otimes \delta_{q_i}$ is a \emph{one-form density} representation of the element of $V^*$ given by $ \langle p_i \otimes \delta_{q_i} , w \rangle := p_i \cdot w(q_i)$
for all $w \in V$.
The Hamiltonian is then given by
\begin{align*}
	H(q,p) = \frac{1}{2}  \sum_{i,j =1}^N (p_i \cdot p_j) K(q_i - q_j)\ ,
\end{align*}
and Hamilton's equations are
\begin{align*}
	\dot{q}_i = \sum_{j=1}^N p_j K(q_i - q_j) \quad,\quad \dot{p}_i = - \sum_{j=1}^N (p_i \cdot p_j) DK(q_i - q_j) \ .
\end{align*}
The velocity field in step (2) is
\begin{align*}
	u(x) = \sum_{j=1}^N p_j K(x - q_j) \ .
\end{align*}
Steps (3) and (4) are obtained through formulas of comparable complexity.

For arbitrary $k$, the relationship between the momenta $(q,p) \in T^*Q^{(k)}$ and
the velocity field $u$ invokes the multivariate Fa\`a di Bruno formula, and so the algebra rises in complexity very quickly \cite{ConstantineSavits1996}.
In the case of $k=2$, the Hamiltonian is substantially more complex but still tractable (see \eqref{eq:Hamiltonian} in Appendix \ref{sec:eom}).
However, beyond $k=2$, a symbolic algebra package is advised. 
The examples we will be considering in this paper concern the case where $d = 2$, and  $k=0$ or $2$.
By Theorem \ref{thm:order}, we should be able to approximate minimizers of $E$
with $O(h^{2})$ and $O(h^4)$ accuracy respectively in the $V$-norm.

\section{Numerical Results}
In this section we, will illustrate the deformations encoded by jet-particles of various orders and
numerically verify the $O(h^{d+k})$ convergence
rate of the matching functional approximation for $k=0,2$. In addition, we will show that the second order approximation 
$F_h^{(k)}$, $k=2$ allows matching of
second order image features.
We use simple examples to describe the different capabilities of higher order jet-particles
over lower order jet-particles.
We do this by illustrating structures that cannot be matched with low numbers of regular zeroth order landmarks,
but can still be matched successfully with first and second order jet-particles. These effects
imply more precise matching of small scale features on larger images where more spatial derivatives can be leveraged.
In all examples, the jet-particles will be positioned on regular grids in the image
domains.

We do not pursue approximations with $k>2$ due to the difficulties of taking very high order derivatives of images and
kernel functions. In addition, code complexity rises rapidly as the order increases beyond 2.

\subsection{Implementation}
The results are obtained using the \emph{jetflows} code available at
\url{http://www.github.com/stefansommer/jetflows}. The package include scripts
for producing the figures displayed in this section. The implementation follows
Algorithm~2. The flow equations that are
given in explicit form in Appendix~\ref{sec:eom} 
and the adjoint equations that are given in explicit form in
Appendix~\ref{sec:adjoint} are
numerically integrated using SciPy's odeint solver
(\url{http://scipy.org}). Both equation systems require a series of tensor
multiplications.
The optimization is
performed with a quasi-Newton BFGS optimizer. The algorithm uses isotropic Gaussian
kernels. The images to be matched are pre-smoothed with a
Gaussian filter, and image derivatives are computed as analytic gradients of
B-spline interpolations of the smoothed images.
\begin{figure}[t]
  \begin{center}
    \subfigure[$f(x,y)=x+y$]{
      \begin{minipage}{0.30\columnwidth}
        \includegraphics[width=1.0\columnwidth]{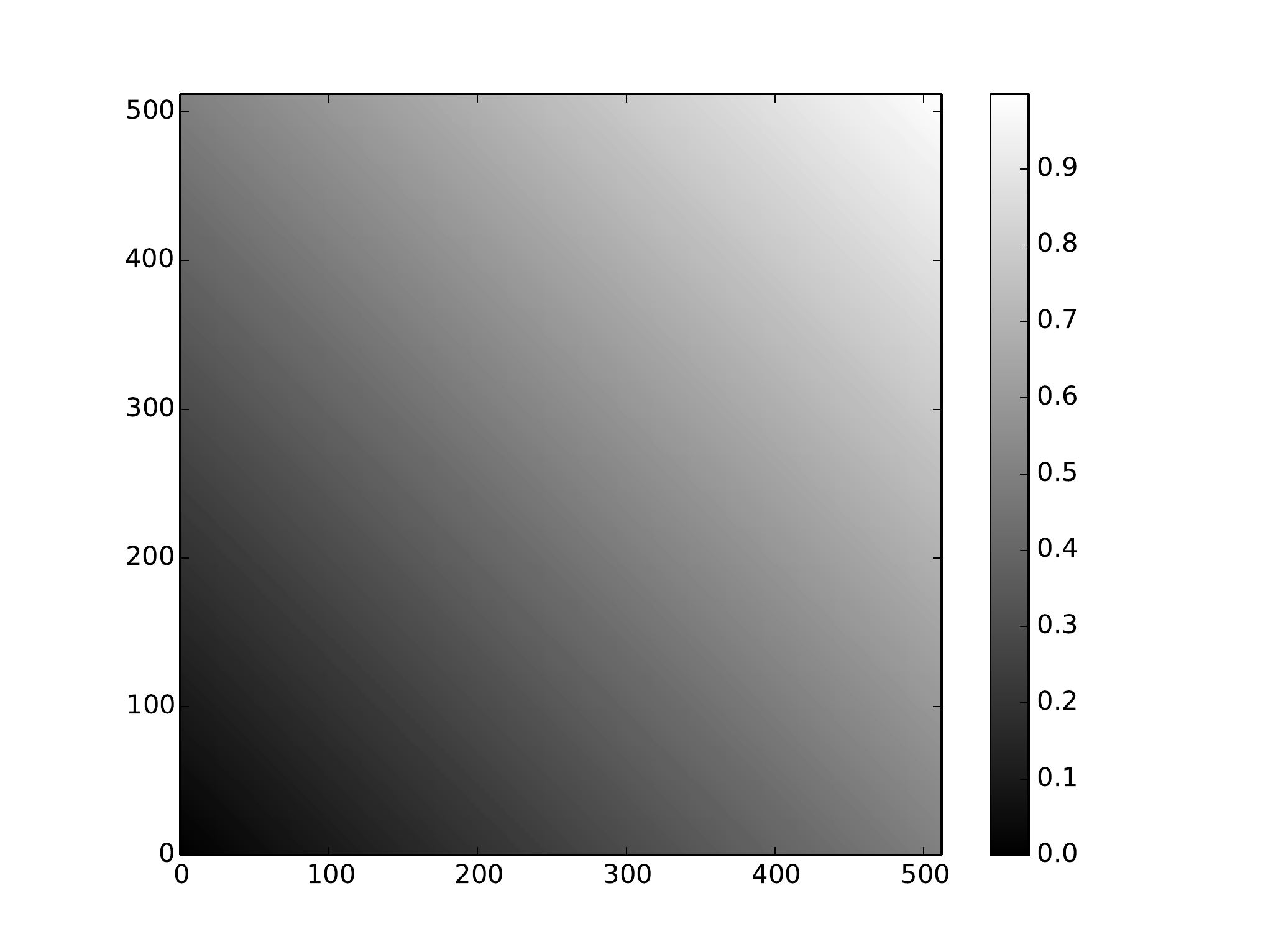}
        \includegraphics[width=.9\columnwidth]{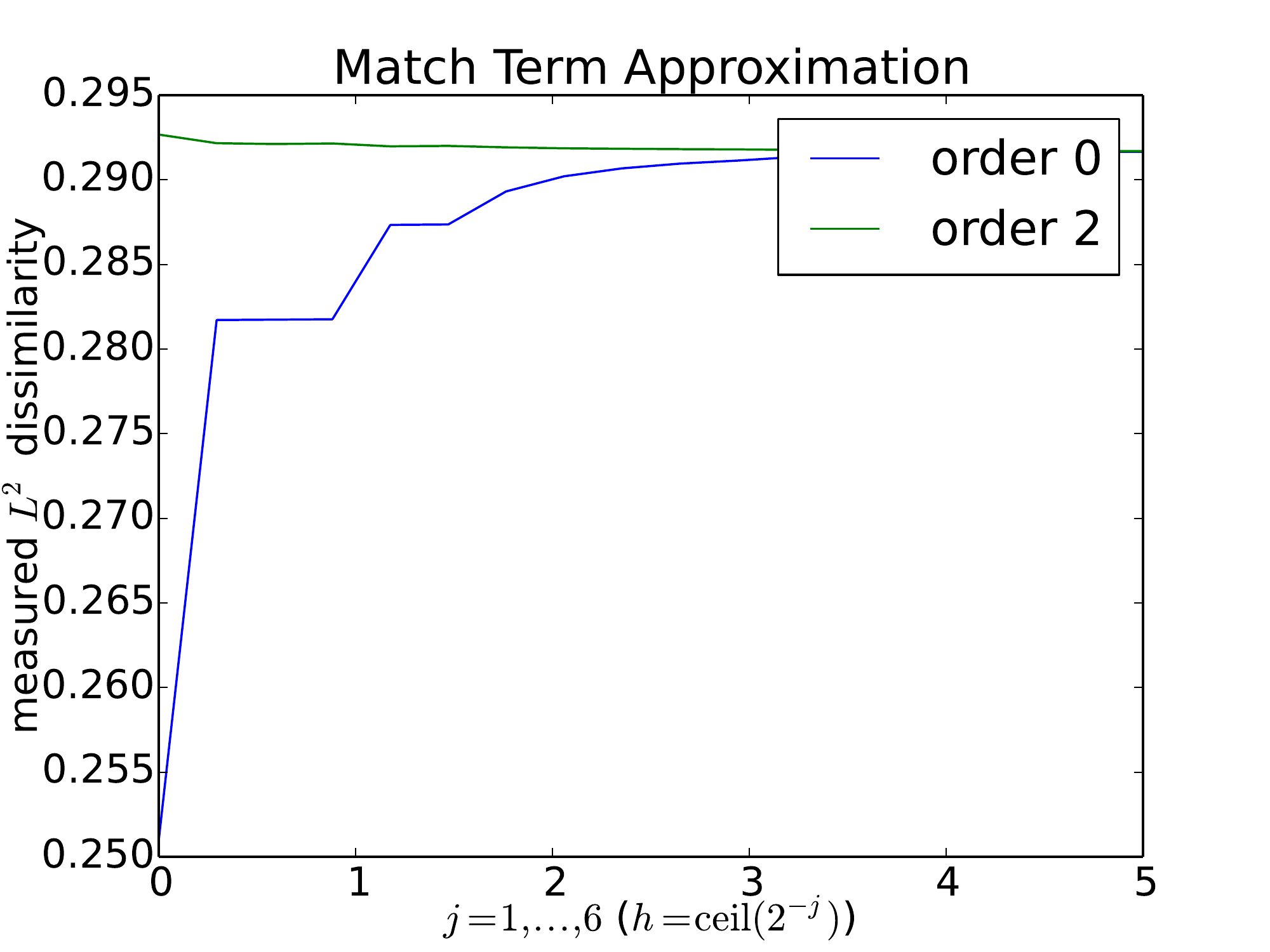}
        \includegraphics[width=.9\columnwidth]{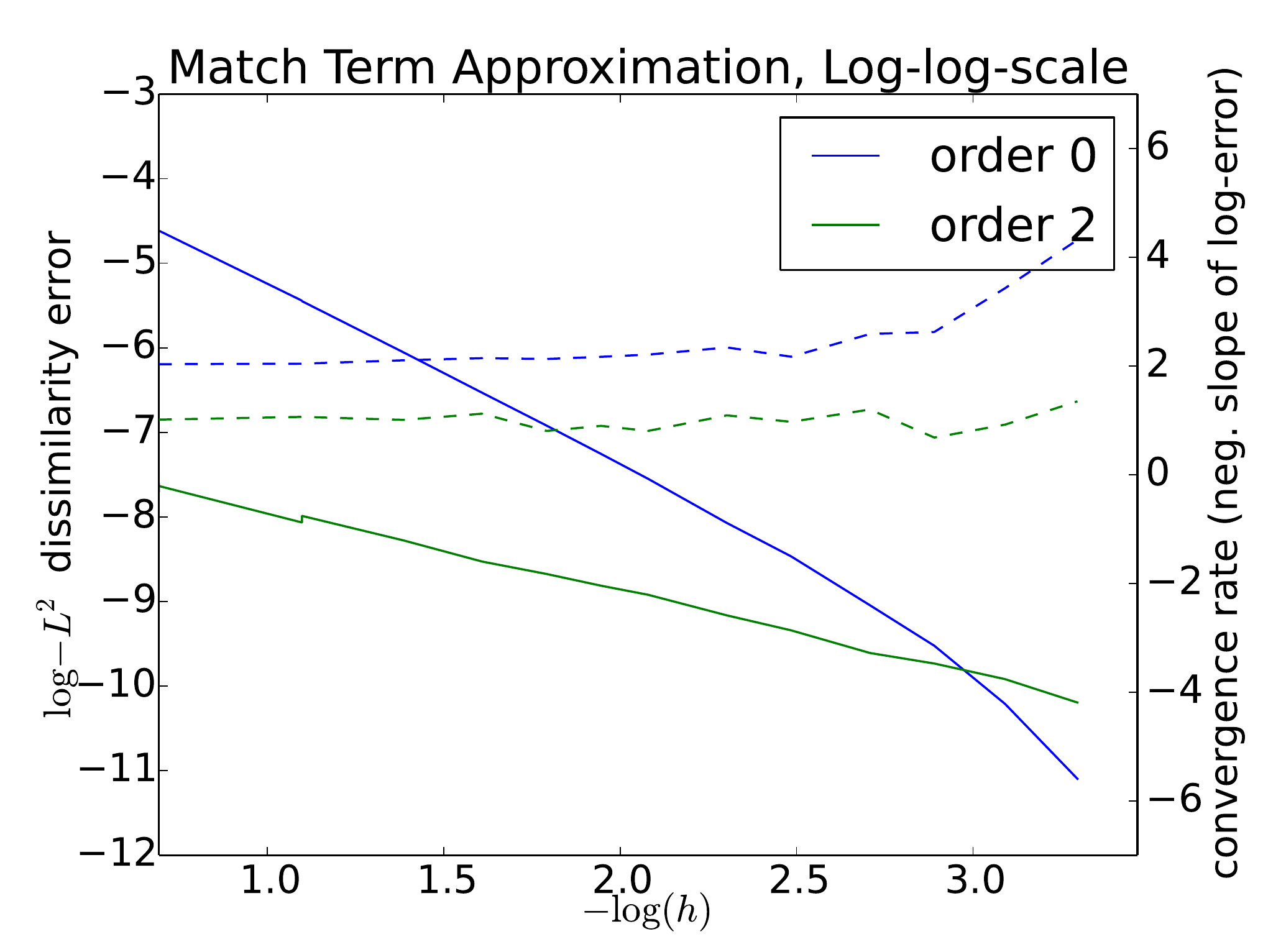}
      \end{minipage}
    }
    \subfigure[$f(x,y)=(x+y)^2$]{
      \begin{minipage}{0.30\columnwidth}
        \includegraphics[width=1.0\columnwidth]{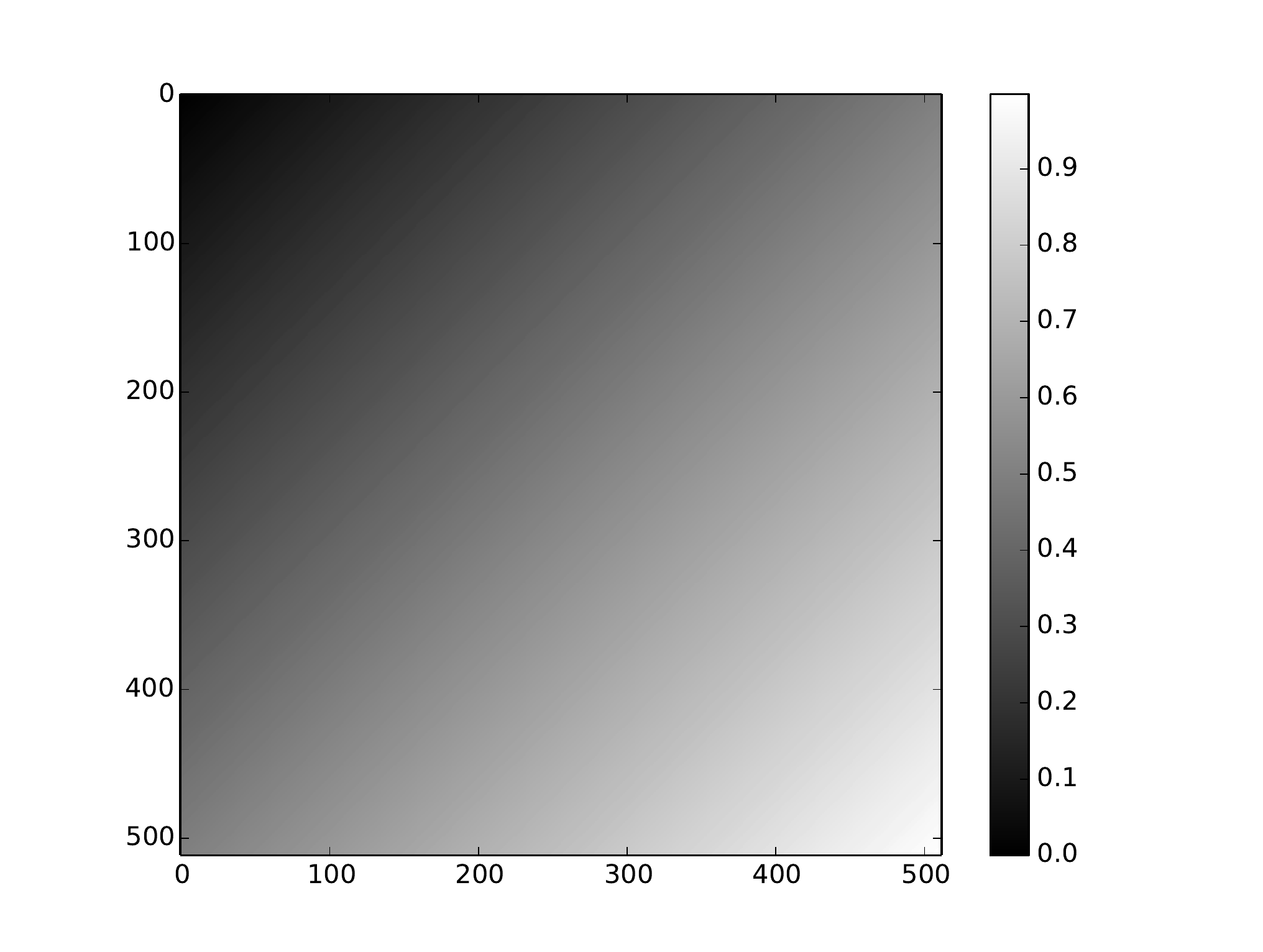}
        \includegraphics[width=.9\columnwidth]{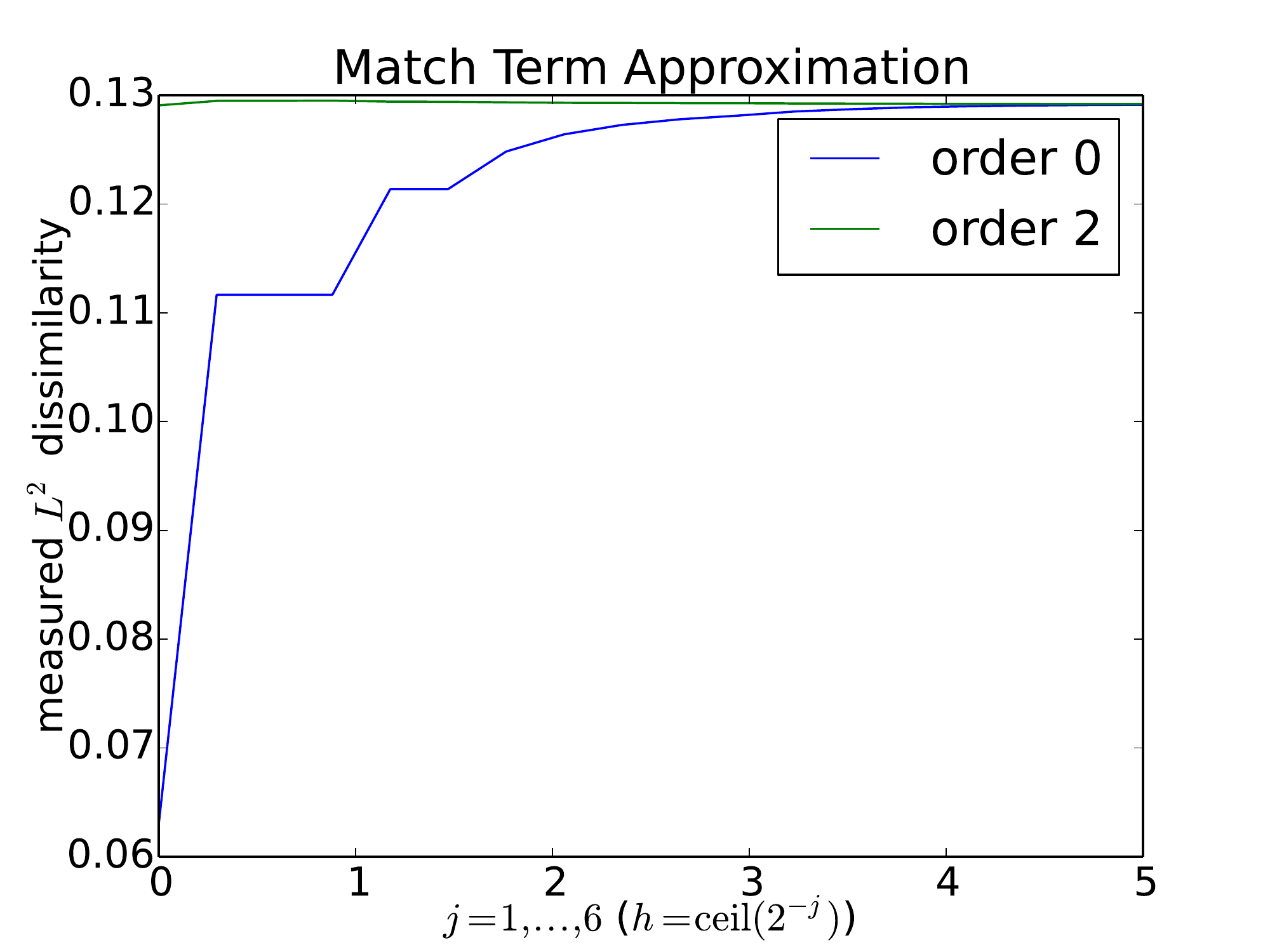}
        \includegraphics[width=.9\columnwidth]{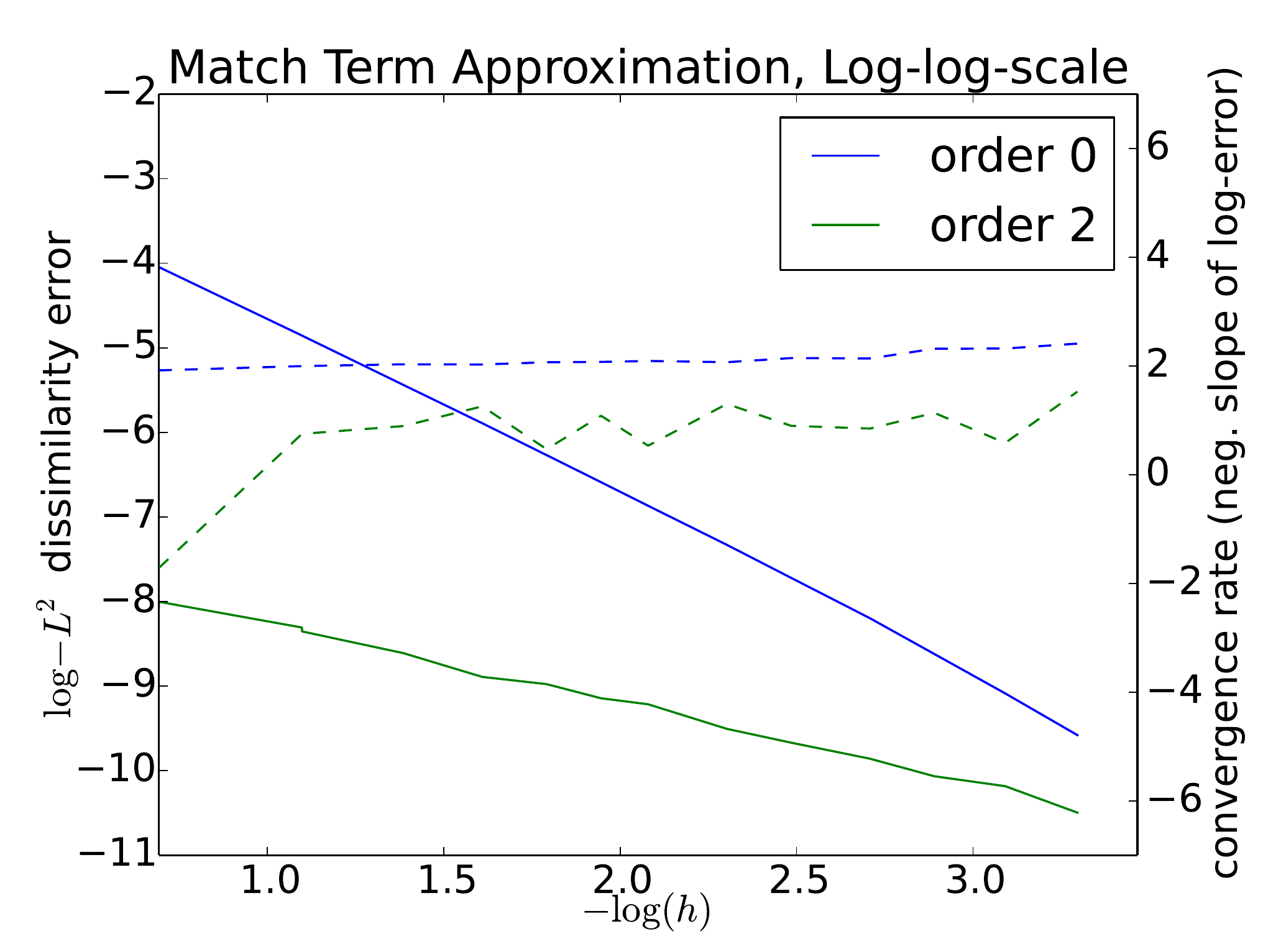}
      \end{minipage}
    }
    \subfigure[$f(x,y)=\sin(6\pi x)+x^2$]{
      \begin{minipage}{0.30\columnwidth}
        \includegraphics[width=1.0\columnwidth]{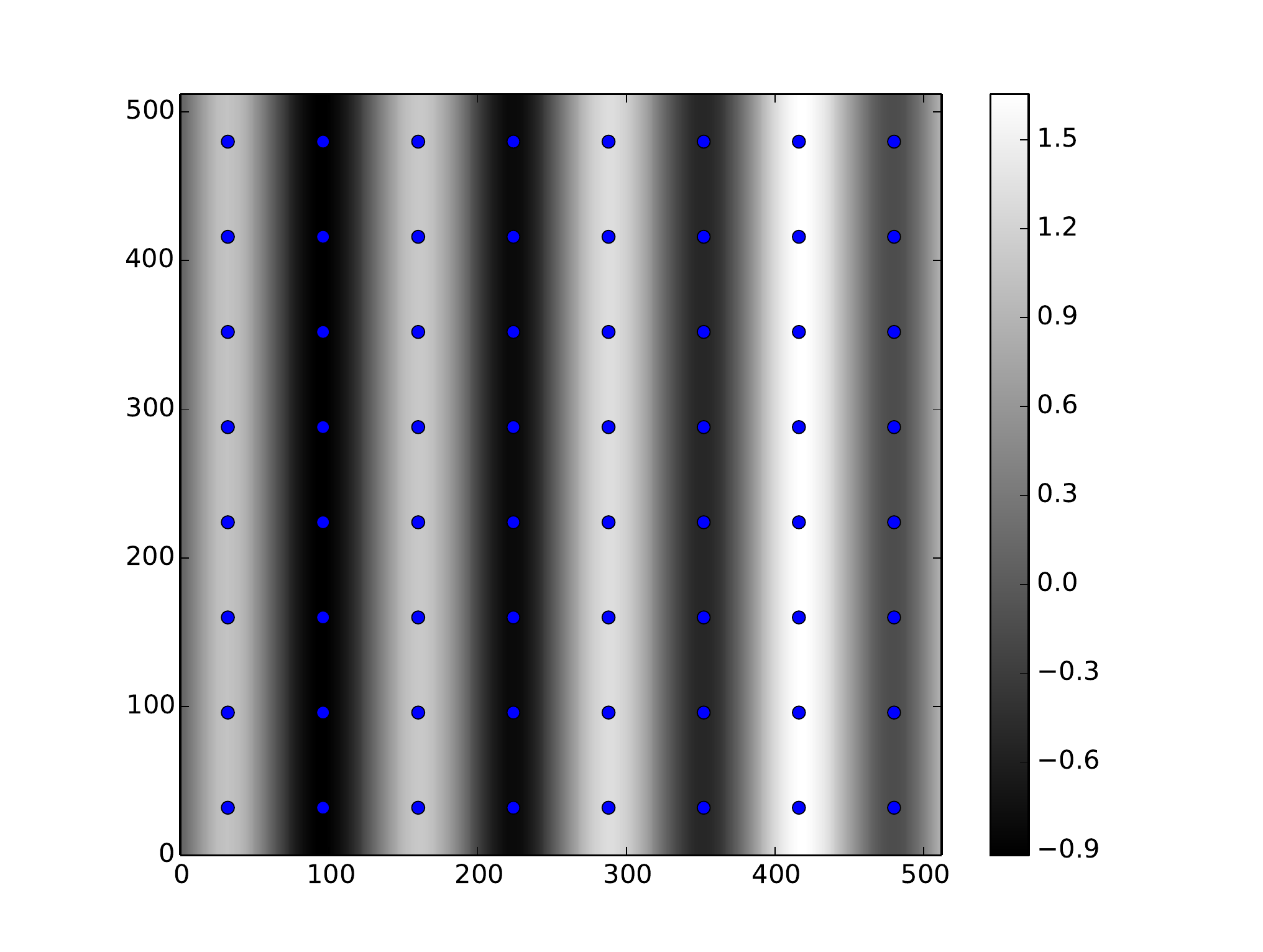}
        \includegraphics[width=.9\columnwidth]{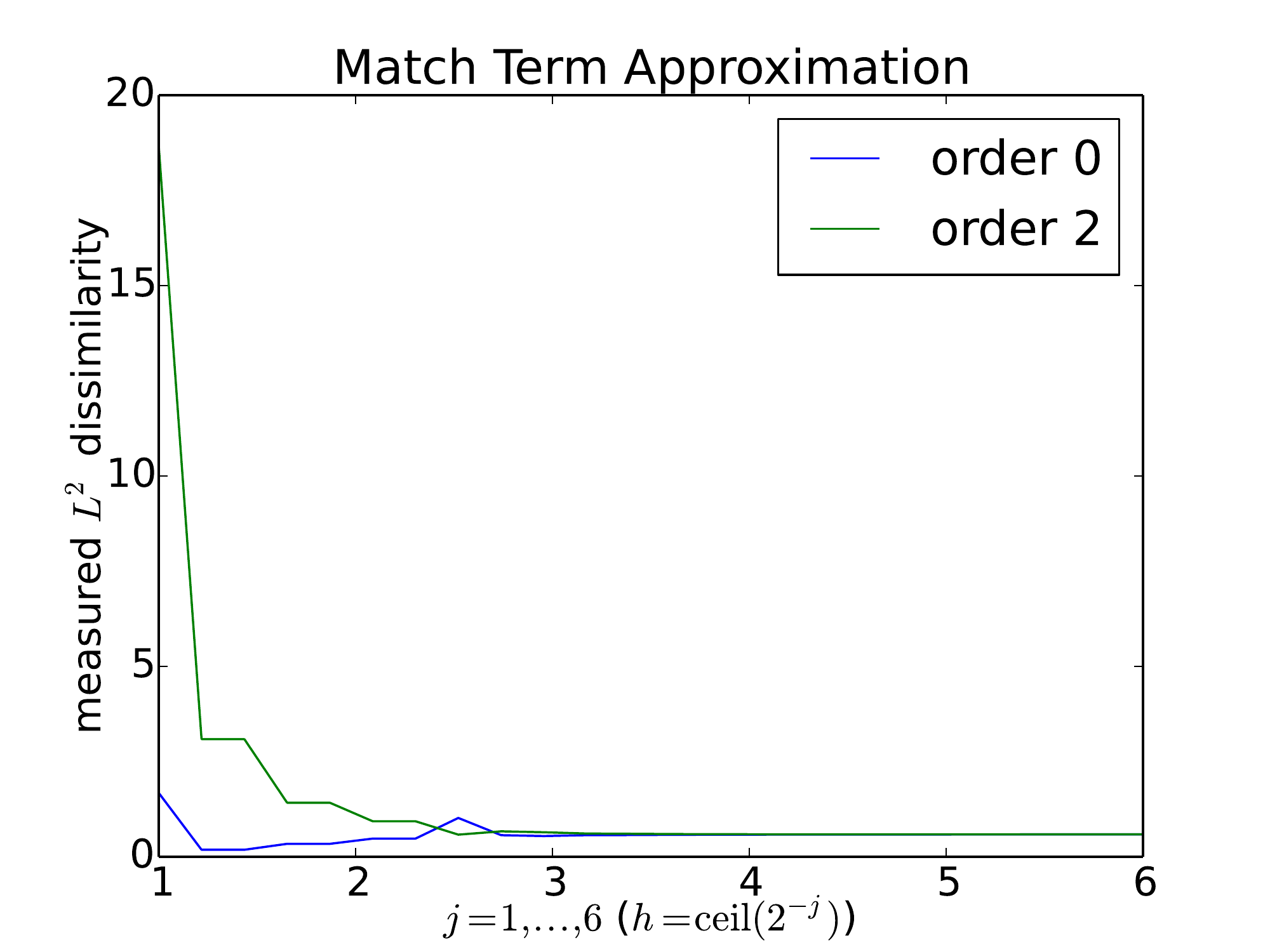}
        \includegraphics[width=.9\columnwidth]{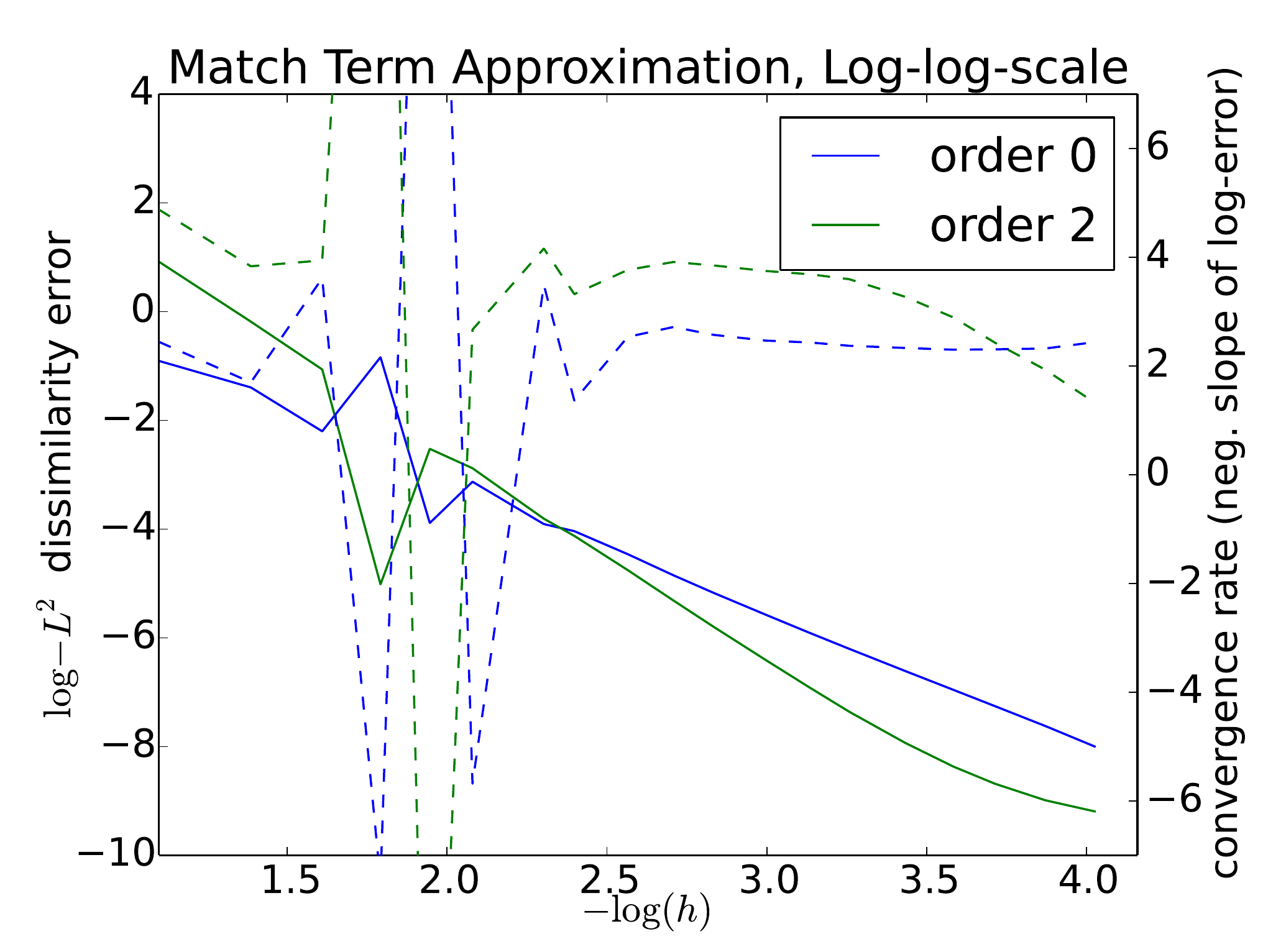}
      \end{minipage}
    }
  \end{center}
  \caption{Convergence of matching functional $F_h^{(k)}$, $k=0,2$. Top row:
  (a) linear, (b) quadratic, and (c) non-polynomial images. Lower rows, horz. axis:
  decreasing $h$ (increasing nr. of sample points); vert. axis: $F_h^{(k)}$ (solid, left axis) and
convergence rate (dashed, right axis). With linear and quadratic images, the error is vanishing with
$k=2$ and using only one sample point. Average convergence rates, $k=0$: quadratic; $k=2$:
quartic as expected. (c, top row) sample points for $h=2^{-3}$ ($2^3$ sample points per axis).}
  \label{fig:simtest}
\end{figure}

\subsection{Jet Deformations}
Figure~\ref{fig:shots} (page~\pageref{fig:shots}) shows the deformations encoded by zeroth, first and
second order jet-particles on initially square grids. Note the locally affine
deformations arising from the zeroth and first order jet-particles. Up to rotation of the
axes, the three first order examples in the figure constitute a basis of the 4 dimensional space of first order jet-particles with fixed lower-order components.
Likewise, up to rotation, the three second order examples constitute a basis for the 6 dimensional
space of second order jet-particles.

\subsection{Matching Functional Approximation}
We here illustrate and test the convergence rate of the matching functional
approximations.
In Figure~\ref{fig:simtest} (page \pageref{fig:simtest}), the approximations $F_h^{(p)}$
are compared for $p=0,2$ and varying grid sizes on three synthetic
images supported on the unit square. The first two images (a,b) are generated
by first and second order polynomials, respectively, while the last image (c)
is generated by a trigonometric function. 
A truncated Taylor expansion can therefore only approximate the image (c). The second order approximation $F_h^{(2)}$ models $F$ locally with a
second order polynomial, and it is thus expected that the error should vanish on
the images (a,b). As the mesh width $h$ decreases, we expect to observe 
$O(h^2)$ convergence rate for the zeroth order approximation
$F_h^{(0)}$ on all three images. Likewise, we expect a convergence rate of
$O(h^4)$ for $F_h^{(2)}$ on image (c).

In accordance with these expectations, we see the vanishing error for
$F_h^{(2)}$ on (a,b) and decreasing error on (c) (lower row, solid green lines).
The non-monotonic convergence seen on (c) is a result of the polynomial
approximation being integrated over a compact domain. The zeroth order
approximation $F_h^{(0)}$ likewise decreases with $h^2$ convergence rate (lower
row, dashed blue lines). The convergence rate of $F_h^{(2)}$ on image (c) stabilizes at approximately 
$h^4$ until it decreases due to numerical errors introduced when the error approaches
the machine precision.

\begin{figure}[t!]
  \begin{center}
    \subfigure[]{
        \includegraphics[width=.22\columnwidth,trim=175 150 240 150,clip]{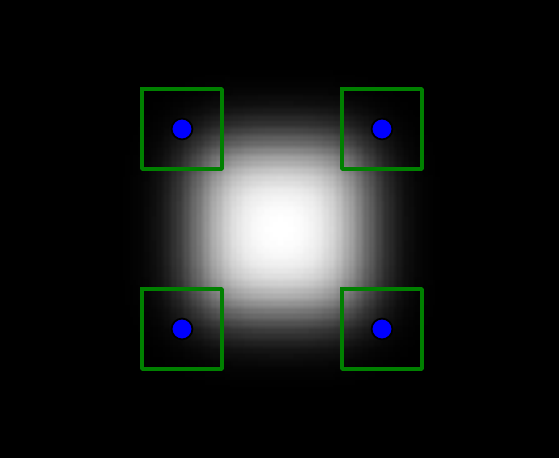}
    }
    \subfigure[]{
        \includegraphics[width=.22\columnwidth,trim=175 150 240 150,clip]{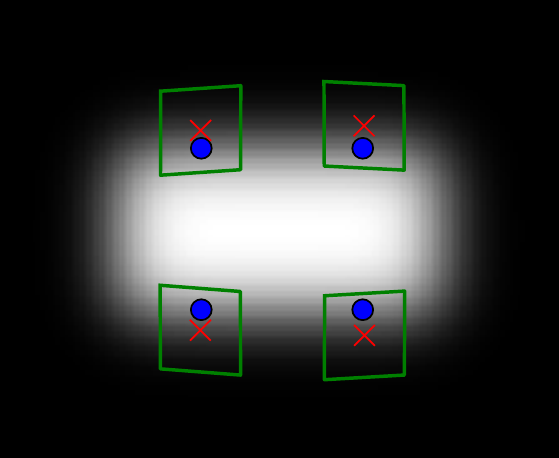}
    }
    \subfigure[]{
        \includegraphics[width=.22\columnwidth,trim=175 150 240 150,clip]{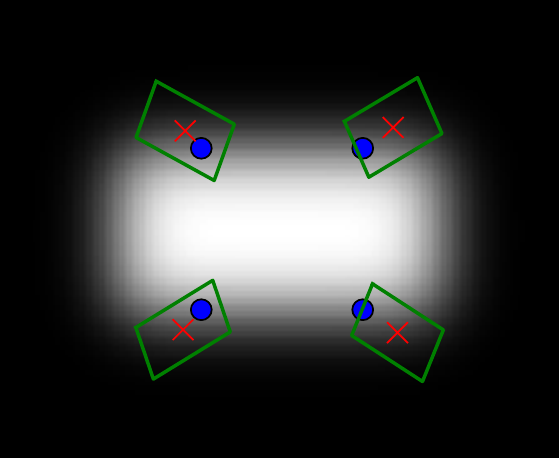}
    }
    \subfigure[]{
        \includegraphics[width=.22\columnwidth,trim=175 150 240 150,clip]{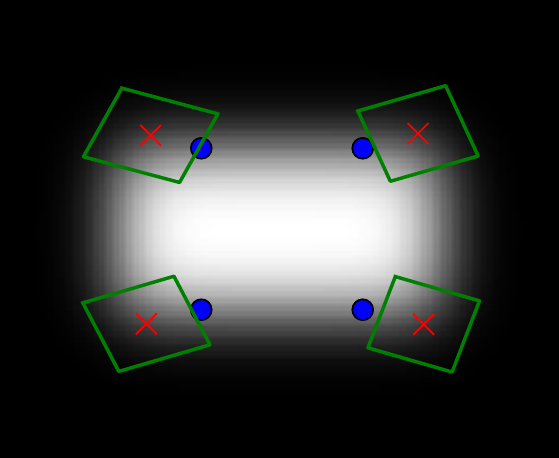}
    }
    \newline
    \subfigure[]{
        \includegraphics[width=.22\columnwidth,trim=90 70 155 80,clip]{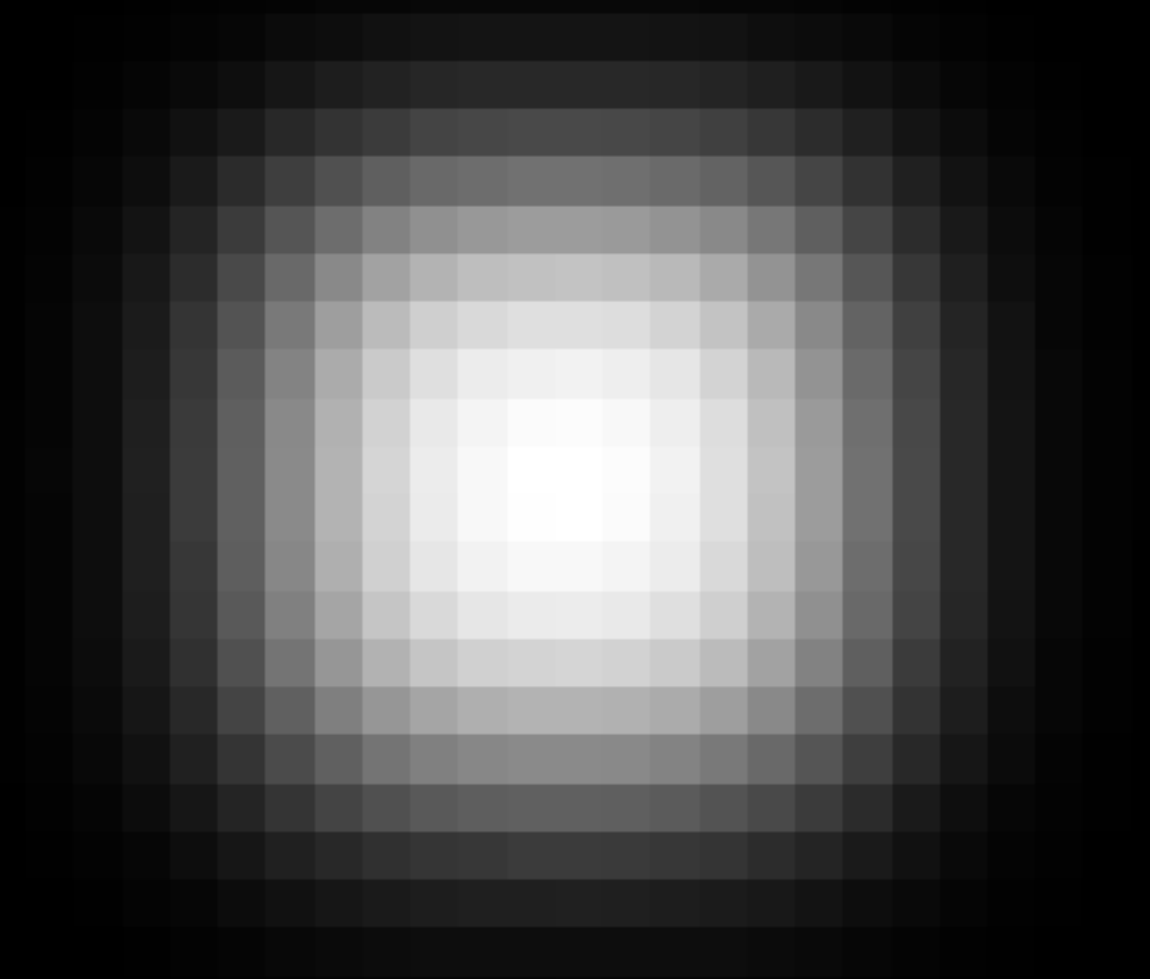}
    }
    \subfigure[]{
        \includegraphics[width=.22\columnwidth,trim=90 70 155 80,clip]{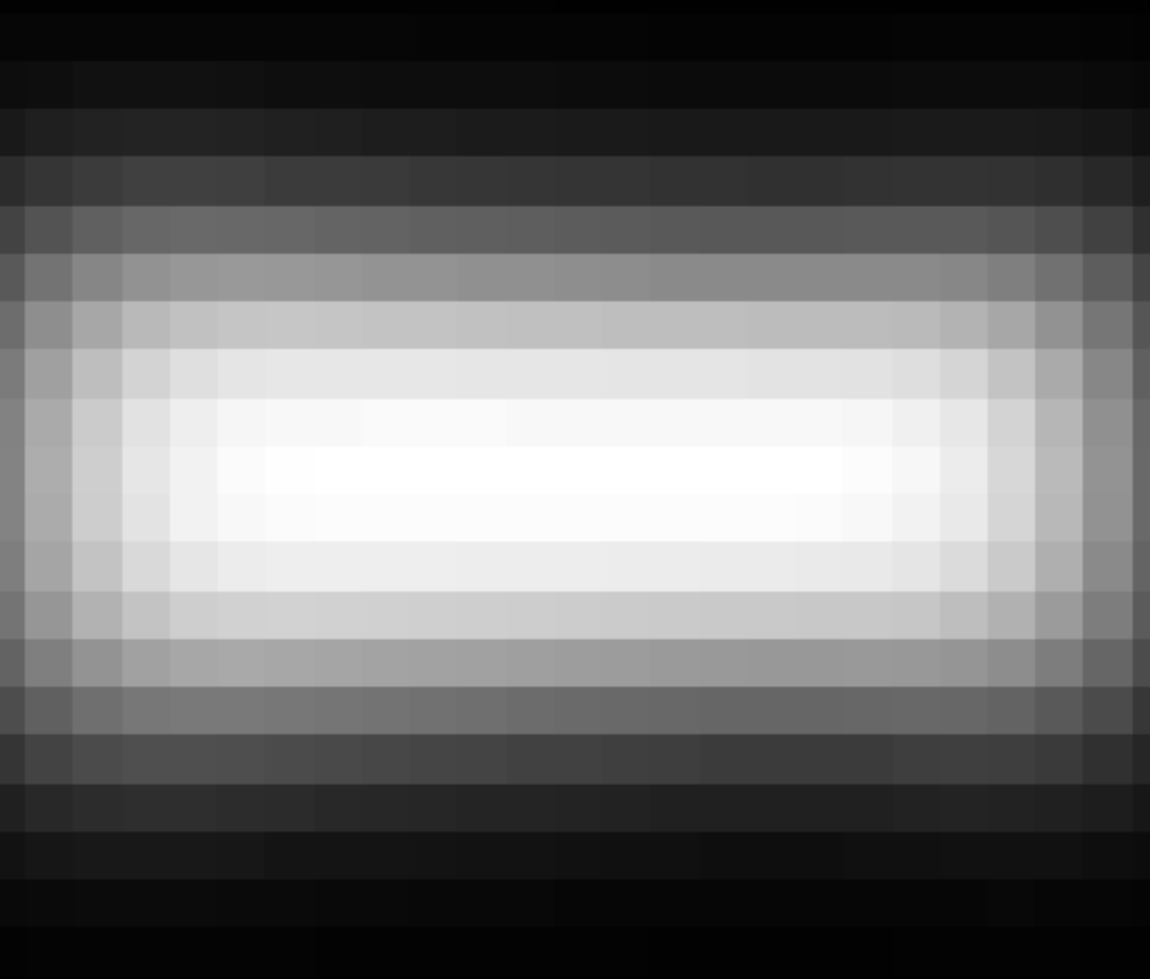}
    }
    \subfigure[]{
        \includegraphics[width=.22\columnwidth,trim=90 70 155 80,clip]{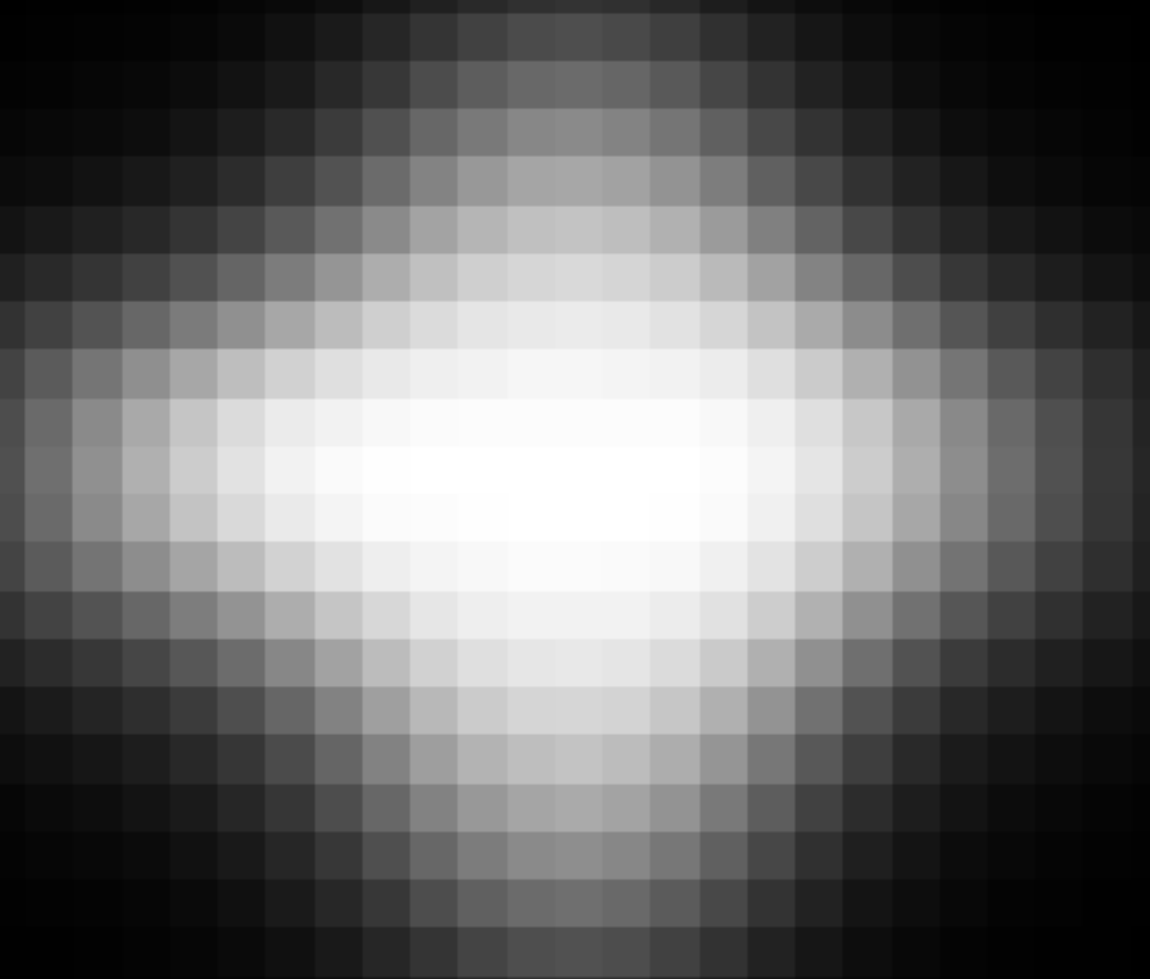}
    }
    \subfigure[]{
        \includegraphics[width=.22\columnwidth,trim=90 70 155 80,clip]{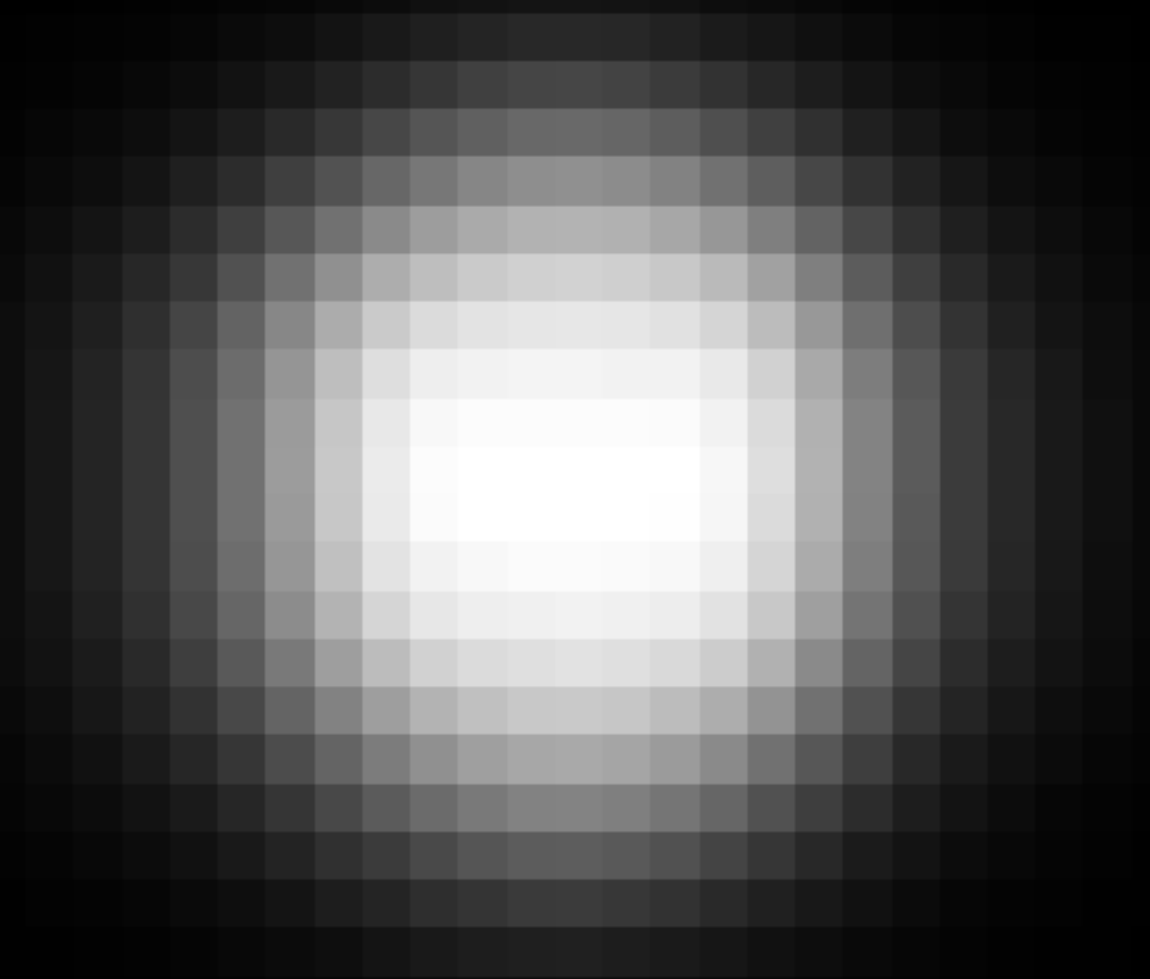}
    }
    \newline
    \subfigure[]{
        \includegraphics[width=.22\columnwidth,trim=90 50 75 50,clip]{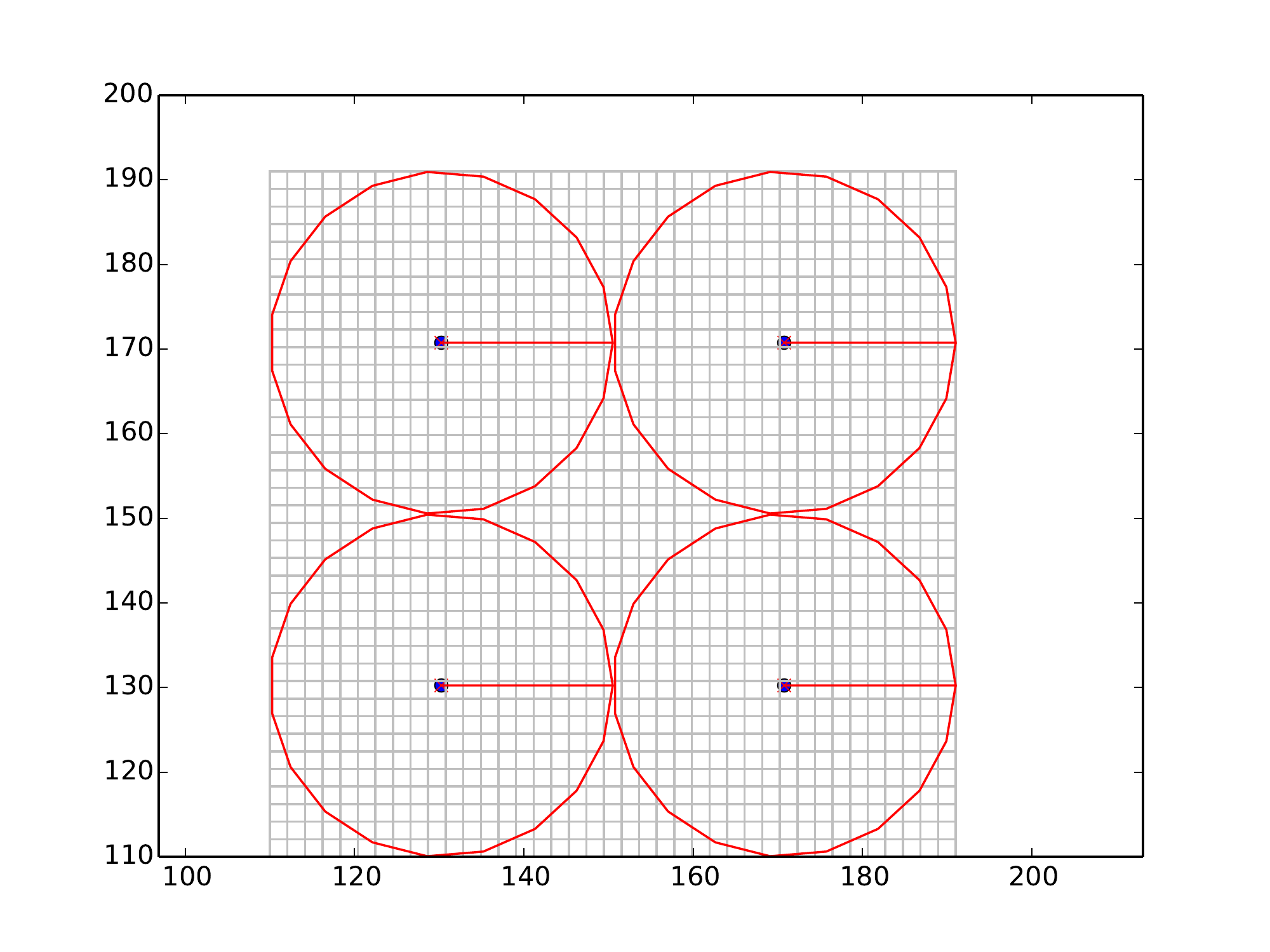}
    }
    \subfigure[]{
        \includegraphics[width=.22\columnwidth,trim=90 50 75 50,clip]{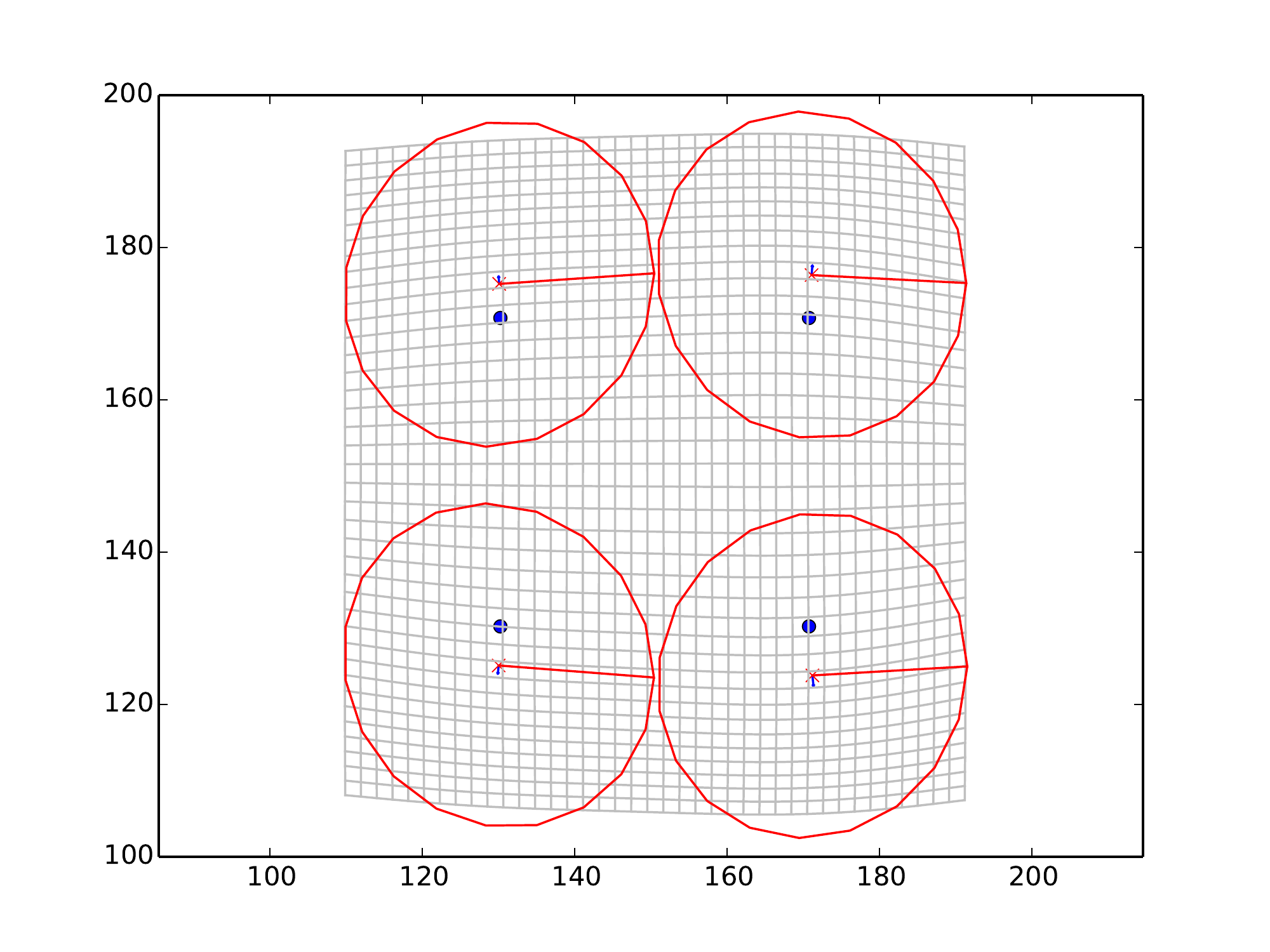}
    }
    \subfigure[]{
        \includegraphics[width=.22\columnwidth,trim=90 50 75 50,clip]{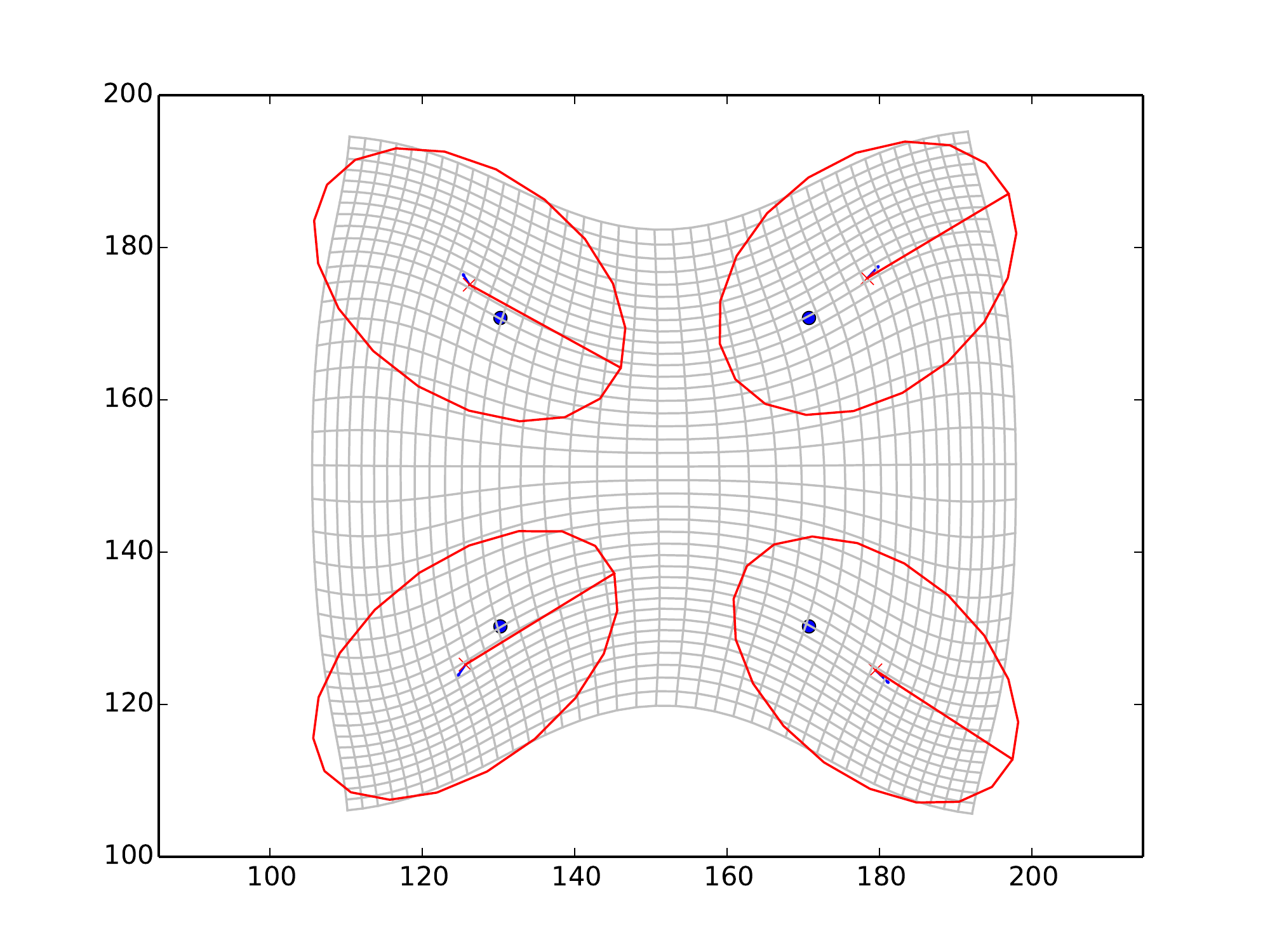}
    }
    \subfigure[]{
        \includegraphics[width=.22\columnwidth,trim=90 50 75 50,clip]{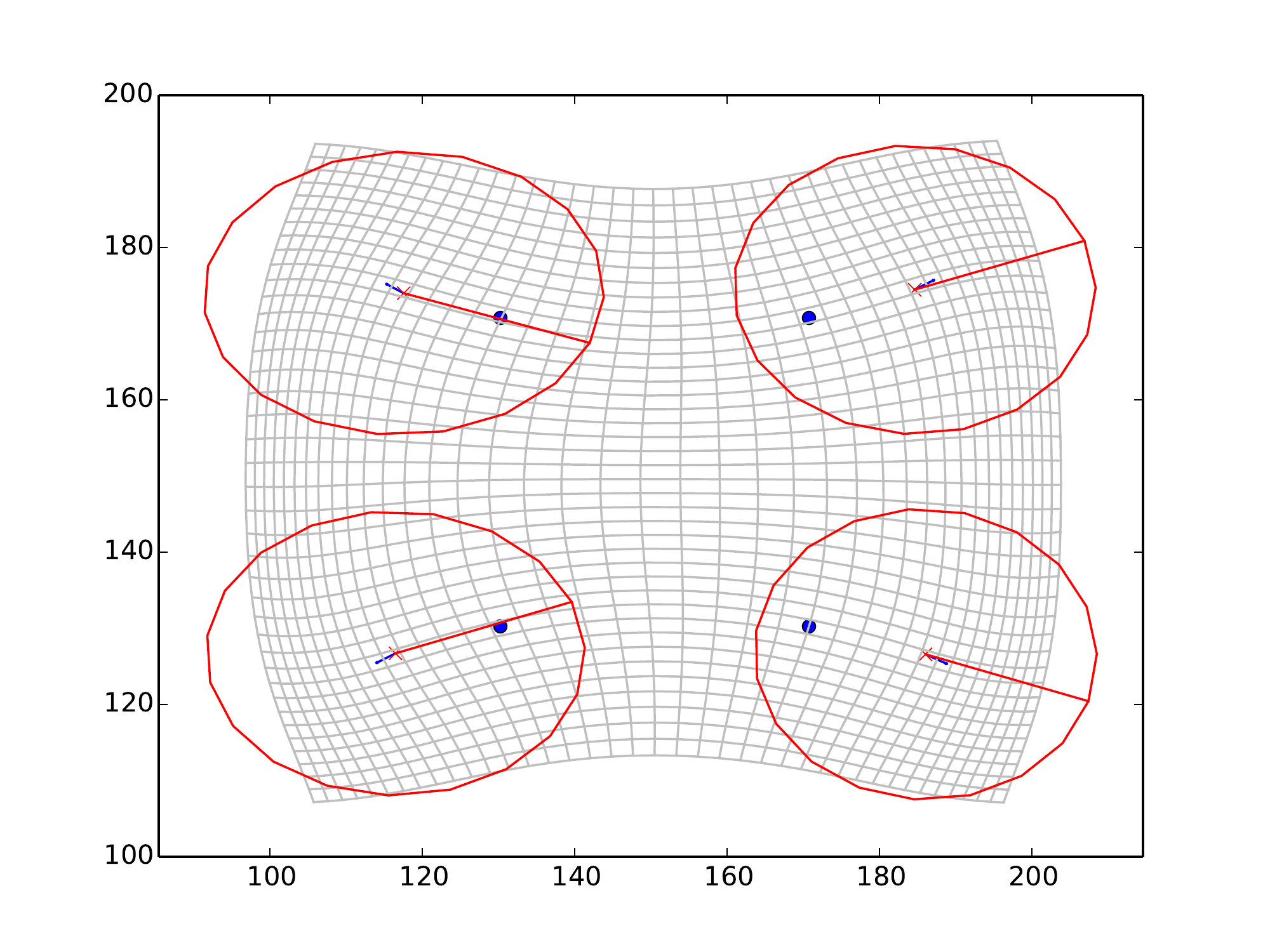}
    }
  \end{center}
  \caption{Matching moving images (b-d) to fixed image (a) using four jet-particles (blue points). Enlarged fixed image and moving images after warping (e-h). Corresponding deformations of an initially square grid (i-l). (b/f/j) Order 0; (c/g/k) order 1; (d/h/l) order 2. Red crosses mark location of jet-particles in moving images after matching, green boxes deformed by the warp Jacobian at the particle positions. Moving images at the red crosses should match fixed image at blue dots; second row images should match the fixed image (a/e). 
  }
  \label{fig:barimages1}
\end{figure}

\begin{figure}[t]
  \begin{center}
    \subfigure[]{
        \includegraphics[width=.22\columnwidth,trim=175 150 240 150,clip,angle=0]{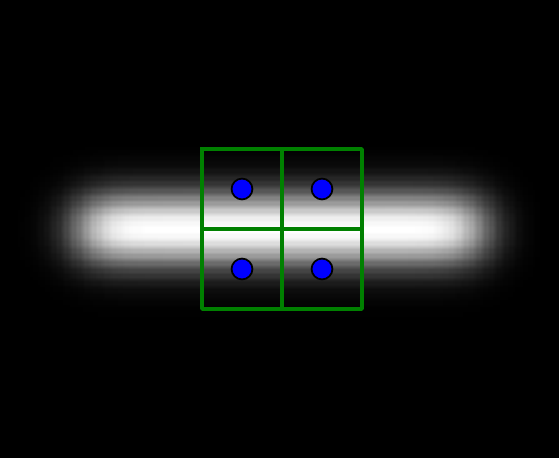}
    }
    \subfigure[]{
        \includegraphics[width=.22\columnwidth,trim=175 150 240 150,clip,angle=0]{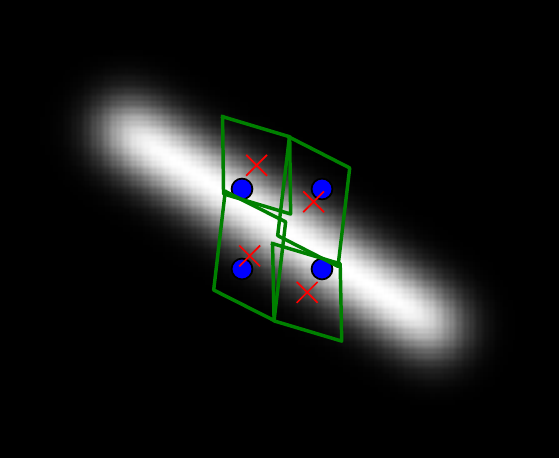}
    }
    \subfigure[]{
        \includegraphics[width=.22\columnwidth,trim=175 150 240 150,clip,angle=0]{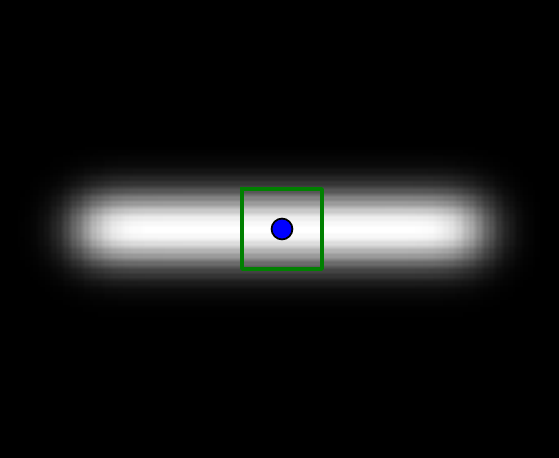}
    }
    \subfigure[]{
        \includegraphics[width=.22\columnwidth,trim=175 150 240 150,clip,angle=0]{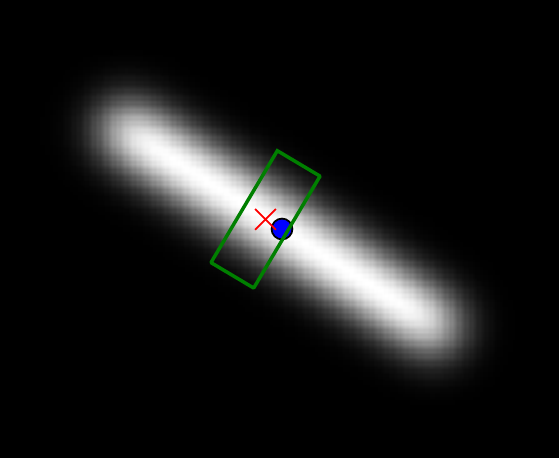}
    }
    \newline
    \subfigure[]{
        \hspace{.20cm}
        \includegraphics[width=.17\columnwidth,trim=135 130 200 130,clip,angle=0]{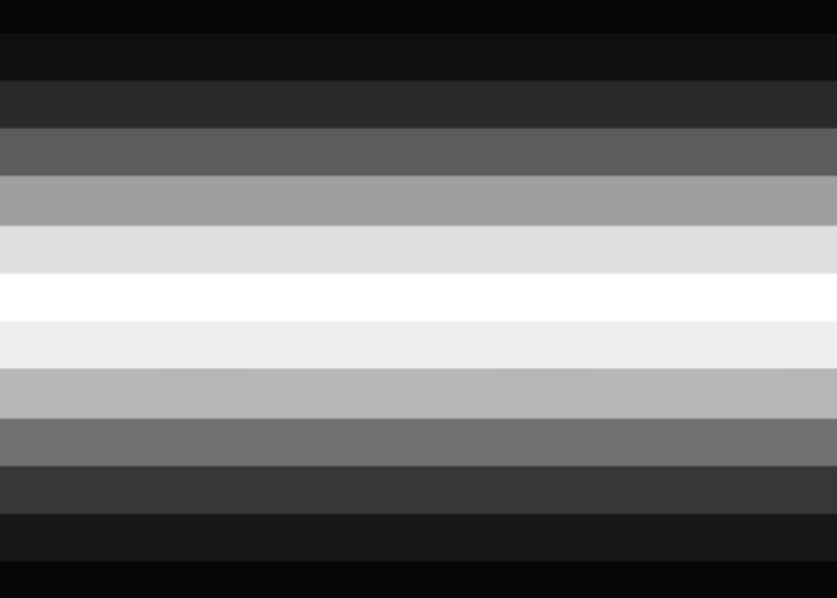}
        \hspace{.20cm}
    }
    \subfigure[]{
        \hspace{.20cm}
        \includegraphics[width=.17\columnwidth,trim=135 130 200 130,clip,angle=0]{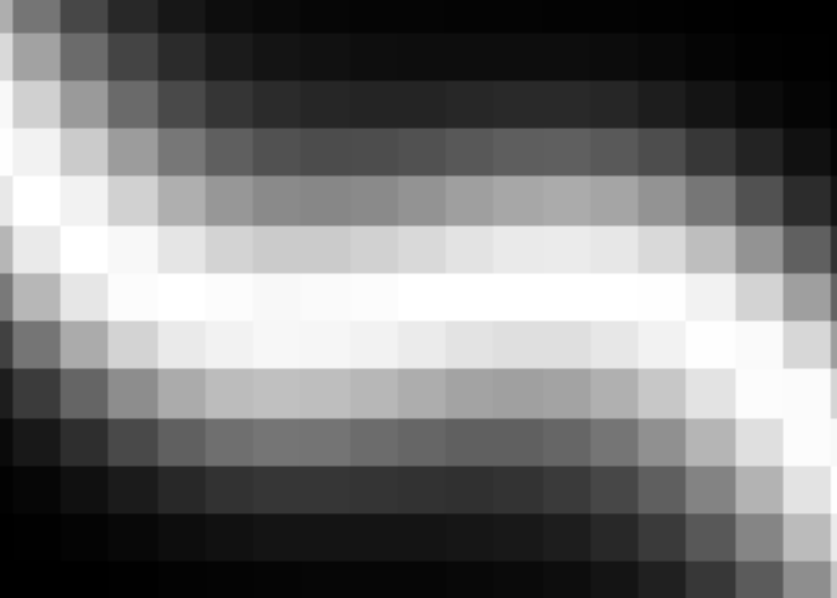}
        \hspace{.20cm}
    }
    \subfigure[]{
        \hspace{.20cm}
        \includegraphics[width=.17\columnwidth,trim=135 130 200 130,clip,angle=0]{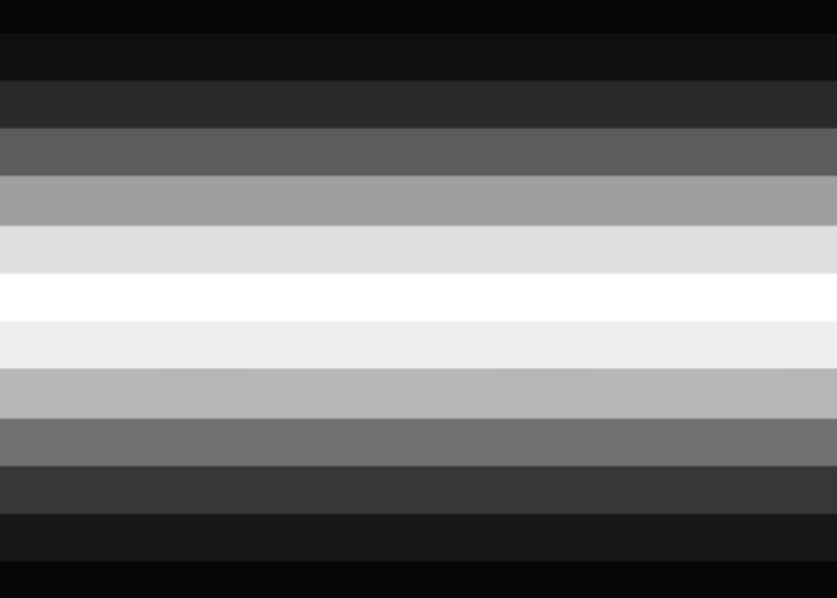}
        \hspace{.20cm}
    }
    \subfigure[]{
        \hspace{.20cm}
        \includegraphics[width=.17\columnwidth,trim=135 130 200 130,clip,angle=0]{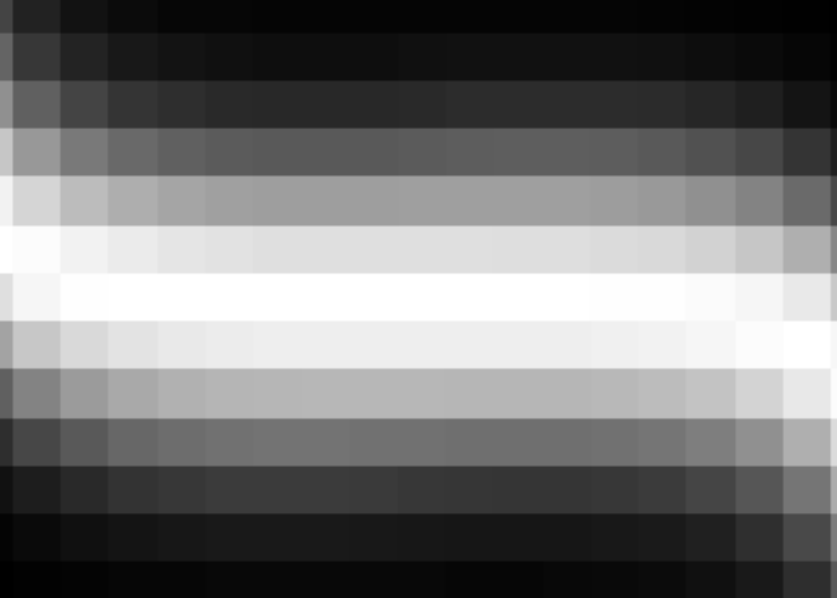}
        \hspace{.20cm}
    }
    \newline
    \subfigure[]{
        \includegraphics[width=.21\columnwidth,trim=220 50 205 50,clip,angle=90]{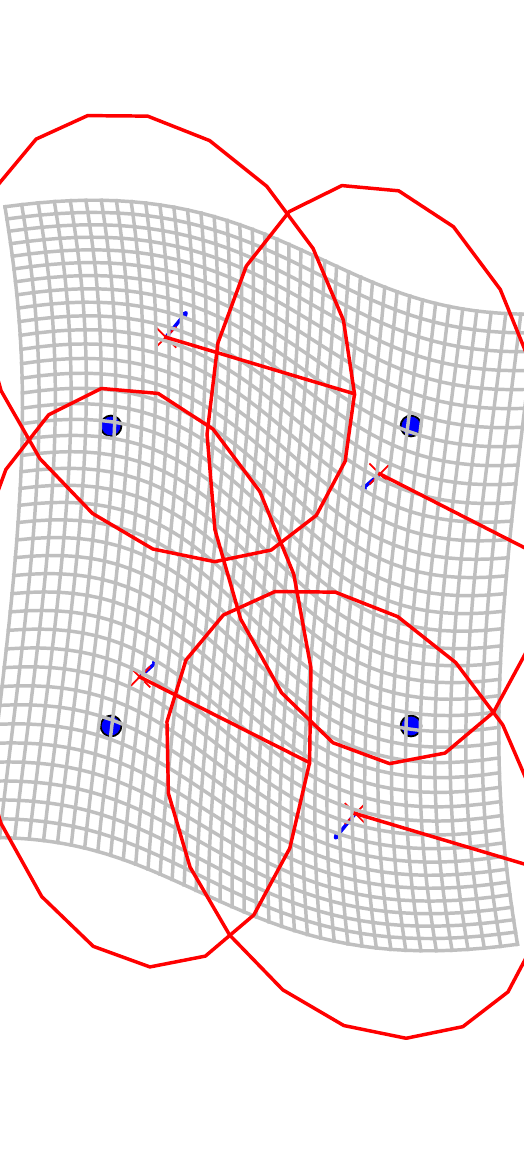}
    }
    \subfigure[]{
        \includegraphics[width=.21\columnwidth,trim=220 50 205 50,clip,angle=90]{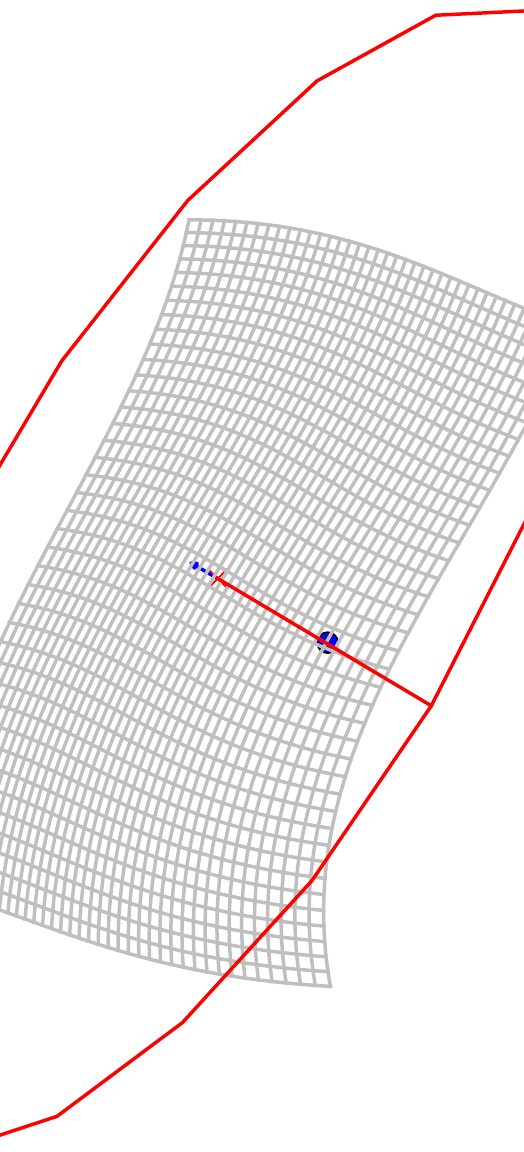}
    }
  \end{center}
  \caption{First order (linear/affine) deformations can be matched with multiple zeroth
    order jet-particles (a,b) or one first order jet-particle (c,d). A rotated bar (b/d) is
    matched to a bar (a/c). The warps that transform the moving images (b/d) 
    to the fixed images (a/c) are applied to initially square grids (i/j)
    (rotated $90^\circ$).
    Red circles are deformed with warp derivative at the particle positions.
    (e-h) shows enlarged fixed and warped moving images. The amount of bending
    at the edges (f/h) is a function of the kernel size.
}
  \label{fig:barimages2}
\end{figure}

\subsection{Matching Simple Structures}
With the following set of examples, we wish to illustrate the effects of
including second order information in the matching term approximation. We
visualize this using simple test images. In all examples, we will employ the
approximations $F_h^{(k)}$ for $k=0,2$. In addition, we will match using
only zeroth and first order information with a matching term that results from
dropping the second order terms from $F_h^{(2)}$. While this approximation does
not arise naturally from a Taylor expansion of $F$, it allows visualization
of the differences between including first and second order image information in
the match.

\begin{figure}[t]
  \begin{center}
    \subfigure[]{
        \includegraphics[width=.17\columnwidth,trim=175 150 240 150,clip]{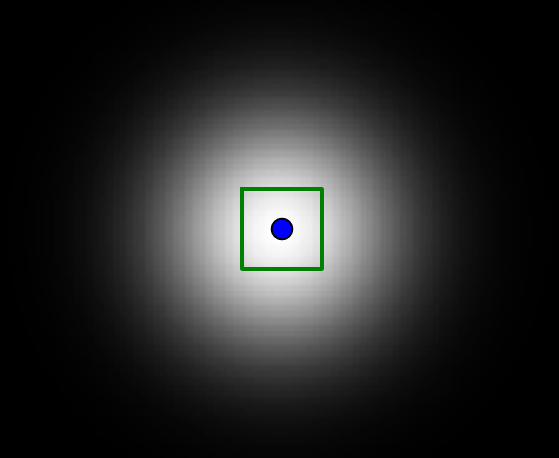}
    }
    \subfigure[]{
        \includegraphics[width=.17\columnwidth,trim=175 150 240 150,clip]{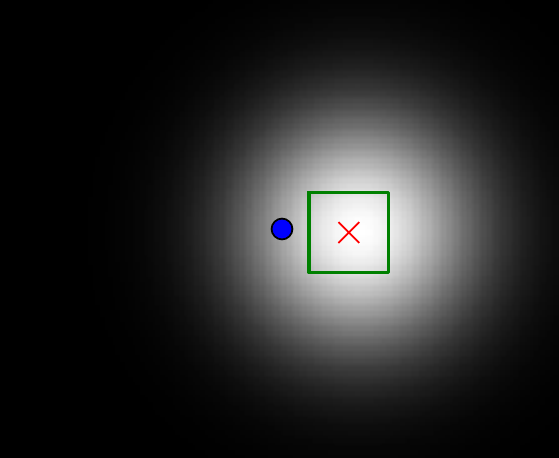}
    }
    \subfigure[]{
        \includegraphics[width=.17\columnwidth,trim=175 150 240 150,clip]{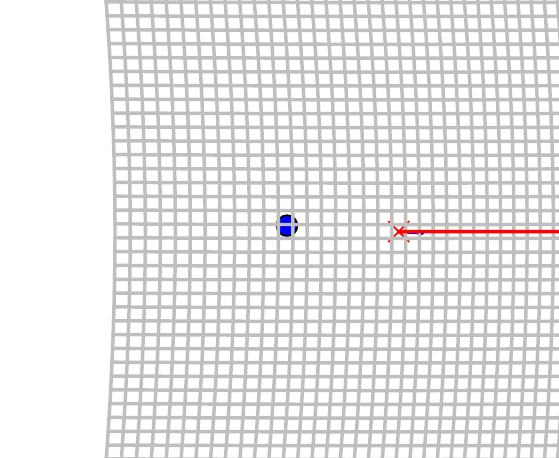}
    }
    \subfigure[]{
        \includegraphics[width=.17\columnwidth,trim=175 150 240 150,clip]{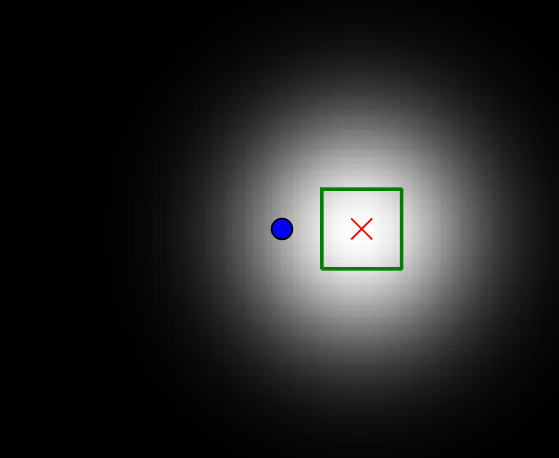}
    }
    \subfigure[]{
        \includegraphics[width=.17\columnwidth,trim=175 150 240 150,clip]{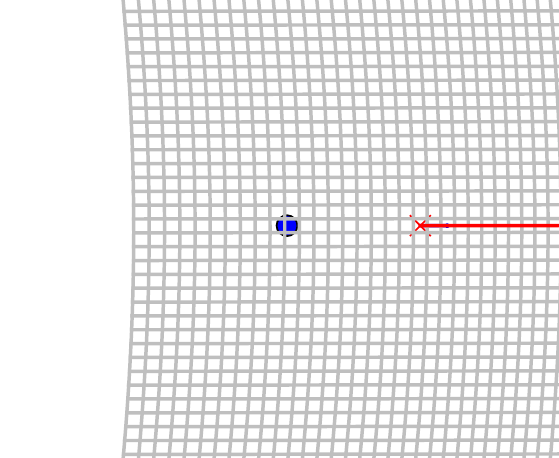}
    }
  \end{center}
  \caption{Without higher order features, 2nd order jet-particles do not change the
  match: A blob (a) is translated and matched in moving images (b,d) with red
crosses marking positions of jet-particles after match. Grids (c,e) illustrate the deformations
that are equivalent for 0th order (b,c) and 2nd order (d,e).}
  \label{fig:translation}
\end{figure}

In Figure~\ref{fig:barimages1}, a bar (moving image) is matched to a square
(fixed image). The figure shows how four jet-particles move from their positions on a
grid in the fixed image (a) to positions in the moving image that contain
features matching
the fixed image up to the order of the approximation. For zeroth order (b), only
pointwise intensity is matched and the jet-particles move vertically (red crosses)
resulting in only a slight deformation. With
first order matching (c), the jet-particles locally rotate the domain (warp Jacobian
matrices shown with green boxes) to account for the image gradient at the
corners of the square. This produces a diamond-like shape.
With second order (d), the corners are matched and the jet-particles move towards the corners of the moving image bar. 
The middle
row shows the warped moving images enlarged. The second order 
match (h) is close to the fixed image (a) while both first and zeroth order
fail to produce satisfying matches.
\begin{figure}[t!]
  \begin{center}
      \subfigure[fixed, full image]{
        \includegraphics[width=.20\columnwidth,trim=80 80 80 80,clip,angle=90]{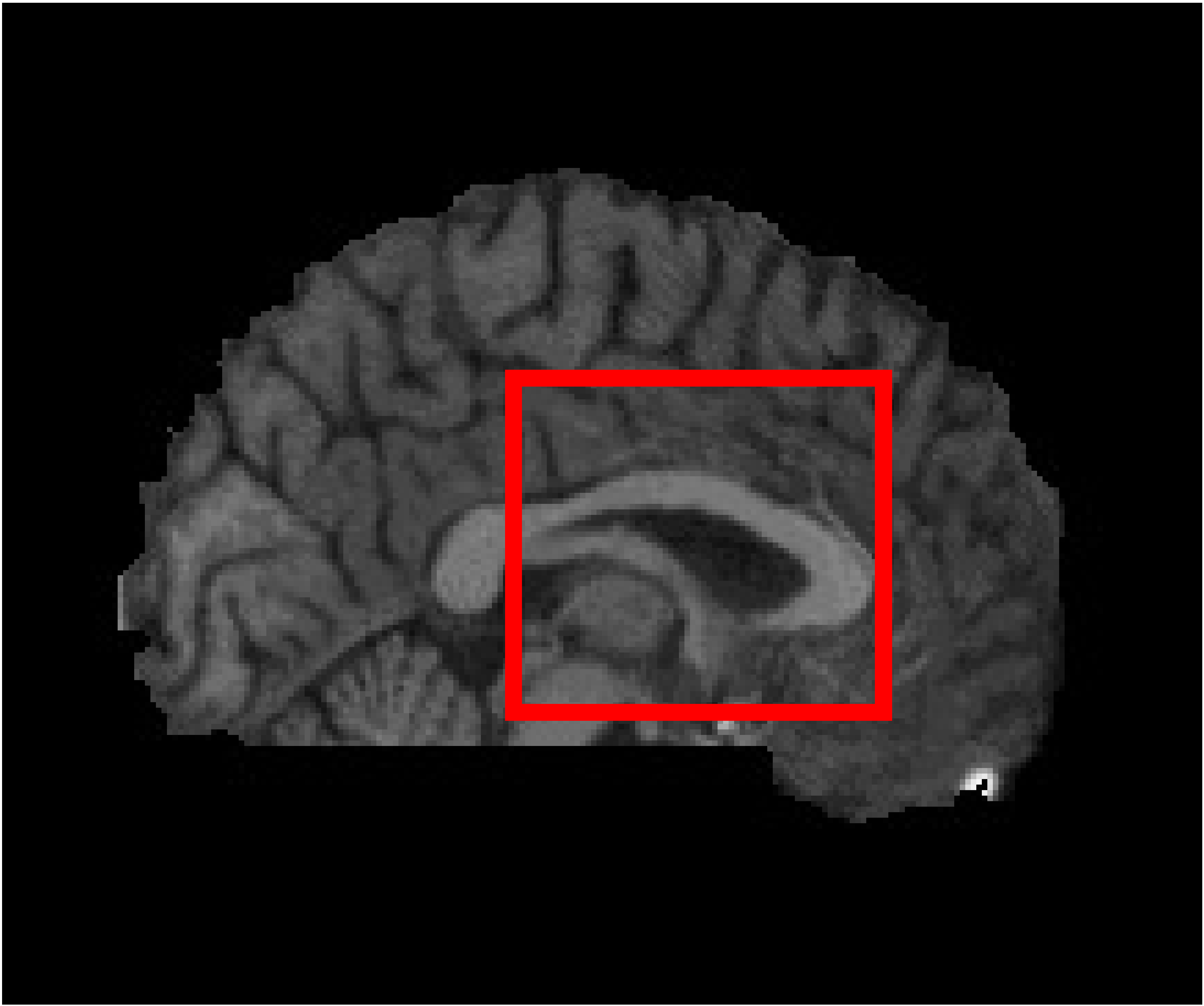}
      }
      \subfigure[fixed, region]{
        \includegraphics[width=.20\columnwidth,trim=90 50 155 50,clip]{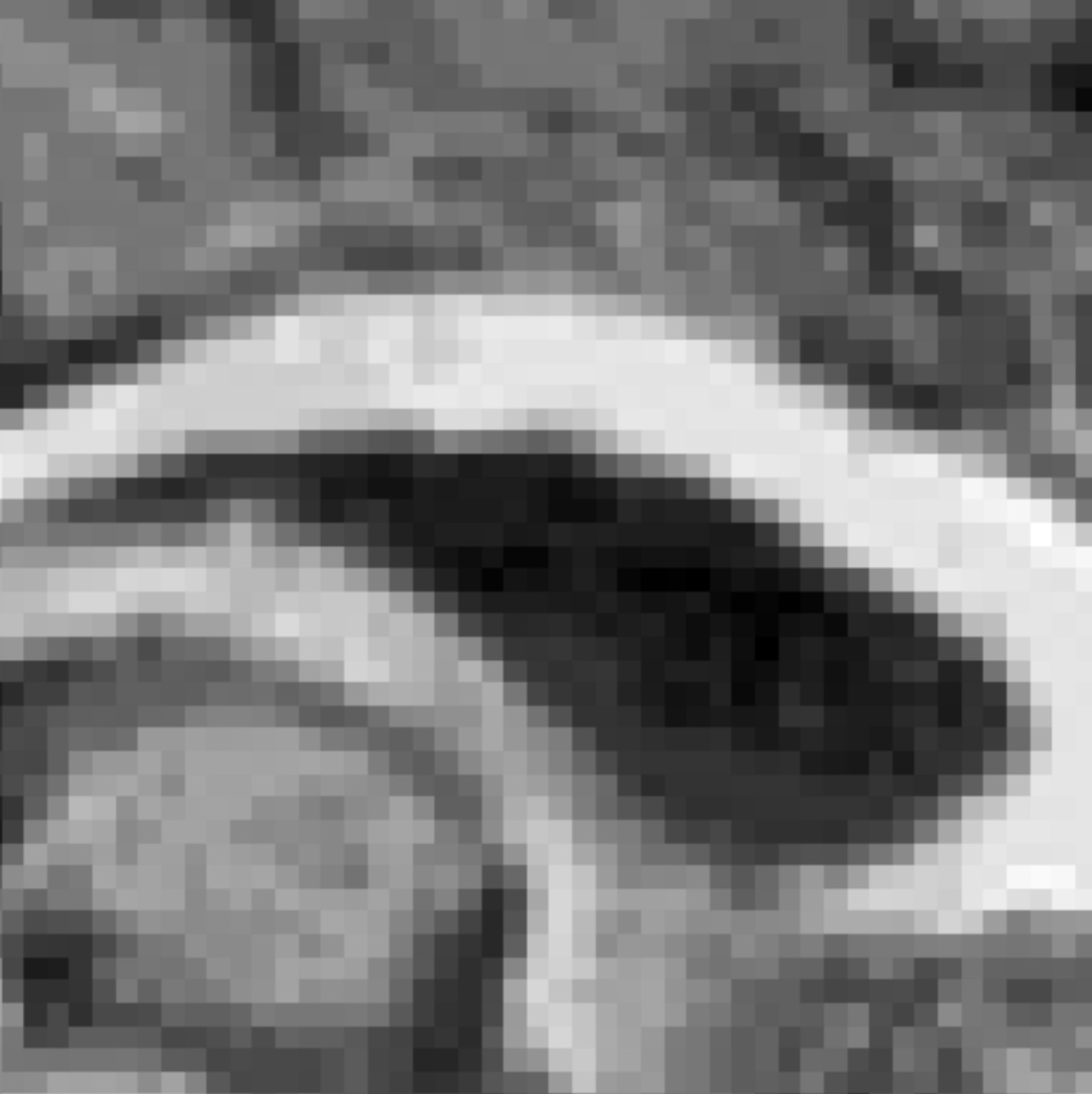}
      }
      \subfigure[moving, full image]{
        \includegraphics[width=.20\columnwidth,trim=80 80 80 80,clip,angle=90]{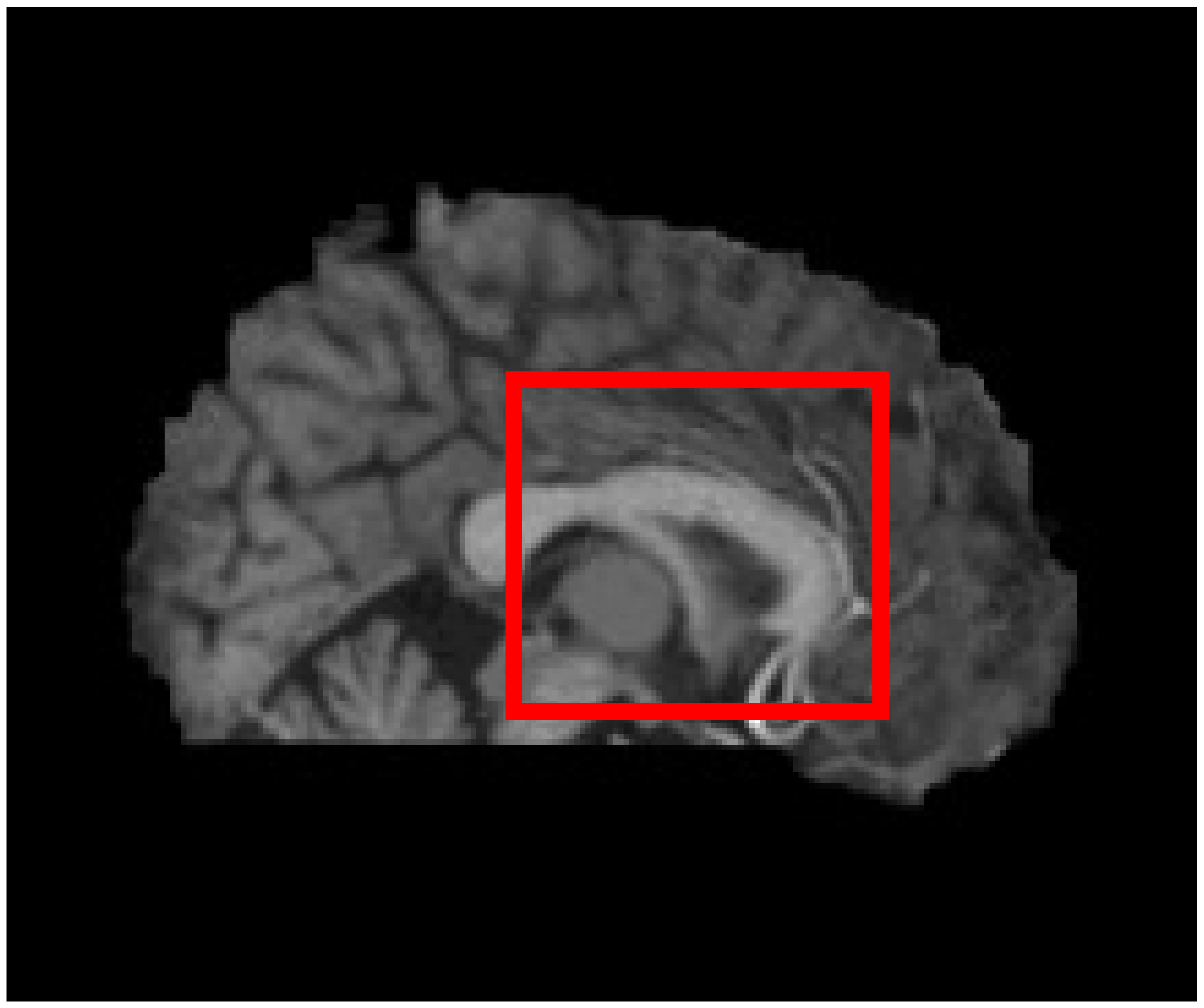}
      }
      \subfigure[mov., region]{
        \includegraphics[width=.20\columnwidth,trim=90 50 155 50,clip]{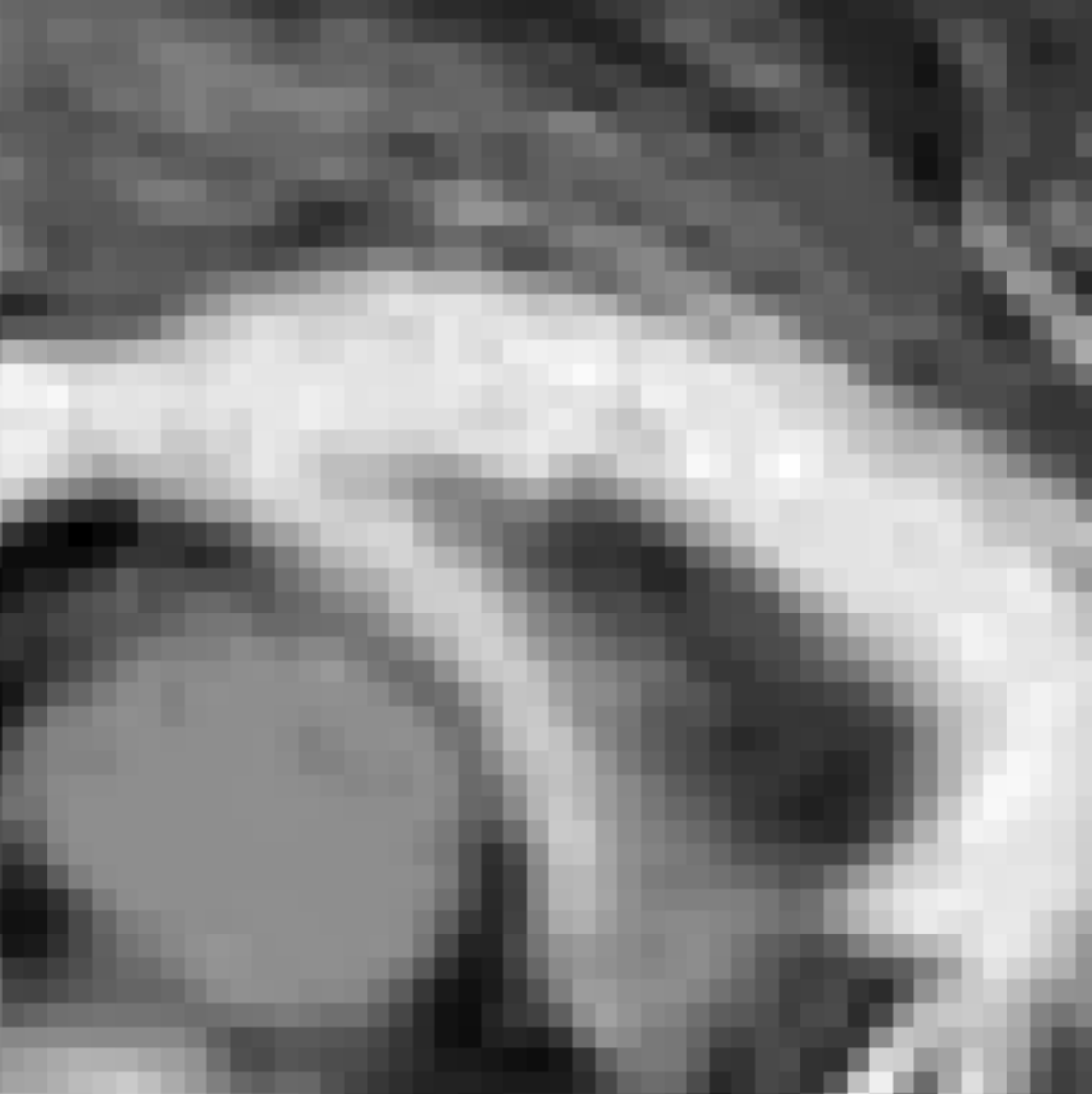}
      }
      \newline
      \subfigure[2nd order, 9 jet-particles]{
        \includegraphics[width=.20\columnwidth,trim=90 50 155 50,clip]{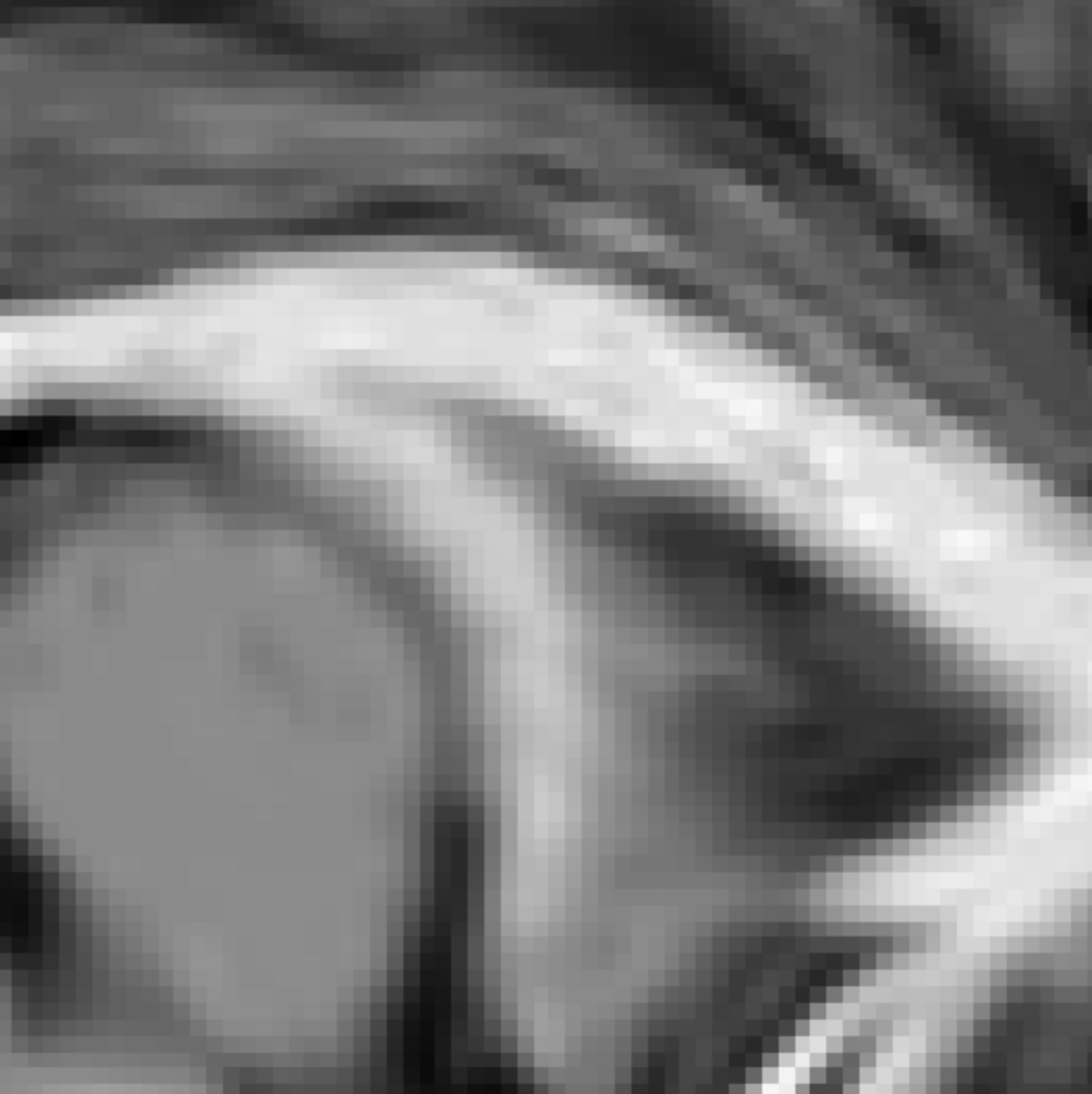}
      }
      \subfigure[2nd order, 16 jet-particles]{
        \includegraphics[width=.20\columnwidth,trim=90 50 155 50,clip]{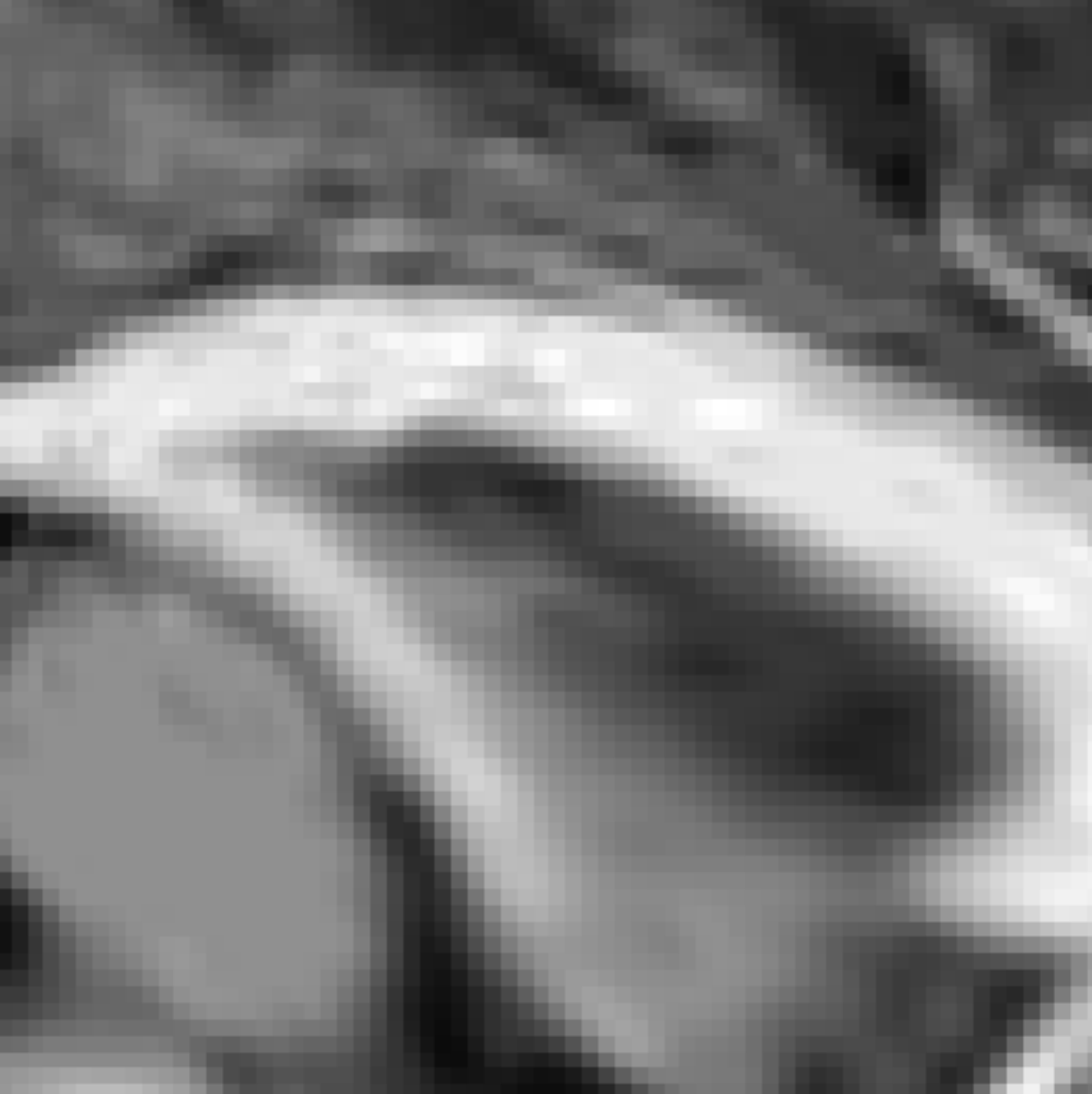}
      }
      \subfigure[2nd order, 64 jet-particles]{
        \includegraphics[width=.20\columnwidth,trim=90 50 155 50,clip]{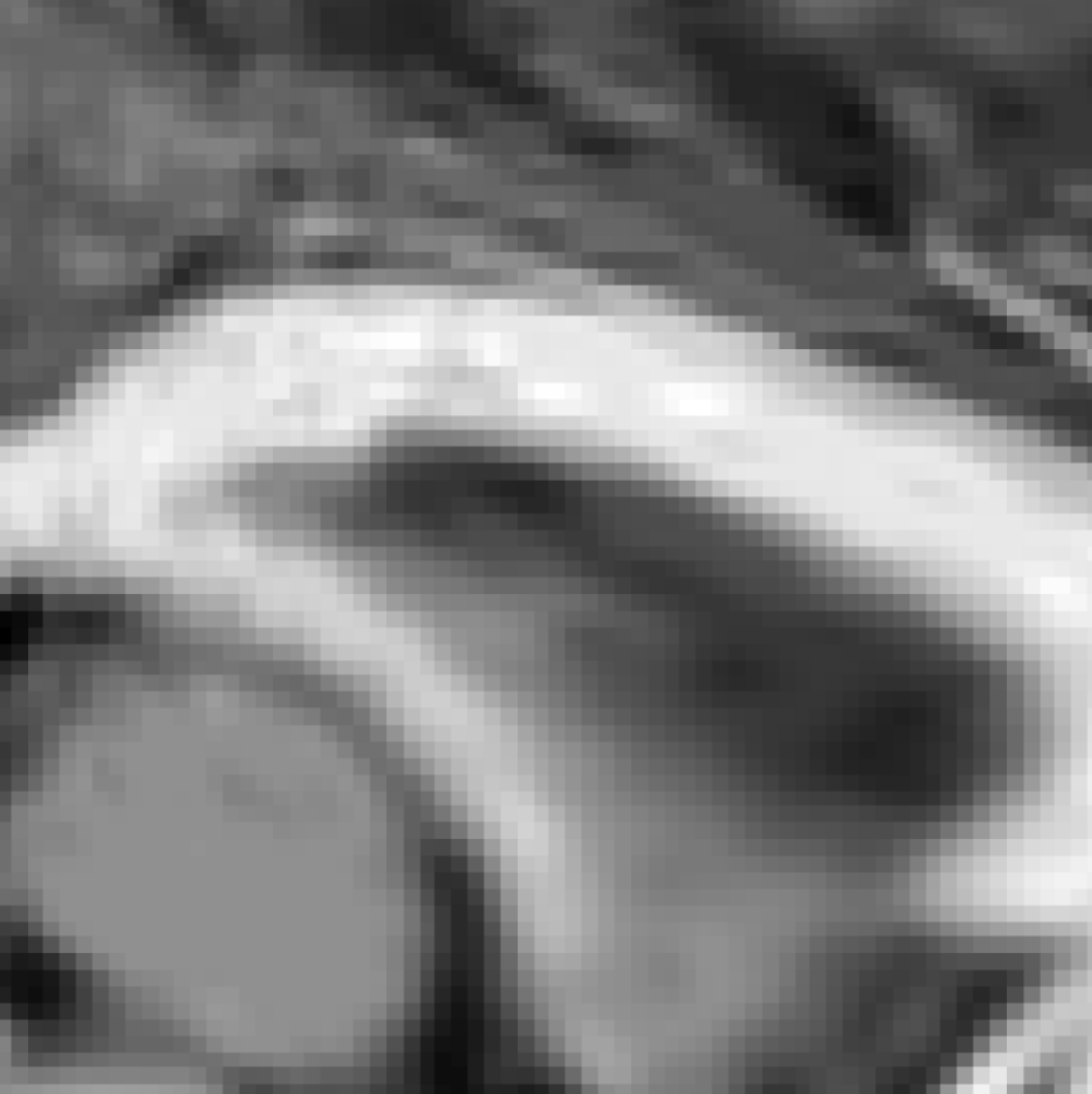}
      }
      \newline
      \subfigure[0th order, 9 jet-particles]{
        \includegraphics[width=.20\columnwidth,trim=90 50 155 50,clip]{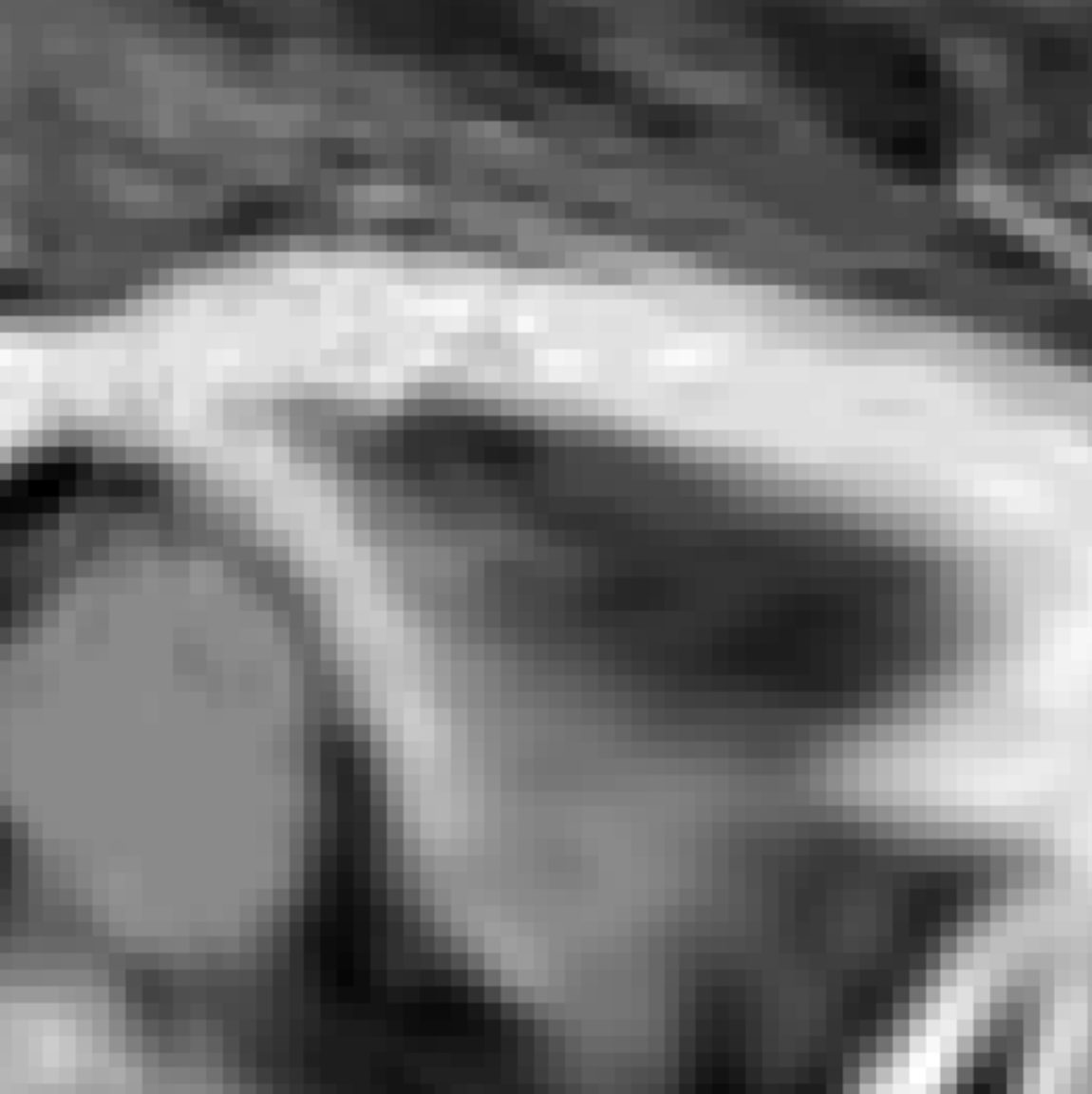}
      }
      \subfigure[0th order, 16 jet-particles]{
        \includegraphics[width=.20\columnwidth,trim=90 50 155 50,clip]{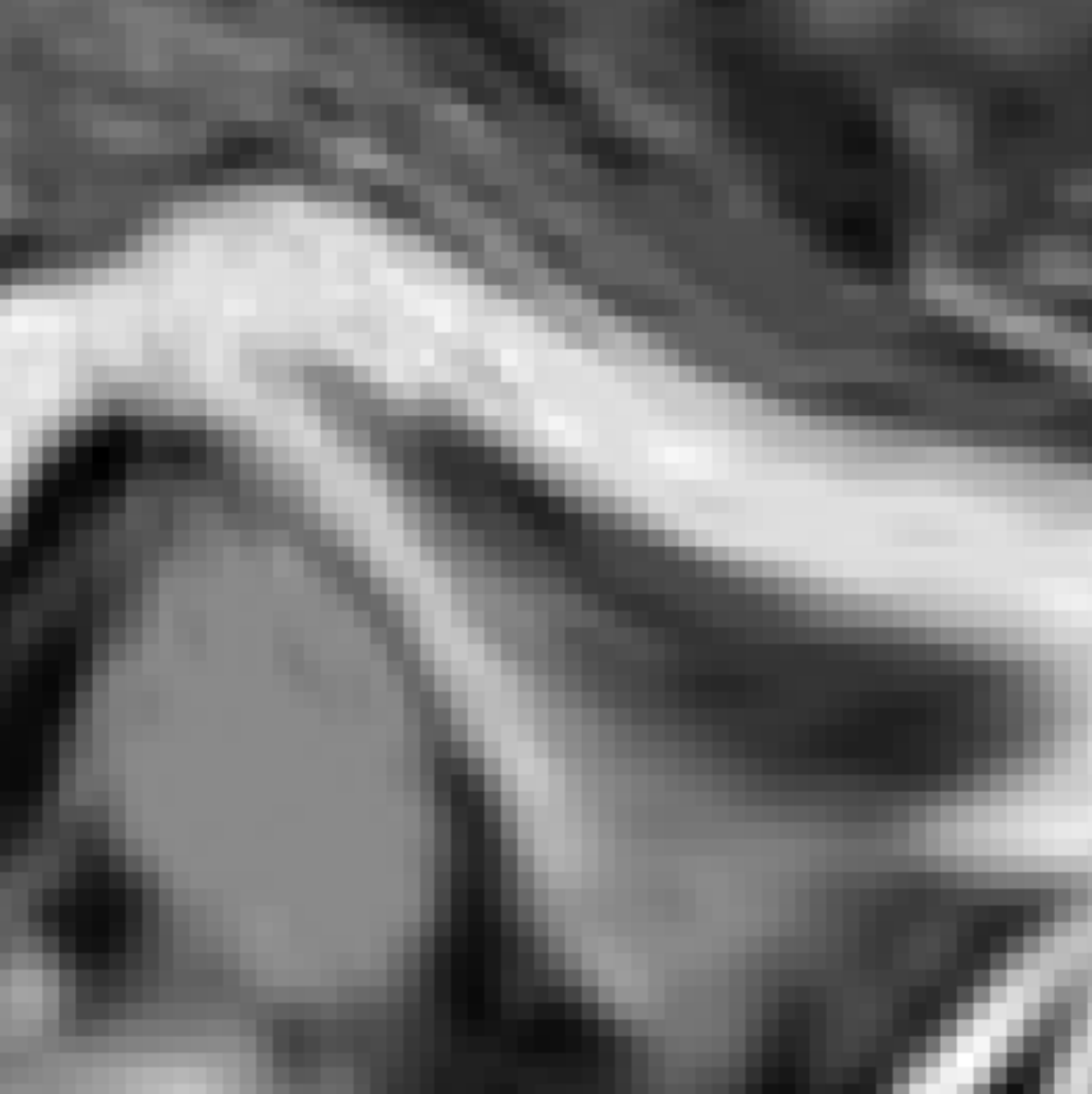}
      }
      \subfigure[0th order, 64 jet-particles]{
        \includegraphics[width=.20\columnwidth,trim=90 50 155 50,clip]{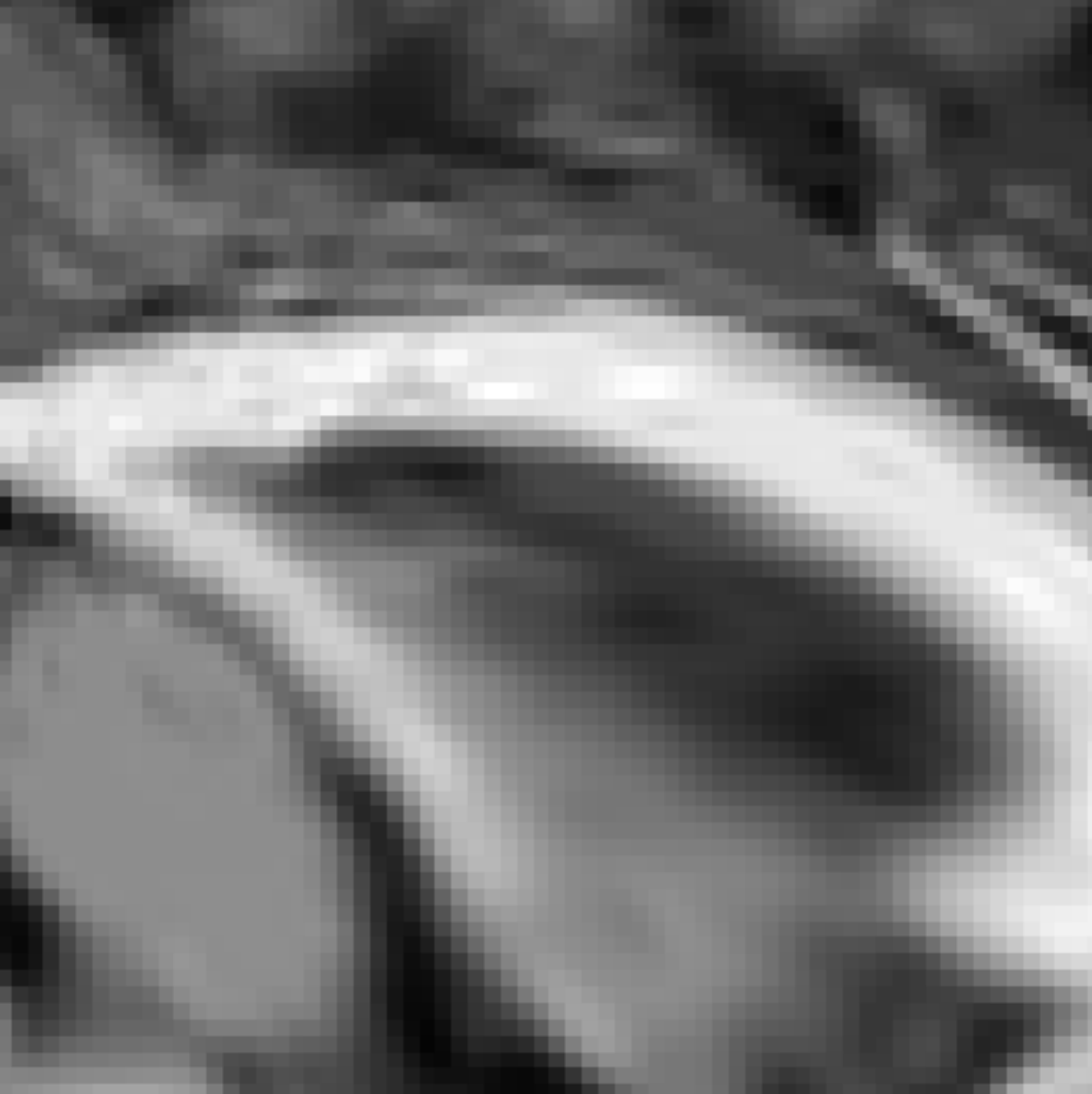}
      }
      \newline
  \end{center}
  \caption{2D registration of MRI slices, (a-b) fixed image, (c-d) moving
    image, red boxes: regions to be matched. Lower rows: matching results using 
    2nd order jet-particles (e-g), 0th order jet-particles (h-j). Images in lower rows should be close to (b).
    With 9 2nd order jet-particles (3 per axis), the moving
    image approaches the fixed. The match is visually good with 16 jet-particles (4 per
    axis). The ventricle region can equivalently be inflated with 64 0th order jet-particles.}

  \label{fig:slices}
\end{figure}

Figure~\ref{fig:barimages2} shows the result of matching images differing by an
affine transformation with either one first order jet or multiple zeroth order
jet-particles. While three zeroth order jet-particles can approximate a first order deformation in
2D, four particles are used to produce a symmetric picture. The warp Jacobians deform 
the initially square green boxes displayed at the jet
positions. The resulting warps in both cases approximate an affine
transformation.

With translation only, including second order information in the match
does not change the result as illustrated in Figure~\ref{fig:translation} where
the match is performed on an image and a translated version of the image.

\subsection{Real image data}
We illustrate the effect of the increased order on real images by matching two mid-sagittal slices of 3D MRI
from the MGH10 dataset \cite{klein_evaluation_2009}. In Figure~\ref{fig:slices}, red boxes mark the ventricle
area of the brain on which the matching is performed. We perform the match with
9 jet-particles (3 per axis), 16 jet-particles (4 per axis) and 64 jet-particles (8 per axis) and $k=0,2$.
With 9 second order jet-particles (e), the moving image (d) approaches the fixed (b). A
visually good match is obtained with 16 or more jet-particles. 9 and 16 zeroth order jet-particles are
not sufficient to correctly encode the expansion of the ventricle. With 64 zeroth order
jet-particles, the transformed image is close to the results of the second order matches.

\section{Conclusion and Future Work}
A priori, the LDDMM framework of image registration poses an optimization
problem on the space of diffeomorphism. Here, we introduced a family of discretized
cost functions on a finite dimensional phase space that can be minimized numerically.
The solutions of the discretized problem can be related to solutions
of the full infinite-dimensional problem with $O(h^{d+k})$ accuracy,
where $h$ is a grid spacing and $k$ is the order of approximation.


We provided numerical examples of deformations parametrized by zeroth, first,
and second order jet-particles, and we showed examples of the higher order convergence of the
similarity measure. The higher-order similarity measure allows matching of
higher order features, and we use this fact to register various shapes and
images with low numbers of jet-particles.

Representing a $C^k$ image requires much less information than
representing a $C^0$ image.  Heuristically, the impact of this for computation is that we may use
different techniques to approximate and advect smooth images with a sparse set of parameters.
The higher-order accuracy schemes here constitutes a particular example of using
reduction by symmetry to remove redundant information, and specialize advection to the
data at hand.
In this case, we reduce the dimensionality from infinite to finite for a given discretization,
and we specialize the discretization to $C^2$ images.

While the applicability of this specialization is limited to images of sufficient regularity,
the bigger point of this article is the notion of tailoring discretizations to data.
This approach is applicable for reducing the dimensionality of data beyond images. 
For example, accurate discretizations of curves with tangents, surfaces
with tangent planes, and higher-order tensors can be derived with corresponding
reduction in dimensionality. The present framework thus points
to a general approach for higher-order accurate discretizations of 
general classes of matching problems.  Future work will constitute testing these 
areas of wider applicability.

\section*{Acknowledgments}
Henry O. Jacobs is supported by the European Research
Council Advanced Grant 267382 FCCA.
Stefan Sommer is supported by the Danish Council for Independent Research with
the project ``Image Based Quantification of Anatomical Change''.
We would like to thank Klas Modin for summarizing the literature on
analysis of the EPDiff equation, especially in regards to smoothness of initial conditions.

\appendix

\section{Equations of motion}
\label{sec:eom}

The equations of motion are expressible as Hamiltonian
equations with respect to a non-canonical Poisson bracket.
If we denote $q^{(0)}$ simply by $q$ and $p^{(0)}$ simply
by $p$ then the Hamiltonian is
\begin{align} \label{eq:Hamiltonian}
  H(q,p,\mu^{(1)},\mu^{(2)}) =& \frac{1}{2} p_{i \alpha} p_{j \beta}
  K^{\alpha \beta}(q_i - q_j) - p_{i \alpha} [\mu^{(1)}_j]
  \indices{_{\beta}^{\gamma}} \partial_\gamma K^{\alpha \beta}( q_i -
  q_j) \\
  &+ p_{i \alpha} [\mu^{(2)}_j] \indices{_\beta^{\gamma
      \delta}} \partial_{\delta \gamma}K^{\alpha \beta}( q_i - q_j) -
  \frac{1}{2} [\mu_i^{(1)}] \indices{_{\alpha}^\delta} [\mu_j^{(1)} ]
  \indices{_\beta^\gamma} \partial_{\delta \gamma} K^{\alpha
    \beta}(q_i -q_j) \\
  &+ [\mu_i^{(1)}]\indices{_\alpha^\epsilon} [\mu_j^{(2)}]
  \indices{_\beta^{\gamma \delta}} \partial_{\epsilon \gamma \delta}
  K^{\alpha \beta}( q_i - q_j) \\
  &+ \frac{1}{2}
  [\mu_i^{(2)}]\indices{_\alpha^{\epsilon \phi}} [\mu_j^{(2)}]
  \indices{_\beta^{\gamma \delta} } \partial_{\gamma \delta \epsilon
    \phi} K^{\alpha \beta}( q_i - q_j )
\end{align}
Where $K^{\alpha \beta}(x) = \delta^{\alpha \beta} e^{- \| x \|^2 / 2 \sigma^2}$.
Hamiton's equations are then given in short by
\begin{align}
  \dot{q} &= \frac{ \partial H}{\partial p} \label{eq:dHdp} \\
  \dot{p} &= - \frac{ \partial H}{\partial q} \label{eq:dHdq} \\
  \xi &= \frac{\partial H}{\partial \mu} \label{eq:dHdmu} \\
  \dot{\mu} &= - \ad^*_{ \xi } ( \mu ) . \label{eq:LiePoisson} 
\end{align}
More explicitly, equation \eqref{eq:dHdp} is given by
\begin{align*}
  \dot{q}_i^\alpha = p_{j \beta} K^{\alpha \beta}( q_i - q_j) -
  [\mu^{(1)}_j] \indices{_\beta^\gamma} \partial_\gamma K^{\alpha
    \beta}(q_i - q_j) + [\mu^{(2)}_j] \indices{_\beta^{\gamma
      \delta}} \partial_{\gamma \delta} K^{\alpha \beta}(q_i - q_j)
\end{align*}
equation \eqref{eq:dHdq} is given by the sum
\begin{align*}
  \dot{p}_{i\alpha} = T^{00}_{i\alpha} + T^{01}_{i\alpha} +
  T^{02}_{i\alpha} + T^{12}_{i\alpha} + T^{11}_{i\alpha} + T^{22}_{i\alpha}
\end{align*}

Where we define the six terms in this sum as
\begin{align*}
  T^{00}_{i\alpha} =& -p_{i\gamma}
  p_{j\beta} \partial_{\alpha}K^{\gamma\beta}(q_i - q_j) \\
  T^{01}_{i\alpha} =& (p_{i\delta} [\mu_j^{(1)}]\indices{_\beta^\gamma} -
  p_{j\delta}[\mu_i^{(1)}]\indices{_\beta^\gamma}) \partial_{\gamma\alpha}K^{\delta\beta}(q_i
  - q_j) \\
  T^{02}_{i\alpha} =& - (p_{i\epsilon}
  [\mu_j^{(2)}]\indices{_\beta^{\gamma\delta}} + p_{j\epsilon}
  [\mu_i^{(2)}] \indices{_\beta^{\gamma\delta}}
  ) \partial_{\gamma\delta\alpha}K^{\epsilon\beta}(q_i - q_j) \\
  T^{12}_{i\alpha} =& -([\mu_i^{(1)}] \indices{_\phi^\epsilon}
  [\mu_j^{(2)}]\indices{_\beta^{\gamma\delta}} -
  [\mu_j^{(1)}]\indices{_\phi^\epsilon} [\mu_i^{(2)}]
  \indices{_\beta^{\gamma\delta}}) \partial_{\epsilon\gamma\delta\alpha}K^{\phi\beta}(q_i
  - q_j) \\
  T^{11}_{i\alpha} =& [\mu_i^{(1)}] \indices{_\epsilon^\delta}
  [\mu_j^{(1)}]\indices{_\beta^\gamma} \partial_{\delta\gamma\alpha}
  K^{\epsilon\beta}(q_i - q_j) \\
  T^{22}_{i\alpha} =& -[\mu_i^{(2)}]\indices{_\zeta^{\epsilon\phi}}
  [\mu_j^{(2)}]\indices{_\beta^{\gamma\delta}} \partial_{\epsilon\delta\gamma\phi\alpha}
  K^{\zeta\beta}(q_i - q_j)
\end{align*}

Next, we calculate the quantities $\xi^{(i)} = \partial H / \partial \mu^{(i)}$
for $i=1,2$ of equation \eqref{eq:dHdmu} to be
\begin{align*}
  [\xi_i^{(1)} ]\indices{^\alpha_\beta} =&
  p_{j,\gamma} \partial_\beta K^{\alpha \gamma}(q_i - q_j) -
  [\mu_j^{(1)}]\indices{_{\delta}^{\gamma}} \partial_{\beta \gamma}
  K^{\alpha \delta}( q_i - q_j) +
  [\mu_j^{(2)}]\indices{_{\epsilon}^{\gamma \delta} }\partial_{\beta
    \gamma \delta} K^{\alpha
    \epsilon}(q_i - q_j) \\
  [\xi_i^{(2)}]\indices{^\alpha_{\beta \gamma}} =& p_{j
    \delta} \partial_{\beta \gamma}K^{\alpha \delta}(q_i - q_j) -
  [\mu_j^{(1)}]
  \indices{_\delta^\epsilon} \partial_{\epsilon\beta\gamma}K^{\alpha\delta}(q_i
  -q_j) + [\mu_j^{(2)}]\indices{_\epsilon^{\phi\delta}}\partial_{\beta\gamma\phi\delta}K^{\alpha\epsilon}(q_i-q_j)
\end{align*}
which allows us to compute $\dot{\mu}^{(i)}$ in equation \eqref{eq:LiePoisson} as
\begin{align*}
 [\dot{\mu}^{(1)}_i ]\indices{_\alpha^\beta} =&
 [\mu_i^{(1)}]\indices{_\alpha^\gamma}
 [\xi_i^{(1)}]\indices{^\beta_\gamma} -
 [\mu_i^{(1)}]\indices{_\gamma^\beta}
 [\xi_i^{(1)}]\indices{^\gamma_\alpha} \\
& + [\mu_i^{(2)}]\indices{_\alpha^{\delta \gamma}}
  [\xi_i^{(2)}]\indices{^\beta_{\delta \gamma}} 
- [\mu_i^{(2)}]\indices{_\delta^{\beta \gamma}}
  [\xi_i^{(2)}]\indices{^\delta_{\alpha \gamma}} 
- [\mu_i^{(2)}]\indices{_\delta^{\gamma \beta}}
  [\xi_i^{(2)}]\indices{^\delta_{\gamma \alpha}} \\
 [\dot{\mu_i}^{(2)}]\indices{_\alpha^{\beta \gamma}} =&
[\mu_i^{(2)}]\indices{_\alpha^{\delta \gamma}}
[\xi_i^{(1)}]\indices{^\beta_\delta} +
[\mu_i^{(2)}]\indices{_\alpha^{\beta \delta}}
[\xi_i^{(1)}]\indices{^\gamma_\delta} - [\mu_i^{(2)}]\indices{_\delta^{\beta \gamma}} [\xi_i^{(1)}]\indices{^\delta_\alpha}
\end{align*}

\subsection{Computing $\dot{q}$ as a function of $\xi$}
\label{sec:computing-dotq-as}

The action of $\xi$ on $q$ is given by $\xi \cdot q$.  We set $\dot{q}
= \xi \cdot q$. We've already calculated $\dot{q}^{(0)}$.
We need only calculate $\dot{q}^{(1)}$ and $\dot{q}^{(2)}$.
Componentwise we calculate these to be
\begin{align*}
  [\dot{q}^{(1)}]\indices{^\alpha_\beta} =&
  [\xi^{(1)}]\indices{^\alpha_\gamma} [q^{(1)}]\indices{^\gamma_\beta} \\
  [\dot{q}^{(2)}]\indices{^\alpha_{\beta \gamma} } =&
  [\xi^{(2)}]\indices{^\alpha_{\delta \epsilon}} \cdot 
  [q^{(1)}]\indices{^\delta_\beta} \cdot [q^{(1)}]\indices{^\epsilon_\gamma} + 
 [\xi^{(1)}]\indices{^\alpha_\delta} \cdot [q^{(2)}] \indices{^\delta_{\beta
     \gamma} }
\end{align*}

\section{First variation equations}
\label{sec:variation-equations}

The first variation equations are equivalent to applying the tangent functor
to our evolutions.  We find the velocites:
\begin{align*}
  \frac{d}{dt} \delta q_i^\alpha =& \delta p_{j\beta} K^{\alpha \beta}(q_i - q_j) 
  + p_{j\beta} (\delta q_i^\gamma - \delta q_j^\gamma) \partial_{\gamma}K^{\alpha \beta}(q_i - q_j) \\
  & - [\delta \mu_j^{(1)}] \indices{_\beta^\gamma} \partial_\gamma K^{\alpha \beta}(q_i -q_j)
  - [\mu_j^{(1)}]\indices{_\beta^\gamma} \partial_{\gamma\delta}K^{\alpha\beta}(q_i-q_j) (\delta q_i^\delta - \delta q_j^\delta) \\
  &+ [\delta \mu_j^{(2)}]\indices{_\beta^{\gamma\delta}} \partial_{\gamma\delta}K^{\alpha\beta}(q_i - q_j)
  + [\mu_j^{(2)}]\indices{_\beta^{\gamma\delta}} \partial_{\gamma\delta\epsilon}K^{\alpha\beta}(q_i - q_j) (\delta q_i^\epsilon - \delta q_j^\epsilon)
\end{align*}

\begin{align*}
[\delta \xi^{(1)}_i]\indices{^\alpha_\beta} =& \delta
p_{j\gamma} \partial_\beta K^{\gamma\alpha}(q_i - q_j) 
   +p_{j\gamma} (\delta q_i^\delta - \delta q_j^\delta) \partial_{\delta\beta}K^{\alpha\gamma}(q_i - q_j) \\
  &- [\delta \mu_j^{(1)}]\indices{_\delta^\gamma} \partial_{\beta\gamma}K^{\alpha\delta}(q_i- q_j) 
  - [\mu_j^{(1)}]\indices{_\delta^\gamma} (\delta q_i^\epsilon -
  \delta
  q_j^\epsilon) \partial_{\beta\gamma\epsilon}K^{\alpha\delta}(q_i - q_j)  \\
  &+ [\delta \mu_j^{(2)}]\indices{_\phi^{\gamma\delta}} \partial_{\beta\gamma\delta}K^{\alpha \phi}(q_i - q_j) 
  + [\mu_j^{(2)}]\indices{_\phi^{\gamma\delta}} (\delta q_i^\epsilon - \delta q_j^\epsilon) \partial_{\beta\gamma\delta\epsilon}K^{\alpha\phi}(q_i-q_j)
\end{align*}

\begin{align*}
  [\delta \xi^{(2)}_i]\indices{^\alpha_{\beta\gamma}} =&
  \delta p_{j\delta} \partial_{\beta\gamma} K^{\alpha\delta}(ij) 
  + p_{j\epsilon} (\delta q_i^\delta - \delta q_j^\delta) \partial_{\beta\gamma\delta}K^{\alpha\epsilon}(ij) \\
  &- [\delta \mu_j^{(1)}]\indices{_\epsilon^\delta} \partial_{\beta\gamma\delta}K^{\alpha\epsilon}(ij)
  -[\mu_j^{(1)}]\indices{_\phi^\delta} (\delta q_i^\epsilon - \delta q_j^\epsilon) \partial_{\beta\gamma\delta\epsilon}K^{\alpha\phi}(ij) \\
  &+[\delta\mu_j^{(2)}]\indices{_\phi^{\delta\epsilon}} \partial_{\beta\gamma\delta\epsilon}K^{\alpha\phi}(ij)
  + [\mu_j^{(2)}]\indices{_\lambda^{\delta\epsilon}} (\delta q_i^\phi - \delta q_j^\phi) \partial_{\beta\gamma\delta\epsilon\phi} K^{\alpha\lambda}(ij)
\end{align*}

and the momenta:
\begin{align*}
  \frac{d}{dt} \delta p_{i\alpha} =& \delta T^{00}_{i\alpha} + \delta
  T^{01}_{i\alpha} + \delta T^{02}_{i\alpha} + \delta T^{12}_{i\alpha}
  + \delta T^{11}_{i\alpha} + \delta T^{22}_{i\alpha}
\end{align*}

The first-variation equation for $\mu^{(1)}$ is 
\begin{align*}
  \frac{d}{dt} [\delta\mu^{(1)} ]\indices{_\alpha^\beta} =&
  [\delta\mu^{(1)}]\indices{_\alpha^\gamma}[\xi^{(1)}]\indices{^\beta_\gamma}
  +
  [\mu^{(1)}]\indices{_\alpha^\gamma}[\delta\xi^{(1)}]\indices{^\beta_\gamma}
  \\
  &-[\delta\mu^{(1)}]\indices{_\gamma^\beta}[\xi^{(1)}]\indices{^\gamma_\alpha}
  -
  [\mu^{(1)}]\indices{_\gamma^\beta}[\delta\xi^{(1)}]\indices{^\gamma_\alpha}\\
  &+[\delta\mu^{(2)}]\indices{_\alpha^{\delta\gamma}}[\xi^{(2)}]\indices{^\beta_{\delta\gamma}}
  +
  [\mu^{(2)}]\indices{_\alpha^{\delta\gamma}}[\delta\xi^{(2)}]\indices{^\beta_{\delta\gamma}}
  \\
  &-[\delta\mu^{(2)}]\indices{_\delta^{\beta\gamma}}[\xi^{(2)}]\indices{^\delta_{\alpha\gamma}}
  -
  [\mu^{(2)}]\indices{_\delta^{\beta\gamma}}[\delta\xi^{(2)}]\indices{^\delta_{\alpha\gamma}}
  \\
  &-[\delta\mu^{(2)}]\indices{_\delta^{\gamma\beta}}[\xi^{(2)}]\indices{^\delta_{\gamma\alpha}}
  -
  [\mu^{(2)}]\indices{_\delta^{\gamma\beta}}[\delta\xi^{(2)}]\indices{^\delta_{\gamma\alpha}}
\end{align*}
and finally
\begin{align*}
  \frac{d}{dt}[\delta\mu^{(2)}]\indices{_\alpha^{\beta\gamma}} =&
  [\delta\mu^{(2)}]\indices{_\alpha^{\delta\gamma}}[\xi^{(1)}]\indices{^\beta_\delta}
  +
  [\mu^{(2)}]\indices{_\alpha^{\delta\gamma}}[\delta\xi^{(1)}]\indices{^\beta_\delta}
  \\
  &+[\delta\mu^{(2)}]\indices{_\alpha^{\beta\delta}}[\xi^{(1)}]\indices{^\gamma_\delta}
  +
  [\mu^{(2)}]\indices{_\alpha^{\beta\delta}}[\delta\xi^{(1)}]\indices{^\gamma_\delta}
  \\
  &-[\delta\mu^{(2)}]\indices{_\delta^{\beta\gamma}}
  [\xi^{(1)}]\indices{^\delta_\alpha} - [\mu^{(2)}]\indices{_\delta^{\beta\gamma}}[\delta\xi^{(1)}]\indices{^\delta_\alpha}
\end{align*}
where the $\delta T$'s are given by
\begin{align*}
  \delta T^{00}_{i\alpha} =&
  -\delta p_{i\gamma}p_{j\beta} \partial_{\alpha}K^{\gamma\beta}(q_i -
  q_j)   - p_{i\gamma} \delta
  p_{j\beta} \partial_{\alpha}K^{\gamma\beta}(q_i - q_j) \\
  &- p_{i\gamma}p_{j\beta} \partial_{\alpha\delta}K^{\gamma\beta}(q_i
  - q_j) (\delta q_i^\delta - \delta q_j^\delta)
\end{align*}

\begin{align*}
  \delta T^{01}_{i\alpha} =& -\delta p_{j\delta}
  [\mu_i^{(1)}]\indices{_\beta^\gamma} \partial_{\gamma\alpha}K^{\delta\beta}(ij)
  -p_{j\delta} [\delta\mu_i^{(1)}]\indices{_\beta^\gamma} \partial_{\gamma\alpha}K^{\delta\beta}(ij)
  \\
  &-p_{j\delta}[\mu_i^{(1)}]\indices{_\beta^\gamma}(\delta
  q_i^\epsilon - \delta
  q_j^\epsilon) \partial_{\epsilon\gamma\alpha}K^{\delta\beta}(ij) \\
  &+\delta
  p_{i\delta}[\mu_j^{(1)}]\indices{_\beta^\gamma}\partial_{\gamma\alpha}K^{\delta\beta}(ij)
  + p_{i\delta}[\delta\mu_j^{(1)}]\indices{_\beta^\gamma}\partial_{\gamma\alpha}K^{\delta\beta}(ij)
  \\
  &+p_{i\delta}[\mu_j^{(1)}]\indices{_\beta^\gamma}(\delta
  q_i^\epsilon - \delta
  q_j^\epsilon) \partial_{\epsilon\gamma\alpha}K^{\delta\beta}(ij) \\
\end{align*}

\begin{align*}
\delta T^{02}_{i\alpha} =& -\delta p_{i\epsilon}
[\mu_j^{(2)}]\indices{_\beta^{\gamma\delta}} \partial_{\gamma\delta\alpha}
K^{\epsilon\beta}(ij) -p_{i\epsilon}
[\delta\mu_j^{(2)}]\indices{_\beta^{\gamma\delta}} \partial_{\gamma\delta\alpha}
K^{\epsilon\beta}(ij) \\
&-p_{i\epsilon}
[\mu_j^{(2)}]\indices{_\beta^{\gamma\delta}} (\delta q_i^\phi -
\delta
q_j^\phi) \partial_{\gamma\delta\phi\alpha}K^{\epsilon\beta}(ij) \\
 &-\delta p_{j\epsilon}
[\mu_i^{(2)}]\indices{_\beta^{\gamma\delta}} \partial_{\gamma\delta\alpha}
K^{\epsilon\beta}(ij)
-p_{j\epsilon}
[\delta\mu_i^{(2)}]\indices{_\beta^{\gamma\delta}} \partial_{\gamma\delta\alpha}
K^{\epsilon\beta}(ij) \\
  &-p_{j\epsilon}
[\mu_i^{(2)}]\indices{_\beta^{\gamma\delta}} (\delta q_i^\phi -
\delta
q_j^\phi) \partial_{\gamma\delta\phi\alpha}K^{\epsilon\beta}(ij) \\
\end{align*}

\begin{align*}
\delta T^{12}_{i\alpha} =& -[\delta
\mu_i^{(1)}]\indices{_\phi^\epsilon}
[\mu_j^{(2)}]\indices{_\beta^{\gamma\delta}} \partial_{\epsilon\gamma\delta\alpha}K^{\phi\beta}(ij)\\
  &-[\mu_i^{(1)}]\indices{_\phi^\epsilon}
  [\delta\mu_j^{(2)}]\indices{_\beta^{\gamma\delta}} \partial_{\epsilon\gamma\delta\alpha}K^{\phi\beta}(ij)\\
  &-[\mu_i^{(1)}]\indices{_\phi^\epsilon}[\mu_j^{(2)}]\indices{_\beta^{\gamma\delta}}
  (\delta q_i^\zeta - \delta
  q_j^\zeta) \partial_{\zeta\epsilon\gamma\delta\alpha}K^{\phi\beta}(ij) \\
  &+[\delta
\mu_j^{(1)}]\indices{_\phi^\epsilon}
[\mu_i^{(2)}]\indices{_\beta^{\gamma\delta}} \partial_{\epsilon\gamma\delta\alpha}K^{\phi\beta}(ij)\\
  &+[\mu_j^{(1)}]\indices{_\phi^\epsilon}
  [\delta\mu_i^{(2)}]\indices{_\beta^{\gamma\delta}} \partial_{\epsilon\gamma\delta\alpha}K^{\phi\beta}(ij)\\
  &+[\mu_j^{(1)}]\indices{_\phi^\epsilon}[\mu_i^{(2)}]\indices{_\beta^{\gamma\delta}}
  (\delta q_i^\zeta - \delta
  q_j^\zeta) \partial_{\zeta\epsilon\gamma\delta\alpha}K^{\phi\beta}(ij) \\
\end{align*}

\begin{align*}
\delta T^{11}_{i\alpha} =& (
[\delta\mu_i^{(1)}]\indices{_\epsilon^\delta}
[\mu_j^{(1)}]\indices{_\beta^\gamma} +
[\mu_i^{(1)}]\indices{_\epsilon^\delta}[\delta\mu_j^{(1)}]\indices{_\beta^\gamma}
) \partial_{\delta\gamma\alpha}K^{\epsilon\beta}(ij) \\
  &+[\mu_i^{(1)}]\indices{_\epsilon^\delta}
  [\mu_j^{(1)}]\indices{_\beta^\gamma} (\delta q_i^\phi - \delta
  q_j^\phi) \partial_{\phi\delta\gamma\alpha}K^{\epsilon\beta}(ij) \\
\end{align*}

\begin{align*}
\delta T^{22}_{i\alpha}
=&- ([\delta\mu_i^{(2)}]\indices{_\zeta^{\epsilon\phi}}
[\mu_j^{(2)}]\indices{_\beta^{\gamma\delta}} +
[\mu_i^{(2)}]\indices{_\zeta^{\epsilon\phi}}
[\delta\mu_j^{(2)}]\indices{_\beta^{\gamma\delta}}) \partial_{\epsilon\delta\gamma\phi\alpha}K^{\zeta\beta}(ij)
\\
&-[\mu_i^{(2)}]\indices{_\zeta^{\epsilon\phi}}
[\mu_j^{(2)}]\indices{_\beta^{\gamma\delta}} (\delta q_i^\lambda -
\delta q_j^\lambda) \partial_{\lambda\epsilon\delta\gamma\phi\alpha}K^{\zeta\beta}(ij)
\end{align*}

Finally, we compute the variation equations for $\delta q^{(1)}$ and
$\delta q^{(2)}$ to be
\begin{align*}
  \delta [\dot{q}_i^{(1)}]\indices{^\alpha_\beta} =& [\delta
  \xi_i^{(1)} ]\indices{^\alpha_\gamma}
  [q_i^{(1)}]\indices{^\gamma_\beta} +
  [\xi_i^{(1)}]\indices{^\alpha_\gamma}
  [\delta q_i^{(1)}]\indices{^\gamma_\beta} \\
  \delta [\dot{q}^{(2)}]\indices{^\alpha_{\beta\gamma}} =&
  [\delta \xi^{(2)}]\indices{^\alpha_{\delta \epsilon}} \cdot 
  [q^{(1)}]\indices{^\delta_\beta} \cdot
  [q^{(1)}]\indices{^\epsilon_\gamma} + 
[\xi^{(2)}]\indices{^\alpha_{\delta \epsilon}} \cdot 
  [\delta q^{(1)}]\indices{^\delta_\beta} \cdot
  [q^{(1)}]\indices{^\epsilon_\gamma} \\
  &+ [\xi^{(2)}]\indices{^\alpha_{\delta \epsilon}} \cdot 
  [q^{(1)}]\indices{^\delta_\beta} \cdot
  [\delta q^{(1)}]\indices{^\epsilon_\gamma} + 
  [\delta \xi^{(1)}]\indices{^\alpha_\delta} \cdot [q^{(2)}] \indices{^\delta_{\beta
     \gamma} } +
  [\xi^{(1)}]\indices{^\alpha_\delta} \cdot [\delta q^{(2)}] \indices{^\delta_{\beta
     \gamma} }
\end{align*}

\section{Computation of the adjoint equations}
\label{sec:adjoint}

Given any ODE $\dot{x} = f(x)$ on $M$, we may consider the equations of motion
for variations $\frac{d}{dt} \delta x = T_xf \cdot \delta
x$.  In particular, $T_x f$ is a linear operator over the point $x$
which has a dual operator.  The adjoint equations are and ODE on $T^*M$ given by
\begin{align*}
  \frac{d \lambda }{dt} = - T_x^*f \cdot \lambda.
\end{align*}
This is useful for us in the following way.
Given an integral curve, $x(t)$, and a variation in the initial condition,
$\delta x_0$, we see that the quantity $\langle \lambda(t) , \delta
x(t) \rangle$ is constant when $\delta x(t)$ satisfies the first
variation equation with initial condition $\delta x_0$ and
$\lambda(t)$ satisfies the adjoint equation.
In our case we are able to compute the gradient of the energy with
respect to varying an initial condition in this way.  More explicitly,
we should be able to express $T_xf$ as a matrix $M(x) \indices{^B_A}$
so that the first variation equations are
\begin{align*}
  \frac{d}{dt} \delta x^A = M(x)\indices{^A_B} \delta x^B
\end{align*}
and the adjoint equations can be written as
\begin{align*}
  \dot{\lambda}_A = - \lambda_B M(x) \indices{^B_A}
\end{align*}
where $\lambda_A$ is the covector associated to the $A$-th coordinate
and $M(x)\indices{^B_A}$ is the coefficient for $\delta x^A$ in the
equation for $\frac{d}{dt} \delta^B$.  More specifically, the elements
of $M^B_A$ is the partial derivative of $\delta \dot{B}$ with respect
to $\delta A$.  So we compute all these (36) quantities below.

\begin{align*}
  \pder{ [ \delta\dot{q}_i^{(0)}]^{\alpha} }{ [\delta q^{(0)}_j]^{\beta}} =&
  \left(
  p_{k\gamma} \partial_{\beta}K^{\alpha\gamma}(jk) 
  -[\mu^{(1)}_k]\indices{_\delta^\gamma} \partial_{\gamma\beta}K^{\alpha\delta}(jk)
  +[\mu^{(2)}_k]\indices{_\epsilon^{\gamma\delta}} \partial_{\gamma\delta\beta}K^{\alpha\epsilon}(jk)
  \right)
  \delta_i^j
  \\&
  -p_{j\gamma} \partial_{\beta}K^{\alpha\gamma}(ij) 
  +[\mu^{(1)}_j]\indices{_\delta^\gamma} \partial_{\gamma\beta}K^{\alpha\delta}(ij)
  -[\mu^{(2)}_j]\indices{_\epsilon^{\gamma\delta}} \partial_{\gamma\delta\beta}K^{\alpha\epsilon}(ij),
  \\ 
  \pder{[\delta \dot{q}_i^{(0)}]^\alpha}{[\delta
    q^{(1)}_j]\indices{^\beta_\gamma}} =& 0
  \quad,\quad 
  \pder{[\delta \dot{q}_i^{(0)}]^\alpha}{[\delta q^{(2)}]\indices{^\beta_{\gamma\delta}}} = 0,
  \\ 
  \pder{[\delta \dot{q}_i^{(0)}]^\alpha]}{[\delta p_j^{(0)}]_\beta} =& K^{\alpha\beta}(ij)
  \quad,\quad 
  \pder{[\delta
    \dot{q}_i^{(0)}]^\alpha }{[\delta\mu^{(1)}_j]\indices{_\beta^\gamma}}
  = -\partial_\gamma K^{\alpha\beta}(ij)
  \quad,\quad 
  \pder{[\delta
    \dot{q}^{(0)}_i]^\alpha}{[\delta\mu_j^{(2)}]\indices{_\beta^{\gamma\delta}}}
  = \partial_{\gamma\delta}K^{\alpha\beta}(ij),
\end{align*}

\begin{align*} 
  \pder{[\delta \dot{q}^{(1)}_i] \indices{^\alpha_\beta}}{[\delta
    q^{(0)}_j]\indices{^\gamma}} =& \pder{ [\delta \xi_i^{(1)}]
    \indices{^\alpha_\delta}}{[\delta q^{(0)}_j]^\gamma} [q_i^{(1)}]\indices{^\delta_\beta}
  \quad,\quad 
  \pder{[\delta \dot{q}^{(1)}_i]\indices{^\alpha_\beta}}{[\delta
    q_j^{(1)}]\indices{^\gamma_\delta}} =
  [\xi_i^{(1)}]\indices{^\alpha_\gamma}
  \delta^\delta_\beta \delta_i^j,
  \quad,\quad 
  \pder{[\delta \dot{q}^{(1)}_i] \indices{^\alpha_\beta}}{[\delta
    q^{(2)}_j]\indices{^\gamma_{\delta\epsilon}}} = 0,
  \\ 
  \pder{[\delta \dot{q}^{(1)}_i] \indices{^\alpha_\beta}}{[\delta
    p_j^{(0)}]_\gamma} =& \pder{[\delta\xi_i^{(1)}]
    \indices{^\alpha_\delta} }{ [\delta p^{(0)}_j]_\gamma}
  [q_i^{(1)}]\indices{^\delta_\beta}
  \quad,\quad 
  \pder{[\delta \dot{q}^{(1)}_i] \indices{^\alpha_\beta}}{[\delta
    \mu_j^{(1)}] \indices{_\gamma^\delta}} = \pder{[\delta\xi_i^{(1)}]
    \indices{^\alpha_\epsilon} }{ [\delta \mu^{(1)}_j] \indices{_\gamma^\delta}}
  [q_i^{(1)}]\indices{^\epsilon_\beta},
  \\ 
  \pder{[\delta \dot{q}^{(1)}_i] \indices{^\alpha_\beta}}{[\delta
    \mu_j^{(2)}] \indices{_\gamma^{\delta\epsilon}} } =& \pder{[\delta\xi_i^{(1)}]
    \indices{^\alpha_\phi} }{ [\delta \mu^{(2)}_j] \indices{_\gamma^{\delta\epsilon}}}
  [q_i^{(1)}]\indices{^\phi_\beta},
\end{align*}

\begin{align*}
  \pder{[\delta\dot{q}_i^{(2)}]\indices{^\alpha_{\beta\gamma}}}{[\delta
    q_j^{(0)}]^\delta} =& \pder{
    [\delta\xi_i^{(2)}]\indices{^\alpha_{\phi\epsilon}}}{[\delta
    q_j^{(0)}]^\delta} [q_i^{(1)}]\indices{^\phi_\beta}
  [q_i^{(1)}]\indices{^\epsilon_\gamma} + \pder{
    [\delta\xi_i^{(1)}]\indices{^\alpha_\epsilon}}{[\delta
    q_j^{(0)}]^\delta} [q_i^{(2)}]\indices{^\epsilon_{\beta\gamma}},
  \\ 
  \pder{[\delta\dot{q}_i^{(2)}]\indices{^\alpha_{\beta\gamma}}}{[\delta
    q_j^{(1)}]\indices{^\delta_\epsilon}}
  =& ( [\xi_i^{(2)}]\indices{^\alpha_{\delta\phi}}
  [q_i^{(1)}]\indices{^\phi_\gamma} \delta^\epsilon_\beta +
  [\xi_i^{(2)}]\indices{^\alpha_{\phi\delta}}
  [q_i^{(1)}]\indices{^\phi_\beta} \delta^\epsilon_\gamma)\delta^j_i,
  \\ 
  \pder{[\delta\dot{q}_i^{(2)}]\indices{^\alpha_{\beta\gamma}}}{[\delta
    q_j^{(2)}]\indices{^\delta_{\epsilon\phi}}} =&
  [\xi_i^{(1)}]\indices{^\alpha_\delta}\delta^\epsilon_\beta
  \delta^\phi_\gamma \delta^j_i,
  \end{align*}
  \begin{align*}
  \pder{[\delta\dot{q}_i^{(2)}]\indices{^\alpha_{\beta\gamma}}}{[\delta
    p_j^{(0)}]_\delta} =& \pder{
    [\delta\xi_i^{(2)}]\indices{^\alpha_{\phi\epsilon}}}{[\delta p_j^{(1)}]\indices{_\delta}}
  [q_i^{(1)}]\indices{^\phi_\beta} [q_i^{(1)}]\indices{^\epsilon_\gamma}
  + \pder{[\delta \xi_i^{(1)}]\indices{^\alpha_\epsilon}}{[\delta
    p_j^{(0)}]_\delta} [q_i^{(2)}]\indices{^\epsilon_{\beta\gamma}},
  \\ 
  \pder{[\delta\dot{q}_i^{(2)}]\indices{^\alpha_{\beta\gamma}}}{[\delta
    \mu_j^{(1)}] \indices{_\delta^\epsilon}} =&
  \pder{[\delta\xi_i^{(2)}]\indices{^\alpha_{\phi\zeta}}}{[\delta\mu_j^{(1)}]\indices{_\delta^\epsilon}}
  [q_i^{(1)}]\indices{^\phi_\beta} [q_i^{(1)}]\indices{^\zeta_\gamma}
  + \pder{
    [\delta\xi_i^{(1)}]\indices{^\alpha_\phi}}{[\delta\mu_j^{(1)}]\indices{_\delta^\epsilon}}
  [q^{(2)}_i]\indices{^\phi_{\beta\gamma}},
  \\ 
  \pder{[\delta\dot{q}_i^{(2)}]\indices{^\alpha_{\beta\gamma}}}{[\delta
    \mu_j^{(2)}] \indices{_\delta^{\epsilon\phi}}} =& \pder{ [\delta
    \xi_i^{(2)}]\indices{^\alpha_{\zeta\lambda}}}{
    [\delta\mu_j^{(2)}]\indices{_\delta^{\epsilon\phi}}}
  [q_i^{(1)}]\indices{^\zeta_\beta}
  [q^{(1)}_i]\indices{^\lambda_\gamma} + \pder{
    [\delta\xi_i^{(1)}]\indices{^\alpha_\zeta}}{[\delta\mu_j^{(2)}]\indices{_\delta^{\epsilon\phi}}} [q_i^{(2)}]\indices{^\zeta_{\beta\gamma}},
\end{align*}

\begin{align*}
  \pder{[\delta \dot{p}^{(0)}_i]_\alpha}{[\delta q_j^{(0)}]^\beta} =&
  \pder{[ \delta T^{00}_{i}]_\alpha }{[\delta q_j^{(0)}]^\beta} + \pder{[
    \delta T^{01}_{i}]_\alpha }{[\delta q_j^{(0)}]^\beta} + \pder{[
    \delta T^{11}_{i}]_\alpha }{[\delta q_j^{(0)}]^\beta} + \pder{[
    \delta T^{12}_{i}]_\alpha }{[\delta q_j^{(0)}]^\beta}  +\pder{[
    \delta T^{02}_{i}]_\alpha }{[\delta q_j^{(0)}]^\beta} + \pder{[
    \delta T^{22}_{i}]_\alpha }{[\delta q_j^{(0)}]^\beta},
    \\ 
    \pder{[\delta \dot{p}^{(0)}_i]_\alpha}{[\delta q_j^{(1)}] \indices{^\beta_\gamma}} =&  0 
    \quad , \quad 
    \pder{[\delta \dot{p}^{(0)}_i]_\alpha}{[\delta q_j^{(2)}] \indices{^\beta_{\gamma\delta}}} = 0 ,
    \end{align*}
    \begin{align*}
    \pder{[\delta \dot{p}^{(0)}_i]_\alpha}{[\delta p_j^{(0)}]
      \indices{_\beta}} =& \pder{[\delta
      T_i^{00}]_\alpha}{[\delta p_j^{(0)}]\indices{_\beta}} +
      \pder{[\delta
      T_i^{01}]_\alpha}{[\delta p_j^{(0)}]\indices{_\beta}} +
      \pder{[\delta
      T_i^{02}]_\alpha}{[\delta p_j^{(0)}]\indices{_\beta}},
    \\ 
    \pder{[\delta \dot{p}^{(0)}_i]_\alpha}{[\delta \mu_j^{(1)}]
      \indices{_\beta^{\gamma}}} =& \pder{[\delta
      T_i^{01}]_\alpha}{[\delta \mu_j^{(1)}]\indices{_\beta^\gamma}} + \pder{[\delta
      T_i^{11}]_\alpha}{[\delta \mu_j^{(1)}]\indices{_\beta^\gamma}} +\pder{[\delta
      T_i^{12}]_\alpha}{[\delta \mu_j^{(1)}]\indices{_\beta^\gamma}},
    \\ 
    \pder{[\delta \dot{p}^{(0)}_i]_\alpha}{[\delta \mu_j^{(2)}]
      \indices{_\beta^{\gamma\delta}}} =& \pder{[\delta
      T_i^{02}]_\alpha}{[\delta \mu_j^{(2)}]\indices{_\beta^{\gamma\delta}}} + \pder{[\delta
      T_i^{12}]_\alpha}{[\delta \mu_j^{(2)}]\indices{_\beta^{\gamma\delta}}} +\pder{[\delta
      T_i^{22}]_\alpha}{[\delta \mu_j^{(2)}]\indices{_\beta^{\gamma\delta}}},
\end{align*}

\begin{align*}
  \pder{[\delta\dot{\mu}_i^{(1)}]\indices{_\alpha^\beta}}{[\delta
    q_j^{(0)}]^\gamma} =& [\mu_i^{(1)}]\indices{_\alpha^\delta}
  \pder{ [\delta\xi_i^{(1)}]\indices{^\beta_\delta}}{ [\delta
    q_j^{(0)}]^\gamma } - [\mu_i^{(1)}]\indices{_\delta^\beta} \pder{
    [\delta \xi_i^{(1)}] \indices{^\delta_{\alpha}}}{ [\delta
    q_j^{(0)}]^\gamma} \\
  &+ [\mu_i^{(2)}]\indices{_\alpha^{\delta\epsilon}}
  \pder{[\delta\xi_i^{(2)}]\indices{^\beta_{\delta\epsilon}}}{ [\delta
    q_j^{(0)}]^\gamma} -
  [\mu_i^{(2)}]\indices{_\delta^{\beta\epsilon}} \pder{ [\delta
    \xi_i^{(2)}]\indices{^\delta_{\alpha\epsilon}}}{ [\delta
    q_j^{(0)}]\indices{^\gamma}} -
  [\mu_i^{(2)}]\indices{_\delta^{\epsilon\beta}} \pder{
    [\delta\xi_i^{(2)}]\indices{^\delta_{\epsilon\alpha}}}{ [\delta
    q_j^{(0)}]^\gamma},
  \\ 
  \pder{[\delta\dot{\mu}_i^{(1)}]\indices{_\alpha^\beta}}{[\delta
    q_j^{(1)}]^\gamma} =& 0,
  \quad,\quad 
  \pder{[\delta\dot{\mu}_i^{(1)}]\indices{_\alpha^\beta}}{[\delta
    q_j^{(2)}]^\gamma} = 0,
  \end{align*}
  \begin{align*} 
  \pder{[\delta\dot{\mu}_i^{(1)}]\indices{_\alpha^\beta}}{[\delta
    p_j^{(0)}]_\gamma} =& [\mu_i^{(1)}]\indices{_\alpha^\delta}
  \pder{ [\delta\xi_i^{(1)}]\indices{^\beta_\delta}}{ [\delta
    p_j^{(0)}]_\gamma } - [\mu_i^{(1)}]\indices{_\delta^\beta} \pder{
    [\delta \xi_i^{(1)}] \indices{^\delta_{\alpha}}}{ [\delta
    p_j^{(0)}]_\gamma} \\
  &+ [\mu_i^{(2)}]\indices{_\alpha^{\delta\epsilon}}
  \pder{[\delta\xi_i^{(2)}]\indices{^\beta_{\delta\epsilon}}}{ [\delta
    p_j^{(0)}]_\gamma} -
  [\mu_i^{(2)}]\indices{_\delta^{\beta\epsilon}} \pder{ [\delta
    \xi_i^{(2)}]\indices{^\delta_{\alpha\epsilon}}}{ [\delta
    p_j^{(0)}]\indices{_\gamma}} -
  [\mu_i^{(2)}]\indices{_\delta^{\epsilon\beta}} \pder{
    [\delta\xi_i^{(2)}]\indices{^\delta_{\epsilon\alpha}}}{ [\delta
    p_j^{(0)}]_\gamma}, 
  \\ 
  \pder{[\delta \dot{\mu}_i^{(1)}]\indices{_\alpha^\beta}}{
    [\delta\mu_j^{(1)}]\indices{_\gamma^\delta}} =&
  \delta_i^j\delta_\alpha^\gamma
  [\xi_i^{(1)}]\indices{^\beta_\delta} +
  [\mu_i^{(1)}]\indices{_\alpha^\epsilon} \pder{ [\delta
    \xi_i^{(1)}]\indices{^\beta_\epsilon}}{
    [\delta\mu_j^{(1)}]\indices{_\gamma^\delta}} -
  \delta_i^j \delta_\delta^\beta
  [\xi_i^{(1)}]\indices{^\gamma_\alpha} -
  [\mu_i^{(1)}]\indices{_\epsilon^\beta}
  \pder{[\delta\xi_i^{(1)}]\indices{^\epsilon_\alpha}}{[\delta\mu_j^{(1)}]\indices{_\gamma^\delta}}
  \\
  &+ [\mu_i^{(2)}]\indices{_\alpha^{\epsilon\phi}}
  \pder{[\delta\xi_i^{(2)}]\indices{^\beta_{\epsilon\phi}}}{
    [\delta\mu_j^{(1)}]\indices{_\gamma^\delta}} -
  [\mu_i^{(2)}]\indices{_\phi^{\beta\epsilon}} \pder{
    [\delta\xi_i^{(2)}]\indices{^\phi_{\alpha\epsilon}}}{
    [\delta\mu_j^{(1)}]\indices{_\gamma^\delta}} -
  [\mu_i^{(2)}]\indices{_\phi^{\epsilon\beta}} \pder{
    [\delta\xi_i^{(2)}]\indices{^\phi_{\epsilon\alpha}}}{
    [\delta\mu_j^{(1)}]\indices{_\gamma^\delta}},
  \\ 
  \pder{[\delta\dot{\mu}_i^{(1)}]\indices{_\alpha^\beta}}{
    [\delta\mu_j^{(2)}]\indices{_\gamma^{\delta\epsilon}}} =&
  [\mu_i^{(1)}]\indices{_\alpha^\phi}
  \pder{[\delta\xi_i^{(1)}]\indices{^\beta_\phi}}{
    [\delta\mu_j^{(2)}]\indices{_\gamma^{\delta\epsilon}}}
  -[\mu_i^{(1)}]\indices{_\phi^\beta} \pder{ [\delta
    \xi_i^{(1)}]\indices{^\phi_\alpha}}{ [\delta
    \mu_j^{(2)}]\indices{_\gamma^{\delta\epsilon}}} + \delta_i^j
  \delta_\alpha^\gamma [\xi_i^{(2)}]\indices{^\beta_{\delta\epsilon}}, \\ 
  &+ [\mu_i^{(2)}]\indices{_\alpha^{\phi\lambda}} \pder{ [\delta
    \xi_i^{(2)}]\indices{^\beta_{\phi\lambda}}}{
    [\delta\mu_j^{(2)}]\indices{_\gamma^{\delta\epsilon}}} -
  \delta_i^j \delta_\delta^\beta [\xi_i^{(2)}]
  \indices{^\gamma_{\alpha\epsilon}} \\
  &-[\mu_i^{(2)}]\indices{_\phi^{\beta\lambda}}
  \pder{[\delta\xi_i^{(2)}]\indices{^\phi_{\alpha\lambda}}}{[\delta\mu_j^{(2)}]\indices{_\gamma^{\delta\epsilon}}}
  - \delta_i^j \delta_\epsilon^\beta
  [\xi_i^{(2)}]\indices{^\gamma_{\delta\lambda}} -
  [\mu_i^{(2)}]\indices{_\phi^{\lambda\beta}}
  \pder{[\delta\xi_i^{(2)}]\indices{^\phi_{\lambda\alpha}}}{[\delta\mu_j^{(2)}]\indices{_\gamma^{\delta\epsilon}}} ,
\end{align*}

\begin{align*}
  \pder{ [\delta \dot{\mu}_i^{(2)}]\indices{_\alpha^{\beta\gamma}}}{ [\delta
    q_j^{(0)}]^\delta} =&
  [\mu_i^{(2)}]\indices{_\alpha^{\epsilon\gamma}} \pder{ [\delta
    \xi_i^{(1)}]\indices{^\beta_\epsilon}}{ [\delta q_j^{(0)}]^\delta}
  + [\mu_i^{(2)}]\indices{_\alpha^{\beta\epsilon}} \pder{
    [\delta\xi^{(1)}_i]\indices{^\gamma_\epsilon}}{ [\delta
    q^{(0)}_j]^\delta} -
  [\mu_i^{(2)}]\indices{_\epsilon^{\beta\gamma}} \pder{ [\delta
    \xi_i^{(1)}]\indices{^\epsilon_\alpha}}{ [\delta
    q_j^{(0)}]^\delta}
  \\ 
  \pder{[\delta\dot{\mu}^{(2)}]\indices{_\alpha^{\beta\gamma}}}{
    [\delta q^{(1)}]\indices{^\delta_\epsilon}} =& 0
  \qquad 
  \pder{[\delta\dot{\mu}^{(2)}]\indices{_\alpha^{\beta\gamma}}}{
    [\delta q^{(2)}]\indices{^\delta_{\epsilon\phi}}} = 0
  \end{align*}
  
  \begin{align*}
  \pder{ [\delta \dot{\mu}_i^{(2)}]\indices{_\alpha^{\beta\gamma}}}{ [\delta
    p_j^{(0)}]_\delta} =&
  [\mu_i^{(2)}]\indices{_\alpha^{\epsilon\gamma}} \pder{ [\delta
    \xi_i^{(1)}]\indices{^\beta_\epsilon}}{ [\delta p_j^{(0)}]_\delta}
  + [\mu_i^{(2)}]\indices{_\alpha^{\beta\epsilon}} \pder{
    [\delta\xi^{(1)}_i]\indices{^\gamma_\epsilon}}{ [\delta
    p^{(0)}_j]_\delta} -
  [\mu_i^{(2)}]\indices{_\epsilon^{\beta\gamma}} \pder{ [\delta
    \xi_i^{(1)}]\indices{^\epsilon_\alpha}}{ [\delta
    p_j^{(0)}]_\delta}
  \\ 
  \pder{ [\delta \dot{\mu}_i^{(2)}]\indices{_\alpha^{\beta\gamma}}}{ [\delta
    \mu_j^{(1)}] \indices{_\delta^\epsilon}} =&
  [\mu_i^{(2)}]\indices{_\alpha^{\phi\gamma}} \pder{ [\delta
    \xi_i^{(1)}]\indices{^\beta_\phi}}{ [\delta
    \mu_j^{(1)}]\indices{_\delta^\epsilon}} +
  [\mu_i^{(2)}]\indices{_\alpha^{\beta\phi}} \pder{ [\delta
    \xi_i^{(1)}]\indices{^\gamma_\phi}}{
    [\delta\mu_j^{(1)}]\indices{_\delta^\epsilon}} -
  [\mu_i^{(2)}]\indices{_\phi^{\beta\gamma}} \pder{
    [\delta\xi_i^{(1)}] \indices{^\phi_\alpha}}{
    [\delta\mu_j^{(1)}]\indices{_\delta^\epsilon}}
  \\ 
  \pder{ [\delta \dot{\mu}_i^{(2)}]\indices{_\alpha^{\beta\gamma}}}{ [\delta
    \mu_j^{(2)}] \indices{_\delta^{\epsilon\phi}}} =&
  \delta^j_i \delta^\gamma_\phi \delta_\alpha^\delta
  [\xi_i^{(1)}]\indices{^\beta_\epsilon} +
  [\mu_i^{(2)}]\indices{_\alpha^{\lambda\gamma}} \pder{
    [\delta\xi_i^{(1)}]\indices{^\beta_\lambda}}{
    [\delta\mu_j^{(2)}]\indices{_\delta^{\epsilon\phi}}} + \delta_i^j
  \delta_\alpha^\delta \delta_\epsilon^\beta
  [\xi_i^{(1)}]\indices{^\gamma_\phi} \\
 &+
  [\mu_i^{(2)}]\indices{_\alpha^{\beta\lambda}} \pder{
    [\delta\xi_i^{(1)}]\indices{^\gamma_\lambda}}{
    [\delta\mu_j^{(2)}]\indices{_\delta^{\epsilon\phi}}} - \delta_i^j
  \delta_\epsilon^\beta \delta_\phi^\gamma
  [\xi_i^{(1)}]\indices{^\delta_\alpha} -
  [\mu_i^{(2)}]\indices{_\lambda^{\beta\gamma}} \pder{
    [\delta\xi_i^{(1)}]\indices{^\lambda_\alpha}}{ [\delta\mu_j^{(2)}]\indices{_\delta^{\epsilon\phi}}}
\end{align*}

\begin{align*}
  \pder{[\delta T_i^{00}]_\alpha}{[\delta q_j^{(0)}]^\beta} =&
  -p_{j\gamma} p_{k\delta} \partial_{\alpha\beta}K^{\gamma\delta}(jk)
  \delta_j^i + p_{i\gamma} p_{j\delta} \partial_{\alpha\beta}K^{\gamma\delta}(ij),
  \\ 
  \pder{[\delta T_i^{01}]_\alpha}{[\delta q_j^{(0)}]^\beta} =&
  \delta_i^j(p_{j\delta} [\mu_k^{(1)}] \indices{_\epsilon^\gamma} - p_{k\delta}
  [\mu_j^{(1)}]\indices{_\epsilon^\gamma}) \partial_{\alpha\beta\gamma}K^{\delta\epsilon}(jk)\\
  &- (p_{i\delta} [\mu_j^{(1)}]\indices{_\epsilon^\gamma} - p_{j\delta}
  [\mu_i^{(1)}]\indices{_\epsilon^\gamma} ) \partial_{\beta\gamma\alpha}K^{\delta\epsilon}(ij)
  \\ 
  \pder{[\delta T_i^{02}]_\alpha}{[\delta q_j^{(0)}]^\beta} =&
  (p_{i\epsilon} [\mu_j^{(2)}]\indices{_\phi^{\gamma\delta}} +
  p_{j\epsilon} [\mu_i^{(2)}] \indices{_\phi^{\gamma\delta}}
  ) \partial_{\gamma\delta\beta\alpha}K^{\epsilon\phi}(ij) \\
  &- \delta_i^j (p_{j\epsilon}
  [\mu_k^{(2)}]\indices{_\phi^{\gamma\delta}} + p_{k\epsilon}
  [\mu_j^{(2)}]\indices{_\phi^{\gamma\delta}}) \partial_{\gamma\delta\beta\alpha}K^{\epsilon\phi}(jk)
  \\ 
  \pder{[\delta T_i^{12}]_\alpha}{[\delta q_j^{(0)}]^\beta} =&
  \delta^i_j ([\mu_k^{(1)}]\indices{_\phi^\epsilon} [\mu_j^{(2)}]
  \indices{_\lambda^{\gamma\delta}} -
  [\mu_j^{(1)}]\indices{_\phi^\epsilon} [\mu_k^{(2)}]
  \indices{_\lambda^{\gamma\delta}}) \partial_{\beta\epsilon\gamma\delta\alpha}K^{\phi\lambda}(jk)\\
  &- ( [\mu_j^{(1)}]\indices{_\phi^\epsilon}
  [\mu_i^{(2)}]\indices{_\lambda^{\gamma\delta}} -
  [\mu_i^{(1)}]\indices{_\phi^\epsilon}
  [\mu_j^{(2)}]\indices{_\lambda^{\gamma\delta}}
  ) \partial_{\beta\epsilon\gamma\delta\alpha} K^{\phi\lambda}(ij)\\
  \pder{[\delta T_i^{11}]_\alpha}{[\delta q_j^{(0)}]^\beta} =&
  \delta_i^j [\mu_j^{(1)}]\indices{_\epsilon^\delta}
  [\mu_k^{(1)}]\indices{_\phi^\gamma} \partial_{\beta\gamma\delta\alpha}
  K^{\epsilon\phi}(jk) - [\mu_i^{(1)}]\indices{_\epsilon^\delta}
  [\mu_j^{(1)}]\indices{_\phi^\gamma} \partial_{\beta\delta\gamma\alpha}K^{\epsilon\phi}(ij)\\
  \pder{[\delta T_i^{22}]_\alpha}{[\delta q_j^{(0)}]^\beta} =&
  -\delta_i^j [\mu_j^{(2)}]\indices{_\lambda^{\epsilon\phi}}
  [\mu_k^{(2)}]\indices{_\zeta^{\gamma\delta}} \partial_{\beta\epsilon\delta\gamma\phi\alpha}K^{\lambda\zeta}(jk)
  + [\mu_i^{(2)}]\indices{_\lambda^{\epsilon\phi}}
  [\mu_j^{(2)}]\indices{_\zeta^{\gamma\delta}} \partial_{\beta\epsilon\delta\gamma\phi\alpha}K^{\lambda\zeta}(ij)
\end{align*}

\begin{align*}
  \pder{ \delta T^{00}_{i\alpha}}{[\delta p_j^{(0)}]_\beta} =&
  -\delta_i^j p_{k\gamma} \partial_\alpha K^{\beta\gamma}(jk) -
  p_{i\gamma} \partial_{\alpha} K^{\gamma\beta}(ij)
  \\ 
  \pder{ \delta T^{01}_{i\alpha}}{[\delta p_j^{(0)}]_\beta} =&
  - [\mu_i^{(1)}]\indices{_\delta^\gamma} \partial_{\gamma\alpha}
  K^{\beta\delta}(ij) + \delta_i^j
  [\mu_k^{(1)}]\indices{_\delta^\gamma} \partial_{\gamma\alpha}K^{\beta\delta}(jk)
  \\ 
  \pder{ \delta T^{02}_{i\alpha}}{[\delta p_j^{(0)}]_\beta} =&  -
  \delta_i^j 
  [\mu_k^{(2)}]\indices{_\epsilon^{\gamma\delta}} \partial_{\gamma\delta\alpha}
  K^{\beta\epsilon}(jk) -
  [\mu_i^{(2)}]\indices{_\epsilon^{\gamma\delta}} \partial_{\gamma\delta\alpha} K^{\beta\epsilon}(ij)
\end{align*}

\begin{align*}
  \pder{[\delta
    T^{01}_i]_\alpha}{[\delta\mu_j^{(1)}]\indices{_\beta^\gamma}} =& -
  \delta_i^j p_{k\delta} \partial_{\gamma\alpha} K^{\delta\beta}(jk) +
  p_{i\delta} \partial_{\gamma\alpha} K^{\delta\beta}(ij)
  \\ 
  \pder{[\delta
    T^{11}_i]_\alpha}{[\delta\mu_j^{(1)}]\indices{_\beta^\gamma}} =&
    (\delta_i^j
    [\mu_k^{(1)}]\indices{_\epsilon^\delta} \partial_{\gamma\delta\alpha}K^{\beta\epsilon}(jk)
    + [\mu_i^{(1)}]\indices{_\epsilon^\delta} \partial_{\delta\gamma\alpha}K^{\epsilon\beta}(ij)
  \\ 
  \pder{[\delta
    T^{12}_i]_\alpha}{[\delta\mu_j^{(1)}]\indices{_\beta^\gamma}} =& 
    -\delta_i^j
    [\mu_k^{(2)}]\indices{_\phi^{\epsilon\delta}} \partial_{\gamma\epsilon\delta\alpha}K^{\beta\phi}(jk)
    + [\mu_i^{(2)}]\indices{_\phi^{\epsilon\delta}} \partial_{\gamma\epsilon\delta\alpha}K^{\beta\phi}(ij)
\end{align*}

\begin{align*}
  \pder{[\delta T_i^{02}]_\alpha}{[\delta
    \mu_j^{(2)}]\indices{_\beta^{\gamma\delta}}} =&
    -(\delta_i^j
    [\mu_k^{(2)}]\indices{_\epsilon^{\gamma\delta}} \partial_{\gamma\delta\alpha}K^{\beta\epsilon}(jk)
    +
    [\mu_i^{(2)}]\indices{_\epsilon^{\gamma\delta}} \partial_{\gamma\delta\alpha}K^{\beta\epsilon}(ij) )
\\ 
  \pder{[\delta T_i^{12}]_\alpha}{[\delta
    \mu_j^{(2)}]\indices{_\beta^{\gamma\delta}}} =&
  \delta_i^j [\mu_k^{(1)}]\indices{_\phi^\epsilon}
  \partial_{\epsilon\gamma\delta\alpha} K^{\phi\beta}(jk)
  -
  [\mu_i^{(1)}]\indices{_\phi^\epsilon} \partial_{\epsilon\gamma\delta\alpha} K^{\phi\beta}(ij)
\\ 
  \pder{[\delta T_i^{22}]_\alpha}{[\delta
    \mu_j^{(2)}]\indices{_\beta^{\gamma\delta}}} =& - \delta_i^j
  [\mu_k^{(2)}]\indices{_\zeta^{\epsilon\phi}}
  \partial_{\gamma\phi\epsilon\delta\alpha} K^{\beta\zeta}(jk)
  - [\mu_i^{(2)}]\indices{_\zeta^{\epsilon\phi}}
  \partial_{\epsilon\delta\gamma\phi\alpha} K^{\zeta\beta}(ij)
\end{align*}

\begin{align*}
  \pder{ [\delta \xi_i^{(1)}]\indices{^\alpha_\beta}}{[\delta
    q_j^{(0)}]^\gamma} =&
  \left(p_{k\delta} \partial_{\beta\gamma}K^{\alpha\delta}(jk) -
  [\mu_k^{(1)}]
  \indices{_\epsilon^\delta} \partial_{\beta\gamma\delta}
  K^{\alpha\epsilon}(jk) +
  [\mu_k^{(2)}]\indices{_\phi^{\epsilon\delta}} \partial_{\beta\gamma\delta\epsilon}K^{\alpha\phi}(jk)
  \right) \delta^i_j
  \\&
  -p_{j\delta} \partial_{\beta\gamma}K^{\alpha\delta}(ij) +
  [\mu_j^{(1)}]
  \indices{_\epsilon^\delta} \partial_{\beta\gamma\delta}
  K^{\alpha\epsilon}(ij) -
  [\mu_j^{(2)}]\indices{_\phi^{\epsilon\delta}} \partial_{\beta\gamma\delta\epsilon}K^{\alpha\phi}(ij)
  \\ 
  \pder{ [\delta \xi_i^{(1)}]\indices{^\alpha_\beta}}{[\delta
    q_j^{(1)}]\indices{^\gamma_\delta}} =& 0 \quad,\quad 
  \pder{ [\delta \xi_i^{(1)}]\indices{^\alpha_\beta}}{[\delta
    q_j^{(2)}]\indices{^\gamma_{\delta\epsilon}}} = 0 \quad,\quad 
  \pder{ [\delta \xi_i^{(1)}]\indices{^\alpha_\beta}}{[\delta
    p_j^{(0)}]\indices{_\gamma}}
  = \partial_{\beta}K^{\gamma\alpha}(ij) \\ 
    \pder{ [\delta \xi_i^{(1)}]\indices{^\alpha_\beta}}{[\delta
    \mu_j^{(1)}]\indices{_\gamma^\delta}} =& - \partial_{\delta\beta}K^{\gamma\alpha}(ij) \quad,\quad 
    \pder{ [\delta \xi_i^{(1)}]\indices{^\alpha_\beta}}{[\delta \mu_j^{(2)}]\indices{_\gamma^{\delta\epsilon}}} = \partial_{\epsilon\delta\beta}K^{\gamma\alpha}(ij)
\end{align*}

\begin{align*}
   \pder{[\xi_i^{(2)}]\indices{^\alpha_{\beta\gamma}}}{
     [q_j^{(0)}]^\delta } =& 
     \left[
     [p_k^{(0)}]_\epsilon \partial_{\beta\gamma\delta}K^{\alpha\epsilon}(jk)
     -
     [\mu_k^{(1)}]\indices{_\phi^\epsilon}\partial_{\beta\gamma\delta\epsilon}K^{\alpha\phi}(jk)
     +
     [\mu_k^{(2)}]\indices{_\lambda^{\epsilon\phi}}\partial_{\beta\gamma\delta\epsilon\phi}K^{\alpha\lambda}(jk)
   \right]\delta^i_j
   \\&
     - [p_j^{(0)}]_\epsilon \partial_{\beta\gamma\delta}K^{\alpha\epsilon}(ij)
     +
     [\mu_j^{(1)}]\indices{_\phi^\epsilon}\partial_{\beta\gamma\delta\epsilon}K^{\alpha\phi}(ij)
     -
     [\mu_j^{(2)}]\indices{_\lambda^{\epsilon\phi}}\partial_{\beta\gamma\delta\epsilon\phi}K^{\alpha\lambda}(ij),
   \\ 
   \pder{[\xi_i^{(2)}]\indices{^\alpha_{\beta\gamma}}}{
     [q_j^{(1)}]\indices{^\delta_{\epsilon}} } =& 0
   \quad,\quad 
   \pder{[\xi_i^{(2)}]\indices{^\alpha_{\beta\gamma}}}{
     [q_j^{(2)}]\indices{^\delta_{\epsilon \phi}} } = 0 
   \quad,\quad 
   \pder{[\xi_i^{(2)}]\indices{^\alpha_{\beta\gamma}}}{
     [p_j^{(0)}]\indices{_\delta} } = \partial_{\beta\gamma}K^{\alpha\delta}(ij)
   \\ 
   \pder{[\xi_i^{(2)}]\indices{^\alpha_{\beta\gamma}}}{
     [\mu_j^{(1)}]\indices{_\delta^{\epsilon}} } =& -\partial_{\beta\gamma\epsilon}K^{\alpha\delta}(ij)
   \quad,\quad 
   \pder{[\xi_i^{(2)}]\indices{^\alpha_{\beta\gamma}}}{
     [\mu_j^{(2)}]\indices{_\delta^{\epsilon\phi}} } = \partial_{\beta\gamma\epsilon\phi}K^{\alpha\delta}(ij)
\end{align*}

The adjoint equation are then given by
\begin{align*}
  \frac{d}{dt} [\lambda_{ q_i^{(0)}}]_\alpha =& - [\lambda_{ q_j^{(0)}}]_\beta \pder{ 
     [ \delta \dot{q}_j^{(0)}]^\beta }{  [ \delta q_i^{(0)}]^\alpha }
   - [\lambda_{ q_j^{(1)} } ] \indices{_\beta^\gamma}  \pder{ 
     [ \delta \dot{q}_j^{(1)}] \indices{^\beta_\gamma} }{  [ \delta q_i^{(0)}]^\alpha }
   - [ \lambda_{ q_j^{(2)} } ] \indices{_\beta^{\gamma\delta} }  \pder{ 
     [ \delta \dot{q}_j^{(2)}] \indices{^\beta_{\gamma\delta}} }{  [
     \delta q_i^{(0)}]^\alpha } \\
    &- [ \lambda_{ p_j^{(0)} } ]^\beta \pder{ 
     [ \delta \dot{p}_j^{(0)}]_\beta }{  [ \delta q_i^{(0)}]^\alpha }
   - [\lambda_{ \mu_j^{(1)} } ] \indices{^\beta_\gamma}  \pder{ 
     [ \delta \dot{\mu}_j^{(1)}] \indices{_\beta^\gamma} }{  [ \delta q_i^{(0)}]^\alpha }
   - [\lambda_{\mu_j^{(2)} } ] \indices{^\beta_{\gamma\delta} }  \pder{ 
     [ \delta \dot{\mu}_j^{(2)}] \indices{_\beta^{\gamma\delta}} }{  [ \delta q_i^{(0)}]^\alpha }
\end{align*}

\bibliographystyle{amsalpha}
\newcommand{\etalchar}[1]{$^{#1}$}
\providecommand{\bysame}{\leavevmode\hbox to3em{\hrulefill}\thinspace}
\providecommand{\MR}{\relax\ifhmode\unskip\space\fi MR }
\providecommand{\MRhref}[2]{%
  \href{http://www.ams.org/mathscinet-getitem?mr=#1}{#2}
}
\providecommand{\href}[2]{#2}

\end{document}